\documentclass[11pt]{amsart}
\usepackage{amsmath, amssymb,mathabx,mathrsfs}
\usepackage{color}
\usepackage{tcolorbox}
\usepackage{tikz}
\usepackage{hyperref}

\usetikzlibrary{matrix, positioning, patterns}
\usepackage[toc,page]{appendix}
\usepackage{stackengine}
\usepackage{longtable}
\usepackage{comment}
\usepackage{float}
\usepackage{epsfig}
\usepackage{graphicx}
\usepackage[T1]{fontenc}
\usepackage[utf8]{inputenc}
\usepackage[english]{babel}
\usepackage{amsmath}
\usepackage{subcaption}
\usepackage{mathtools, nccmath}
\usepackage{amsthm}
\usepackage{multicol}
\usepackage{tikz,pgfplots}
\usepackage[normalem]{ulem}



\theoremstyle{plain}
\newtheorem{thm}{Theorem}[section]

\newtheorem{lem}[thm]{Lemma}
\newtheorem{prop}[thm]{Proposition}
\newtheorem{defn}[thm]{Definition}
\theoremstyle{remark}

\newtheorem{ex}[thm]{Example}

\theoremstyle{definition}
\theoremstyle{remark}
\newtheorem{remark}{Remark}[section]

\pgfplotsset{compat=1.18}

\addtolength{\hoffset}{-0.8cm}
\addtolength{\textwidth}{1.8cm}

\newcommand{\eps}{\epsilon}

\renewcommand{\epsilon}{\varepsilon}

\def\N{{\mathbb N}}

\def\R{{\mathbb R}}

\begin{document}

\title[Stability of accuracy for the training of DNNs]{Stability of accuracy for the training of DNNs Via the Uniform Doubling Condition}

\author{Yitzchak Shmalo }
\thanks{Corresponding author: Yitzchak Shmalo, Penn State. Email -- yitzchak.shmalo@gmail.com.   The code in this work is available and will be provided upon request}
 
 \begin{abstract}
 We study the stability of accuracy during the training of deep neural networks (DNNs). In this context, the training of a DNN is performed via the minimization of a cross-entropy loss function, and the performance metric is accuracy (the proportion of objects that are classified correctly). While training results in a decrease of loss, the accuracy does not necessarily increase during the process and may sometimes even decrease. The goal of achieving stability of accuracy is to ensure that if accuracy is high at some initial time, it remains high throughout training.

A recent result by Berlyand, Jabin, and Safsten introduces a doubling condition on the training data, which ensures the stability of accuracy during training for DNNs using the absolute value activation function. For training data in $\mathbb{R}^n$, this doubling condition is formulated using slabs in $\mathbb{R}^n$ and depends on the choice of the slabs. The goal of this paper is twofold. First, to make the doubling condition uniform, that is, independent of the choice of slabs. This leads to sufficient conditions for stability in terms of training data only. In other words, for a training set $T$ that satisfies the uniform doubling condition, there exists a family of DNNs such that a DNN from this family with high accuracy on the training set at some training time $t_0$ will have high accuracy for all time $t>t_0$. Moreover, establishing uniformity is necessary for the numerical implementation of the doubling condition. We demonstrate how to numerically implement a simplified version of this uniform doubling condition on a dataset and apply it to achieve stability of accuracy using a few model examples.

The second goal is to extend the original stability results from the absolute value activation function to a broader class of piecewise linear activation functions with finitely many critical points, such as the popular Leaky ReLU.
 
 \end{abstract}
 
       \maketitle

       \tableofcontents
\newpage  
      
\section{Introduction}

Deep neural networks (DNNs) are used in the classification problem to determine what class a set of objects $S \subset \mathbb{R}^n$ belongs to. As will be described in greater detail shortly, we start with a set of objects in a training set $T \subset \R^n$, for which the classes are known, and train the DNN using a loss function (see \eqref{loss_function}) with the hope of increasing accuracy as loss decreases.   Here, accuracy refers to the proportion of elements in the training set that the DNN classifies correctly (see \eqref{2_class_prob}). DNNs have been shown to be very capable of solving many real-world classification problems, such as handwriting recognition \cite{LBD}, image classification \cite{krizhevsky2017imagenet}, speech recognition \cite{hinton2012deep} and natural language processing \cite{sutskever2014sequence} to name a few.

The goal of the paper is to provide a condition on the training set of a DNN which establishes the stability of accuracy of the DNN during training. Specifically, DNNs training on a training set satisfying the condition will have that their accuracy is bounded from below as loss decreases. Ideally, this means that the accuracy of the DNN should go up or remain constant as training progresses. This is not always the case during training, sometimes loss can decrease drastically while accuracy also decreases. Before we provide such an example, a quick introduction to DNNs is needed.

We start with a set of objects $S \subset \R^n$ which we wish to classify into $K$ classes. A set $T \subset S$ exists for which we have an exact classifier $\phi^*(s)$ such that 

\begin{equation}
    \phi^*: s \in T \mapsto \phi^*(s)=(p_1(s), \dots, p_K(s)),
\end{equation}
with $p_{i(s)}=1$ and  $p_{j}=0$ when $j \neq i(s)$, and $i(s)$ denoting the correct class of $s$.  This exact classifier is only known for $T$, and so we use $T$ to train a classifier $\phi$ thereby approximating the exact classifier $\phi^*$ via $\phi$. Thus, we construct a $\phi$ such that 
\begin{equation}
\phi: s \in T \mapsto \phi(s)=(p_1(s), \dots, p_K(s)), \quad p_1+ \dots + p_K=1.
\end{equation}

We can think of $p_i(s)$ as the probability of $s$ belonging to the class $i$, and while we know the correct classes of the objects $s \in T$, the goal is to extend $\phi^*$ from $T$ to all of $S$ via $\phi$. Thus, we wish to obtain a $\phi$ which maps $s$ to the same class as $\phi^*$ when $s \in T$, with the hope that $\phi$ will also map elements $s \in S$ to their correct class. This is done by forming a family of parameterized functions $\phi(s,\alpha)$ and finding the correct parameters $\alpha$ so that $p_{i(s)}=1$ and  $p_{j}=0$ when $j \neq i(s)$ for $s \in T$.

In this paper, only $\phi$ which are DNNs (with different activation functions) are studied. Thus, we consider DNNs which are a composition $\rho \circ X(\cdot,\alpha)$, with $\rho$ the softmax (given in \eqref{soft_max}) and $X(\cdot,\alpha)$ defined as follows. Take 
 \begin{equation}
X(\cdot,\alpha)=\lambda\circ M_L(\cdot,\alpha_L)\circ\cdots\circ\lambda\circ M_1(\cdot,\alpha_1)\end{equation} with:
 \begin{itemize}
    	\item $M_k(\cdot,\alpha_k)$  an affine function $\mathbb R^{N_{k-1}}\to\mathbb R^{N_k}$ depending on an $\mathbb{N}_{k} \times \mathbb{ N}_{k-1}$ parameter matrix $W_k$ and bias vectors $\beta_k$  (i.e. $M_k(x)=W_k(x)+\beta_k$).
		\item $\lambda: \R^m \mapsto \R^m$ is a nonlinear \emph{activation function}. For now, we  assume $\lambda$ is the absolute value activation function.  More precisely, for $\lambda$ the absolute value activation function we have that $\lambda:\R^m \to \R^m$ is defined as $\lambda(x_1, \dots, x_m)=(|x_1|, \dots, |x_m|)$. Later we will also consider other activation functions with finite critical points.    
		\item $\rho$: is the  \emph{softmax function} and is used to normalize the outputs of $X(\cdot,\alpha)$ to probabilities. The components of $\rho$ are defined as: \begin{equation}
		\label{soft_max}	
\rho_i(s,\alpha)=\frac{\exp(X_i(s,\alpha))}{\sum_{j=1}^{K}\exp(X_j(s,\alpha))}.
\end{equation}	
 \end{itemize}

Thus the output of the DNN $\phi$ is a vector and provides the probability that an object $s \in T$ belongs to some class $i$ while $X(s, \alpha)$ is the final output  before softmax. We train a classifier $\phi$ (in our case a DNN) to try and mimic the exact classifier. Thus, taking $\phi$ to be a DNN, we have that $\phi=\phi(s,\alpha)$ with $\alpha \in \R^\nu$ the parameters in the affine maps of $\phi$. Here $\nu \gg 1$ is a very large number and is the dimension of the parameter space of our DNN. For $s \in T$, we know what $i(s)$ is and so we use a loss function to train $\phi$. Specifically, in this paper, we will consider the cross-entropy loss function 
  	\begin{equation}
	loss(t)=\bar L(\alpha(t))=-\sum_{s\in T}\mu(s) \log\left(p_{i(s)}(s,\alpha)\right).
	\label{loss_function}
	\end{equation}
	
	 Here $\mu$ is a measure given on the data set $T$ which determines how important objects $s \in T$ are during training. The hope is that by minimizing the loss function (for example through gradient descent or stochastic gradient descent), we can obtain a $\phi$ which classifies the elements of $T$ accurately. The loss function is a smooth function while accuracy is discontinuous, and so training via gradient descent is done on the loss function while we wish to improve (increase) accuracy. However, as we will see, the accuracy of $\phi$ (the proportion of $s \in T$ that is mapped by $\phi$ to their correct class) and the value of the loss function is not completely correlated. Thus, there are situations where even as we decrease the loss via training, the accuracy of $\phi$ also decreases at the same time. The main goal of this paper is to provide conditions on the training set $T$ which ensure that this does not happen so that accuracy is bounded from below as the loss decreases. The results in this paper build on similar stability of accuracy results for classifiers given in \cite{LJS}.  

  The relationship between the loss function and the accuracy of a model is foundational in the domain of neural networks. The loss function is one way to quantify the deviation of a model's predictions from the true labels in the dataset, with the fundamental goal of training being to adjust model parameters in order to minimize this loss \cite{goodfellow2016deep}. Studies have shown relationships between minimizing loss and increasing accuracy on the training and test sets for cases with separable data  \cite{soudry2018implicit}. Other studies provide estimates for the solution accuracy of support vector machines (SVMs) based on the number of steps taken during training \cite{shalev2007pegasos}. Especially for separable data, this solution accuracy will be closely related to the accuracy of the SVM, thus providing a relationship between SMV training and the accuracy of the classifier.  Furthermore, issues like overfitting, where a model excels on the training data but underperforms on unseen data, further complicate this relationship \cite{zhang2021understanding}. Regularization techniques have been developed to mitigate such overfitting \cite{goodfellow2016deep}. Despite these advancements, a comprehensive mathematical study explicitly detailing the correlation between loss minimization and accuracy enhancement is notably absent in the literature, to the best of the author's knowledge. The current work endeavors to bridge this gap, seeking to establish a robust mathematical foundation elucidating the interplay between minimizing loss and maximizing accuracy on the training set. This remains a significant, albeit mostly underexplored, mathematical challenge warranting thorough investigation. For more on this mathematical question, see \cite{LJS}.

  Understanding the intricate relationship between training loss and training accuracy is not just a matter of academic intrigue; it is a pivotal theoretical question with significant practical implications. Empirically, minimizing loss and improving accuracy have shown to be strongly correlated, especially in scenarios involving extensive training datasets. This observed relationship demands a rigorous mathematical elucidation to solidify our understanding of why this correlation persists and under what conditions it is most pronounced. By delving into this theoretical framework, our work sheds light on specific structures inherent within the training data, particularly focusing on their "doubling property." This property, we posit, can be leveraged to optimize DNNs to achieve superior accuracy on both training and testing datasets,  see Example \ref{new_example2}, Example \ref{new_example3} and Subsection \ref{example1_new}. This "doubling property" also correlates with the loss of DNNs on the training set, a phenomenon that our theory helps explain, see Examples \ref{DC_vs_loss_example_1}, \ref{DC_vs_loss_example_2}, \ref{DC_vs_loss_example_3}, Theorem \ref{weakenedDC1} and Section \ref{are_truncations_important}.  Such an understanding does not merely serve theoretical purposes; it offers tangible guidance on how to better structure and utilize training data for improved model performance. Although the overarching theme of the interplay between loss functions and model accuracy has been touched upon in foundational machine learning literature, our focus on the "doubling property" and its influence on this relationship is a important contribution that bridges theoretical inquiry with practical application.

	 We now introduce a tool called classification confidence, which expresses how well an object $s$ is classified. The classification confidence is a function that maps elements in $T$ to $\mathbb{R}$, and provides information regarding how well $s \in T$ is classified by our DNN. Recall that we defined $X(s,\alpha)$ to be the final output of our DNN before the softmax function $\rho$. The softmax function preserves order in the sense that if $X_i(s,\alpha)>X_j(s,\alpha)$ then $\rho_i \circ X_i(s,\alpha)> \rho_j \circ X_j(s,\alpha)$.   Thus, we define the classification confidence via $X(s,\alpha)$ alone. 
  	
  	\begin{defn}The classification confidence is defined as follows:
  	
\begin{equation}\label{eq:defdeltaX}
\delta X(s,\alpha):=X_{i(s)}(s,\alpha)-\max_{j\neq i(s)} X_j(s,\alpha).
\end{equation}
  	\end{defn}
  	
  	In other words, 
  	\begin{itemize}
		\item $\delta X(s,\alpha(t))>0\Rightarrow s$ is well-classified by $\phi$.
		\item $\delta X(s,\alpha(t))< 0\Rightarrow s$ is misclassified by $\phi$.
			\end{itemize}
			
			Intuitively, $\delta X$ is a measure of how much the DNN prefers the correct class compared to all other classes. If $\delta X$ is positive, it means that the DNN is more confident in its classification of the object $s$ with respect to the correct class. Conversely, if $\delta X$ is negative, it means that the DNN is more confident in classifying $s$ with respect to another class, and thus, it is misclassifying $s$.

We can now define accuracy using the classification confidence,
		\begin{equation}\label{2_class_prob}
		 \text{acc}(t)=\mu\left(\{s\in T:\delta X(s,\alpha(t))>0\}\right).
		\end{equation}

  \begin{figure}[h!]
				\includegraphics[width=4.5in]{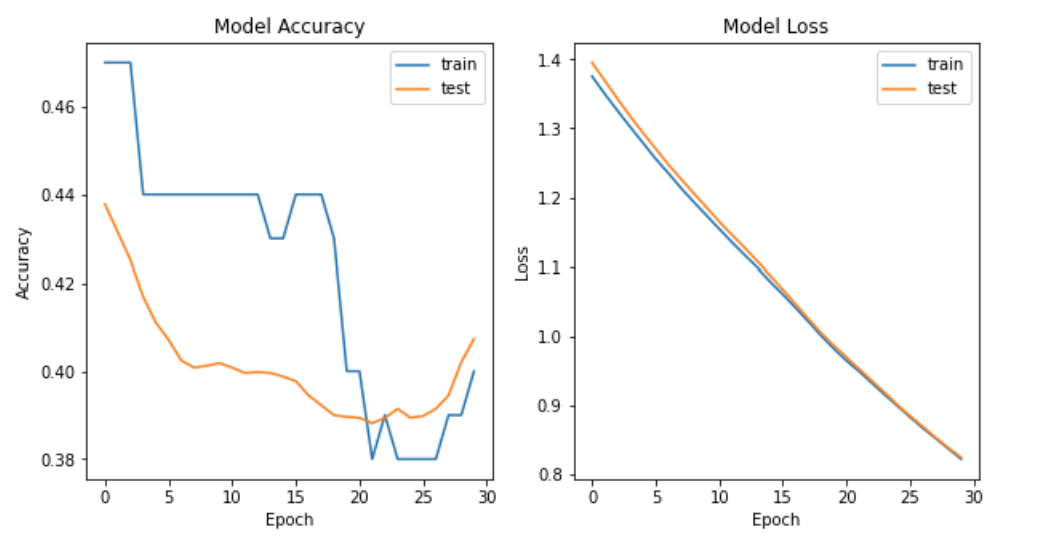}
				
				\caption{Despite the progress in training and reduction in loss, the accuracy has experienced a significant decline, falling from 48\% at epoch 0 to 38\% at epoch 25.}	\label{toy_example_acc_loss}
		\end{figure}

An example of when accuracy decreases even though loss decreases is given in Fig. \ref{toy_example_acc_loss}. Here a DNN had been trained to find a decision boundary between two data sets in $\mathbb{R}^2$, see Example \ref{example_1} for more details. Although training has progressed and loss has decreased, the training accuracy has dropped drastically (from 48\% at epoch $0$ to 38\% at epoch $25$). We would like to avoid the type of situations like in Fig. \ref{toy_example_acc_loss}, at least for the cases when the DNN initially has high accuracy. In Section \ref{motivation_example} we will provide more numerical examples of this behavior as well as a more detailed explanation for why it happens and how such behavior might be avoided.

    	    One can show that instability happens because of small clusters in the training set $T$ (see Def. \ref{def:no_small_islated_data_cluster_2} and Theorem \ref{weakenedDC1}). In Fig. \ref{toy_example1_small_cluster}, these small clusters in $T$ correspond with the small clusters in the histogram of $\delta X(T,\alpha_1)$ contained near $0$, with $\alpha_k$ the parameters of the DNN at epoch $k$. As training progresses objects $s$ corresponding to these small clusters might become misclassified (see Fig. \ref{toy_example1_after_training}) while the objects $s'$ which were badly misclassified, i.e. $\delta X(s',\alpha_1)<-2$, obtain larger $\delta X$ but still remain misclassified (see Fig. \ref{delta_X_toy_example1}). Because of this, loss decreases given that many objects have larger $\delta X$. However accuracy  decreases, given that more elements have negative $\delta X$. Thus, preventing small clusters in the data leads to stability during training.

         \begin{figure}[h!]
         \begin{subfigure}[b]{0.3\textwidth}
         \centering
\includegraphics[width=1.5\textwidth]{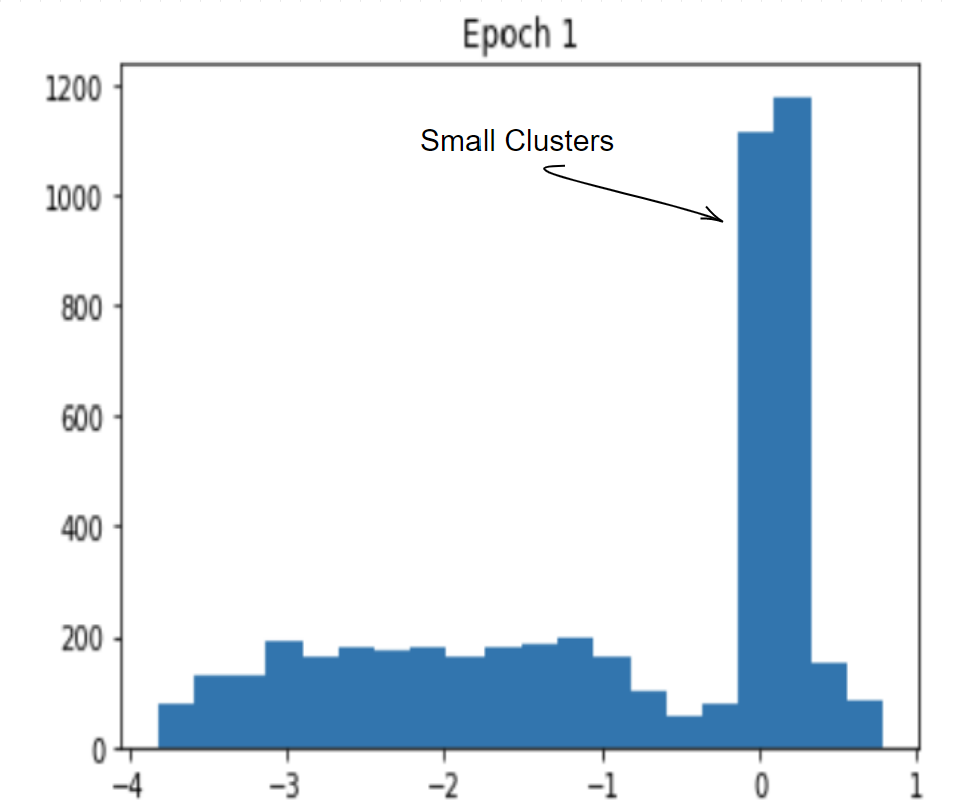}
         \caption{$\delta X(T,\alpha_1)$ at epoch 1. A cluster of the histogram is centered around 0. The x-axis represents $\delta X$ while the y-axis represents $s$. }    \label{toy_example1_small_cluster}
     \end{subfigure}
     \hspace{0.2\textwidth}
     \begin{subfigure}[b]{0.3\textwidth}
         \centering    \includegraphics[width=1.65\textwidth]{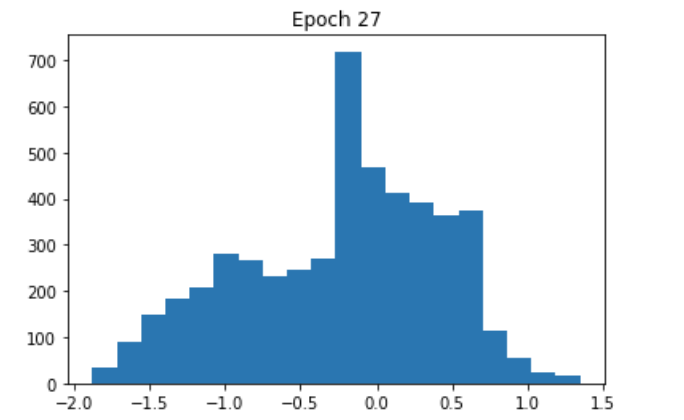}
         \caption{$\delta X(T,\alpha_{27})$ at epoch 27. Many data points $s$ which were correctly classified but for which $\delta X(s,\alpha_1) \approx 0$ are now misclassified. }
         \label{toy_example1_after_training}
     \end{subfigure}

     \caption{Histogram of $\delta X$ at epoch 1 and 27}
     \label{delta_X_toy_example1}
				
		\end{figure}

As mentioned, we would like to show that high accuracy at some initial time implies high accuracy for future time. That is, for $t_0$ some time during training we would like that 

\begin{equation}
    \text{high acc}(t_0) \rightarrow \text{high acc}(t), \forall t>t_0.
\end{equation}

 In this paper, this will be achieved with the following steps:

\begin{equation}
    \text{high acc}(t_0) \rightarrow \text{small loss}(t_0) \rightarrow \text{small loss}(t) \rightarrow \text{high acc}(t). 
\end{equation}

The step $\text{high acc}(t_0) \rightarrow \text{small loss}(t_0)$ is the most difficult part and is achieved by introducing the \textit{uniform doubling condition} on the training set. The step $\text{small loss}(t_0) \rightarrow \text{small loss}(t) $ is fundamental to training because training is essentially    the minimization of loss, and the step $\text{small loss}(t) \rightarrow \text{high acc}(t)$ is achieved using inequality \eqref{acc_loss} from \cite{LJS}. Throughout the paper, we assume that loss always decreases as training progresses. This however is not always the case. Nevertheless, we can reformulate the notion of stability of accuracy in the following way for the situations in which loss does not always decrease. \textbf{Stability of accuracy:} if accuracy is high at $t_0$ and for some $t>t_0$ we have that $\text{loss}(t)\leq \text{loss}(t_0)$ then accuracy at $t$ will be high.  

   To obtain stability of accuracy, it is essential to demonstrate two key points. First, a low loss should result in high accuracy, which can be derived from the following inequality:

\begin{equation}
\label{acc_loss}
1-\text{acc}(t)\leq \frac{\text{loss(t)}}{\log(2)},
\end{equation}
as presented in Lemma \ref{Upper_Bound_on_Loss}. However, this bound is only useful if the loss is sufficiently small. So we wish to obtain conditions on the training set $T$ which ensure that loss does not only decrease but also becomes sufficiently small during training. In general, it might not be possible to find such a condition on the training set. However, stability of accuracy, which is the main result of \cite{LJS} and this paper (under different conditions), means that high accuracy at the initial time guarantees high accuracy for future time. So we already assume that accuracy is high at $t_0$. Assuming that the uniform doubling condition holds on the training set $T$ and that the accuracy of the DNN is high at $t_0$, we will show that loss at $t_0$ is small, see \eqref{eq:cond_B_loss_estimateWDC}. Then because loss decreases during training, we can use \eqref{acc_loss} to obtain that accuracy stays high for the rest of the training (i.e. for $t>t_0$).       

So, it is important to understand why high accuracy does not always imply a low loss. Sometimes we can have high accuracy and high loss simultaneously. This occurs when some objects are significantly misclassified and there is a small group of elements near zero that are correctly classified, yet still have a high loss. Therefore, to achieve stability of accuracy, it is crucial to ensure that elements are not grossly misclassified and that no small clusters of $\delta X$ exist near zero. Refer to Def. \ref{eq:uniform_doubling_condtion_deltaX} and Prop \ref{lemmadoublingconditionondeltaX} for further information.

In previous work \cite{LJS}, a doubling condition was introduced on the training set $T$ which ensured no small clusters of objects in the training set. This doubling condition on $T$ also ensures no small cluster in $\delta X(T,\alpha)$ and therefore guarantees the stability of accuracy. That is, if a DNN trained on $T$  achieves sufficiently high accuracy at $t_0$ it will have that its accuracy will remain high for $t \geq t_0$, see Theorems \ref{thm:stabilityB} and \ref{Pro_DC}. While leading to novel theoretical results, this doubling condition has two drawbacks. First, for $T \subset \R^n$ the doubling condition  defined on slabs in $\R^n$ depends on the choice of the slabs. This means that the resulting stability of accuracy theorem depends on the choice of slabs used to verify the doubling condition and not only on the training set $T$. Second, this doubling condition cannot be implemented numerically as it requires checking it for infinitely many slabs with arbitrarily small widths. Finally, the constants which are given by the stability theorem in \cite{LJS} are also very large and not practically useful, see Theorem \ref{Pro_DC} and \eqref{constantC_1}.    

In this work, we present a new uniform doubling condition that resolves these two difficulties. First, this doubling condition does not depend on the choice of slabs and so depends only on the training set $T$. This leads to stability of accuracy in terms of the training set $T$ only. That is, assuming the training set $T$ satisfies the uniform doubling condition, there is a subset of parameters $A \subset \R^\nu$ such that for a DNN with parameters $\alpha \in A$ and certain architecture if it has high accuracy on the training set at some time $t_0$ then it will have high accuracy for all time $t>t_0$. Second, our uniform doubling condition can be verified numerically. In practice, for high dimensional data (i.e. $T \subset \R^n$ for $n$ very large), verification is subject to the curse of dimensionality and becomes harder to check as dimension $n$ grows. Even in small dimensions, this uniform doubling condition probably cannot be fully verified, however, we show numerically that a simplified version of the uniform doubling condition can be checked for a data set and is an indication of whether loss becomes small during training. That is, we construct simple training sets for a decision boundary problem and check the simplified uniform doubling condition (SUDC) on them.  We show that DNNs with similar accuracy trained on these training sets will have lower loss based on the SUDC. Using \eqref{acc_loss}, we conclude that the SUCD is a good indication of whether the DNN has stability of accuracy in these model examples. For more, see Examples \ref{DC_vs_loss_example_1}, \ref{DC_vs_loss_example_2} and \ref{DC_vs_loss_example_3}. In Subsection \ref{are_truncations_important}, we explain why it is reasonable to assume that SUDC provides important information for the stability of accuracy of a DNN. Finally, we also show a correlation between the SUDC and the accuracy and loss of the DNN on the test set, see Example \ref{new_example2}, Example \ref{new_example3} and Subsection \ref{example1_new}. That is, we show numerically that the training sets that better satisfy the SUDC have that:

\begin{enumerate}
    \item DNNs trained on them have higher stability of accuracy.
    \item DNNs trained on them have higher accuracy on the test set. 
\end{enumerate}

       \section{Motivation and some numerical examples} \label{motivation_example}
        In this section, we motivate our results using some numerical examples on a toy problem as well as a brief  explanation of overfitting and how the accuracy of the training set is related to the accuracy of the test set.
        
        \begin{ex}
\label{example_1}
       
       In this example, we are provided with a collection of two-dimensional points, and our goal is to determine the boundary separating the two regions. The dataset is generated by constructing a random polynomial function of a specific degree and then sampling points uniformly across a range of x-values. For each x-value, we calculate the polynomial function to obtain the corresponding y-value and then shift the points up or down to create points above and below the decision boundary. This results in two distinct regions of points, which we label as red or blue. The region above the polynomial function is labeled red, while the region below it is labeled blue. We also introduce noise to displace the y-values up or down, causing some red points to fall below the decision boundary and some blue points to rise above it. Refer to Fig. \ref{fig:toy_example_1} for an illustration.

\begin{figure}[h!]
\includegraphics[width=.5\textwidth]{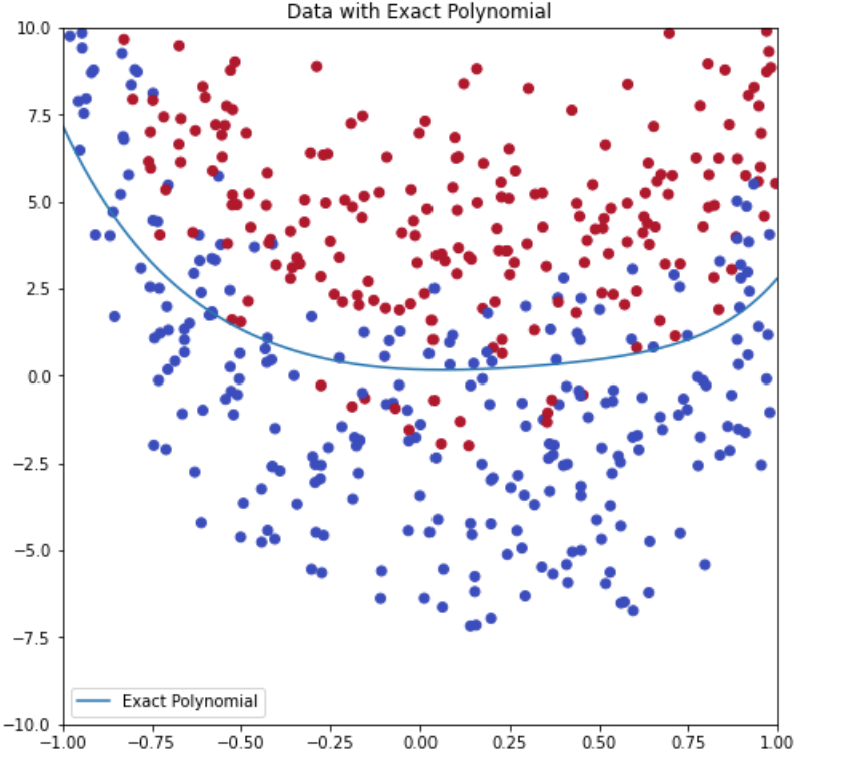}
\caption{Decision boundary plot for a two-class classification problem generated by a random polynomial function of degree 6, with added noise. The blue and red points represent the two classes, and the blue line shows the original polynomial.}
\label{fig:toy_example_1}
\end{figure}

The objective is to train a DNN to learn the decision boundary between these regions. Specifically, the DNN should learn a function that takes a two-dimensional point as an input and outputs a binary label indicating whether the point belongs to the region above or below the polynomial. Since the dataset is synthetically generated, we know the true decision boundary and can use it to assess our DNN's performance.

We create a neural network model for this classification task on the two-dimensional dataset. The model has two hidden layers: the first is a fully connected layer with five neurons and a ReLU activation function, while the second is a fully connected layer with two neurons and a softmax activation function. The model is trained using the cross-entropy loss function and the Adam optimizer. Although the main results in this work focus on the absolute value activation function or activation functions with only finitely many critical points, the ReLU activation function is used in this example for convenience. The relationship between $\delta X$ and the DNN's training accuracy does not depend on the activation function.

After training the DNN, we plot its accuracy, as shown in Fig. \ref{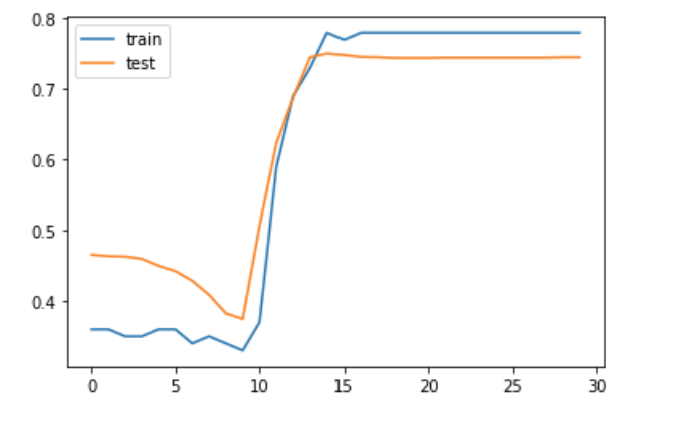}. In this example, the loss decreases throughout training, but for the first ten epochs, the accuracy on both the training and test set decreases before beginning to increase. The goal of this work is to explain why the DNN's training accuracy sometimes decreases as the loss decreases and to provide conditions that ensure that this does not happen when accuracy is high. First, we examine this phenomenon in the simple case by analyzing $\delta X$, as defined in \eqref{eq:defdeltaX}, for this DNN during training. 

\begin{figure}[h!]
\includegraphics[width=.5\textwidth]{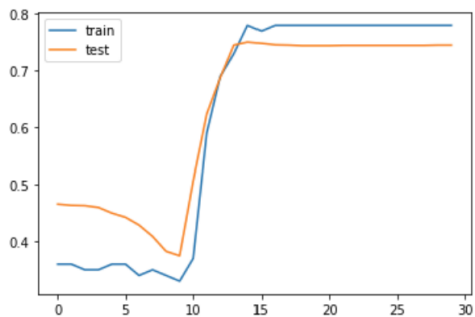}
\caption{The graph of the training and test accuracy of our DNN. The x-axis represents the epochs of the DNN, and the y-axis represents its accuracy.}
\label{toy_example_acc.png}
\end{figure}

In Fig. \ref{CC_toy_example1}, we display the classification confidence ($\delta X(T,\alpha)$) of the DNN at various stages of training for the training set $T$. During the initial ten epochs, the accuracy of the training set declines due to the existence of two small clusters of $\delta X(T,\alpha)$ near $0$. One of these clusters, which we call the correctly classified cluster, is located to the right of $0$, while the other, called the misclassified cluster, is to the left of $0$. As the training progresses, the clusters merge into a single large cluster on the left side of $0$. The loss decreases as the misclassified cluster moves closer to 0 during training; however, the accuracy drops because the correctly classified small cluster also shifts to the left of $0$, resulting in incorrect classification of its elements. 

As mentioned, stability of accuracy occurs when the accuracy of a DNN is high, and the loss is sufficiently small. This relationship between high accuracy and low loss is essential for maintaining consistent performance during the training process. One of the primary motivations of this work is to achieve sufficiently small loss, which in turn is expected to ensure high accuracy of the DNN. By focusing on finding conditions to obtain sufficiently small loss, we aim to create a direct connection between the decrease in loss and the increase in accuracy, thus promoting the stability of accuracy throughout the training process. This approach enables the development of DNNs that can maintain their performance as they train.

    \end{ex}

    \begin{ex}
        It is important to note that avoiding small clusters in $\delta X(T,\alpha)$ is a sufficient but not necessary condition for stability of accuracy. In this example, we train the same model as in Example \ref{example_1} and show the accuracy of the model over time and its $\delta X$ at epoch $1$, see Fig. \ref{toy_example_3}. Even though there is a small cluster in $\delta X$, we still have stability of accuracy.  

             \begin{figure}[h!]
         \begin{subfigure}[b]{0.3\textwidth}
         \centering
\includegraphics[width=1.5\textwidth]{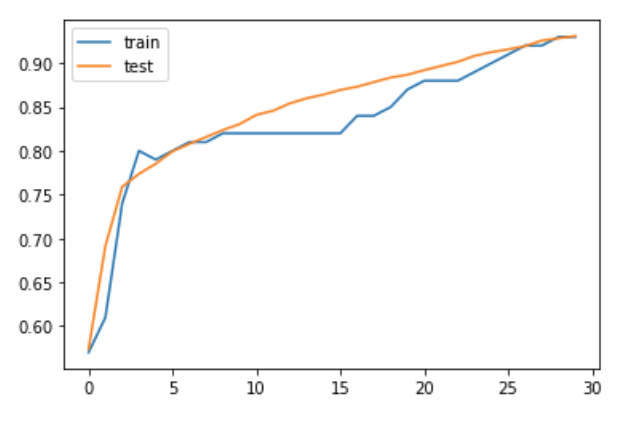}
         \caption{Accuracy of DNN during training.}    \label{toy_example_small_cluster}
     \end{subfigure}
     \hspace{0.2\textwidth}
     \begin{subfigure}[b]{0.3\textwidth}
         \centering    \includegraphics[width=1.65\textwidth]{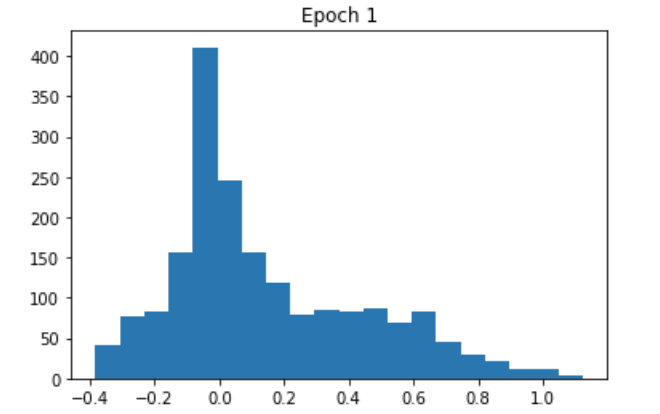}
         \caption{$\delta X(\alpha_{1},T)$. We see a small cluster near $0$.}
         \label{toy_example_after_training}
     \end{subfigure}

     \caption{Avoiding small clusters is a sufficient but not necessary condition for stability of accuracy.}
     \label{toy_example_3}
				
		\end{figure}
    \end{ex}

\subsection{Overfitting: the connection between the accuracy on the training set and accuracy on the test set}
\label{Overfitting}

Overfitting is a common issue in machine learning, particularly in deep neural networks. It occurs when a model learns to fit the training data too well, capturing not only the underlying patterns in the data but also the noise and other irrelevant details. As a result, even in cases when the model performed well on the training set, it might perform poorly on unseen or test data because it fails to generalize from the training data effectively. For more on overfitting of DNNs, see \cite{ma2018power,kawaguchi2017generalization,cohen2021learning}.

Monitoring the model's performance on both the training and test datasets during the training process can help identify overfitting. Overfitting is suspected when the model's performance on the training data continues to improve, while its performance on the test data plateaus or even worsens. This indicates that the model has become too specialized in recognizing features and patterns in the training data, making it unable to generalize to new, unseen data.

When training a DNN, it is crucial to ensure that the accuracy of the model on the training set increases while the loss decreases. If a model's accuracy on the training set increases and the DNN does not overfit, it indicates that the model is learning the underlying structure of the data and can generalize well to new, unseen data. Thus, to achieve a well-performing DNN, it is essential to strike a balance between increasing accuracy on the training set and avoiding overfitting. This can be done by using techniques such as regularization, dropout, and careful selection of optimization algorithms and their hyperparameters. By ensuring that the model's accuracy increases on the training set while avoiding overfitting, it is possible to obtain a DNN that can generalize well and perform effectively on new, unseen data.

In Examples \ref{new_example2}, \ref{new_example3} and Subsection \ref{example1_new}, we see that data sets that satisfy the SUDC for more slabs have that DNNs training on them achieve higher accuracy and lower loss not only on the training set but also on the test set. This can be explained as follows: DNNs training on data that satisfy the UDC for more slabs (see Def. \ref{def:no_small_islated_data_cluster_2}) achieve lower loss on the training set (see Theorem \ref{weakenedDC1}). The SUDC is a decent approximation of the UDC (see Subsection \ref{are_truncations_important}). This leads to higher accuracy on the training set and greater stability of accuracy (see, e.g., \eqref{acc_loss}). If the DNN is not overfitting, this also leads to higher accuracy and lower loss on the test set.

    	    \section{Main results and  numerical implementation}
         \label{Main_result}
    	    
    	    The following are some important definitions we must introduce before the main result of this paper is stated. As mentioned, our goal is to ensure the stability of accuracy during training (see Theorem \ref{weakenedDC1}). To this end, we introduce a condition on the training set $T$ called the \emph{uniform doubling condition}.  

We define the uniform doubling condition on the data set $T\subset \R^n$ via \emph{truncated slabs} in $\R^n$.

\begin{defn}[Slab $S'$ in $\R^n$]
   A slab $S'$ with width $\kappa$ is defined by some unit vector $u$ and center $s \in \R^n$ as, 

\begin{equation}
S'_\kappa:=\{x:\ |(x-s)\cdot u|\leq \frac{\kappa}{2}\}.
\label{slab}
\end{equation}

\end{defn} 

 Thus a \textbf{slab $S'$ in $\R^n$} is the domain between parallel $(n-1)$-dim hyperplanes, see Fig. \ref{fig:slab}. The \textbf{width of a slab} is given by the (shortest) distance between the bounding hyperplanes.  Thus, a slab can be defined by a unit   vector, a center, and a width, see Fig. \ref{fig:slab}.

\begin{figure}[h!]
				\includegraphics[width=.5\textwidth]{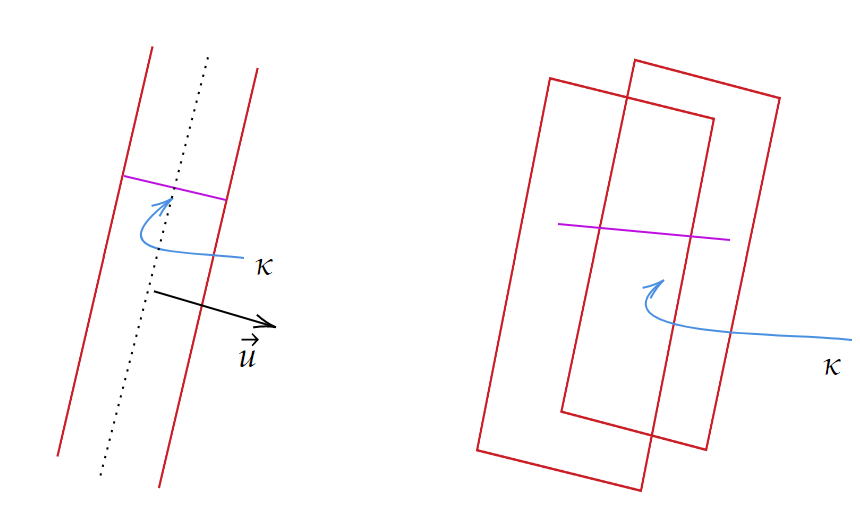}
				\caption{left: slab in $\R^2$. Width is $\kappa$, the center is a dashed line, and normal vector $\vec{u}$ is shown.  right: slab in $\R^3$.}
				\label{fig:slab}
			\end{figure}


	\begin{defn}[Truncated slab $S$ in $\R^n$]
   A truncated slab $S$ is the intersection of a slab defined by some unit vector $u$ and center at $s \in \R^n$ with a finite number $k$ of linear constraints

\begin{equation}
S_\kappa=S_\kappa(u,s,v_1,t_1,\ldots,v_k,t_k):=\left\{x:\ |(x-s)\cdot u|\leq \frac{\kappa}{2}\ \mbox{and}\ v_i\cdot x\leq t_i\ \forall i=1,\ldots, k\right\},\label{truncatedcylinder0}
\end{equation}
where the $v_i \in \R^n$ are the normal vectors of hyperplanes which form the truncation set. We call $k$ the truncation number (the number of elements in the truncation set). See Fig. \ref{fig:slabtruncated} for an example.  
\label{truncated_slab}
\end{defn}

				

			\begin{figure}[h!]
				\includegraphics[width=\textwidth]{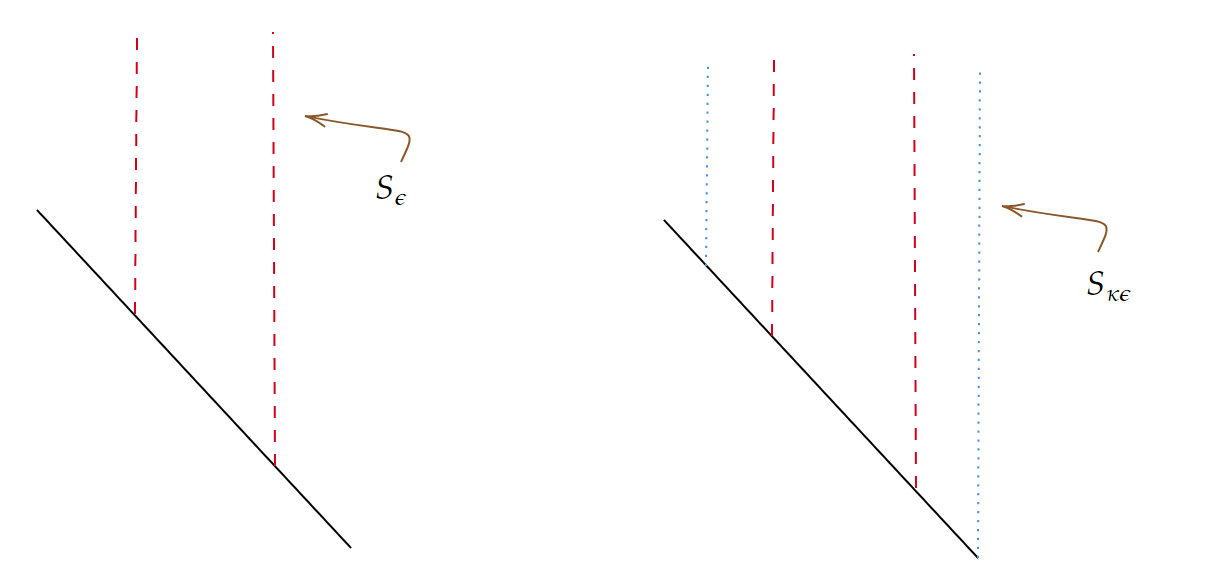}
				\caption{left: truncated slab $S_{\epsilon}$. The truncation is a solid line (in black) and the slab is a dashed line (in red). right: $S_{\epsilon}$ is shown in dashed lines (red) and its scale $S_{\kappa \epsilon}$ is shown as the dotted lines (in blue).}
				\label{fig:slabtruncated}
			\end{figure}
			
 A truncation set of size $k$ is a set of $k$ hyperplanes. We now consider a set $L$ which is a family of size $k$ truncation sets. For a family $L$ of truncation sets with $k$ elements in each truncation set we define the uniform doubling condition on the training set $T$:  
	\begin{defn}\label{def:no_small_islated_data_cluster_2}
		Let $\bar\mu$ be the extension of the measure $\mu$ from $T$ to $\mathbb R^n$ by $\bar\mu(A)=\mu(A\cap T)$ for all $A\subset\mathbb R^n$. Let $L$ be a family of size $k$ truncation sets. The \emph{uniform doubling} condition holds on $T$ if there exists  $\kappa>1$ and $\delta,\sigma, \beta,\ell>0$ so that $\forall$ truncated slabs $S$ s.t. $S$ is generated by a truncated set $h \in L$ there exists  $\ell$, such that for all $\ell\kappa^i<\beta$ with $i \in \mathbb N$,  we have

		\begin{equation}\label{eq:no_small_isolated_data_cluster_2}
		\bar\mu(S_{\ell\kappa^{i+1}})\geq \min\{\delta,\;(1+\sigma)\,\bar\mu(S_{\ell\kappa^i})\}.
		\end{equation}
		
	\end{defn}	

Essentially, this condition requires that as we double the width of a slab $S$ by $\kappa$ to obtain the slab $S_{\kappa}$ we capture more data than was originally in the slabs $S$ (i.e., $S_{\kappa}$ contains more data than the slab $S$). See Fig. \ref{doubling_condition_with_data} as an example. This ensures that the data set $T$ does not have small clusters of data which in turn ensures the stability of accuracy of a DNN during training.

\begin{figure}[h!]
				\includegraphics[width=.5\textwidth]{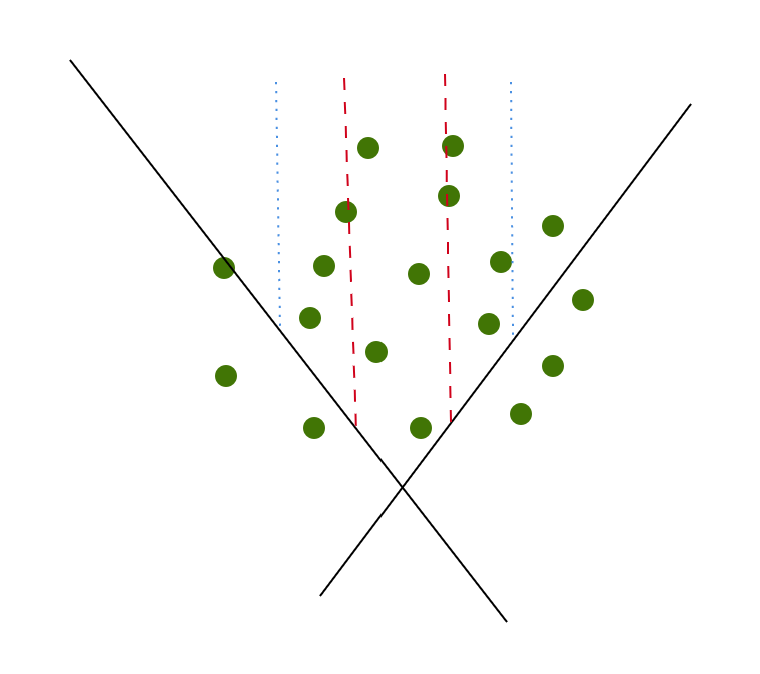}
				\caption{ Truncated slab $S_{\epsilon}$. The truncation is  a solid line (black)  and the original slab is a dashed line (in red). We also see the scaled slab $S_{\kappa \epsilon}$ as a dotted line (in blue). }
				\label{doubling_condition_with_data}
			\end{figure}

For reasons that will be explained shortly, we call $\kappa$ the doubling constant, $\ell$ the scaling factor,  and $\beta$ the scale parameter.
		
		As mentioned, $\beta$ is the scale parameter (of the doubling condition). Put simply, it is the largest width of the slab $S$ for which the doubling condition is satisfied. The larger $\beta$ the more  slabs must satisfy the doubling condition. If $(1+\sigma)\mu(S)\geq \delta$ the doubling condition is automatically satisfied. Thus, we can think of $\delta$ as the largest mass for which the doubling condition must be satisfied. $\kappa$ is the factor by which we "double" the slab $S$, and so it is referred to as the doubling constant. $\ell$ is the scaling factor, and for a slab $S$ we have that $\ell$ is the smallest width for which the doubling condition for $S$ must be satisfied.  $L$ is a family of size $k$ truncations, and so the doubling condition holds for some set $L$ if $\forall h \in L$ and slab $S$ truncated by the $k$ elements in $h$, we have that $S$ (with the truncations) satisfies the doubling condition. 

The truncation set $L$ is closely associated with the non-linear absolute value activation function in the DNN. To clarify this connection, we first introduce the truncation set of a DNN.

			\begin{figure}[h!]
				\includegraphics[scale=.7]{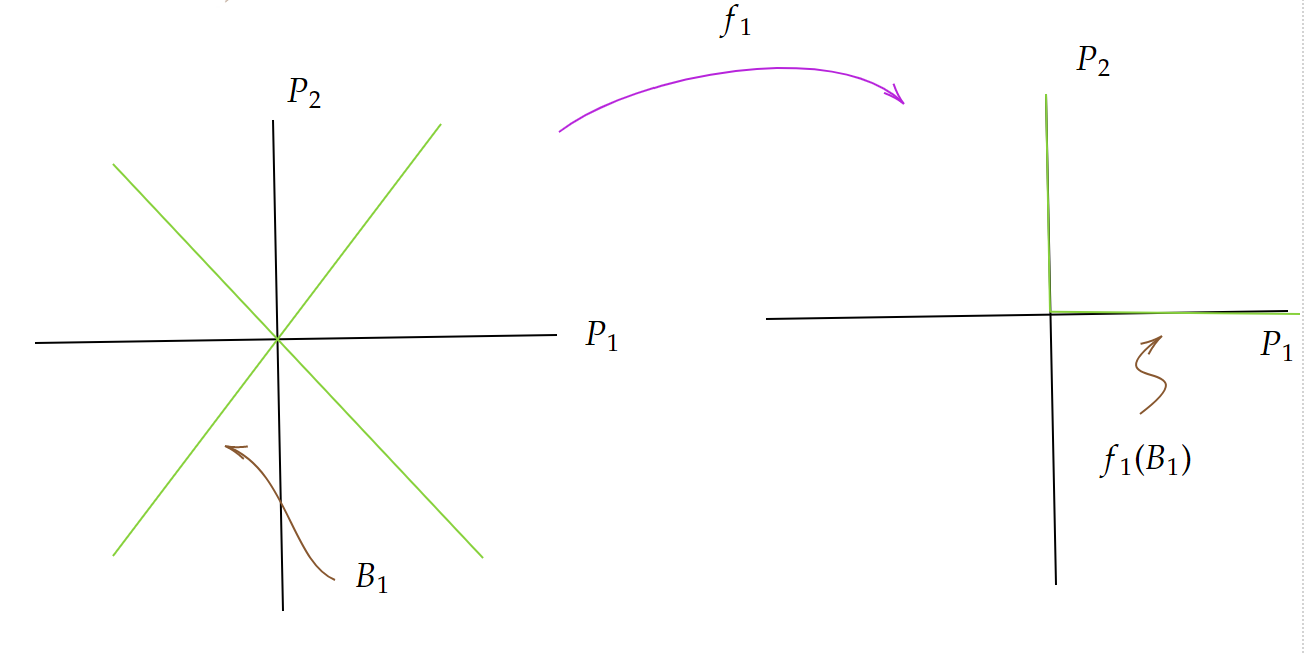}
				\caption{Mapping $f_1:\R^2\to \R^2$. The set $B_1$ is shown in green.}
				\label{B_1}
			\end{figure}

	We  define the truncation set of a DNN as follows:

   \begin{defn} We start with a DNN with only one layer:
   \begin{enumerate}
       \item  Take $f_1=\lambda \circ M_1 :\R^n \to \R^m$ (layer map), with $\lambda(x_1, \dots, x_m)=(|x_1|, \dots, |x_m|)$ the activation function and $M_i$ a affine map, to be a DNN. Consider the set of coordinate hyerplanes of $\R^m$, $P_1=(0,x_2,\dots,x_m), \dots, P_m=(x_1,x_2,\dots,0)$. Define \textit{first layer truncation set} of hyperplanes $B_1 =\{f_1^{-1}(P_1),f_1^{-1}(P_2),\cdots,f_1^{-1}(P_m)\}$ for the $f_1^{-1}(P_{w})$ which are $(n-1)$ dimensional and $1 \geq w \geq m$. See Fig. \ref{B_1}. 
       
       For a DNN with more layers, we define $i$th layer of the truncation set of the DNN in a similar manner: 
       \item Take $F_i=f_i \circ f_{i-1}\dots f_2 \circ f_1: \R^n \to \R^{c(i)}$, with $c(i)$ the number of dimensions of the $i$th layer of the DNN. See Fig. \ref{whatareF_i}. Take the coordinate hyerplanes $P_1=(0,x_2,\dots,x_{c(i)}),  P_2=(x_1,0,\dots,x_{c(i)}), \dots, P_{c(i)}=(x_1,x_2,\dots,0)$. Define the \textit{$i$th layer truncation set} \begin{equation}
       B_i=\{F_i^{-1}(P_1), F_i^{-1}(P_2),\dots,F_i^{-1}(P_{c(i)})\}, \end{equation}
       for the $F_i^{-1}(P_{w})$ which are $(n- 1)$ dimensional and $1 \geq w \geq c(i)$.     
       \item  We take $B= \bigcup_{i=1}^l B_l$ to be the \textit{DNN truncation set}, with $l$ the numbers of layers in the DNN.
   \end{enumerate}
      
\end{defn}
				
			 To ensure stability for $f_1$ (as a $1$ layer DNN), the data set $T$ must satisfy the doubling condition only with truncations $B_1$.  Similarly for DNNs with more than one layer. $B$ is a family of sets of hyperplanes (or half hyperplanes which for simplicity we will refer to as hyperplanes) that are mapped to the coordinate  planes under layer maps of our DNN. As we will see, in order for a DNN to have stability of accuracy we must have that the truncation set of the DNN is an element of the family of truncation sets $L$ for which the training set $T$ satisfies the doubling condition. 
			
			We now introduce two more definitions before the main result is stated. 
			
				\begin{figure}[h!]
	\includegraphics[scale=.7]{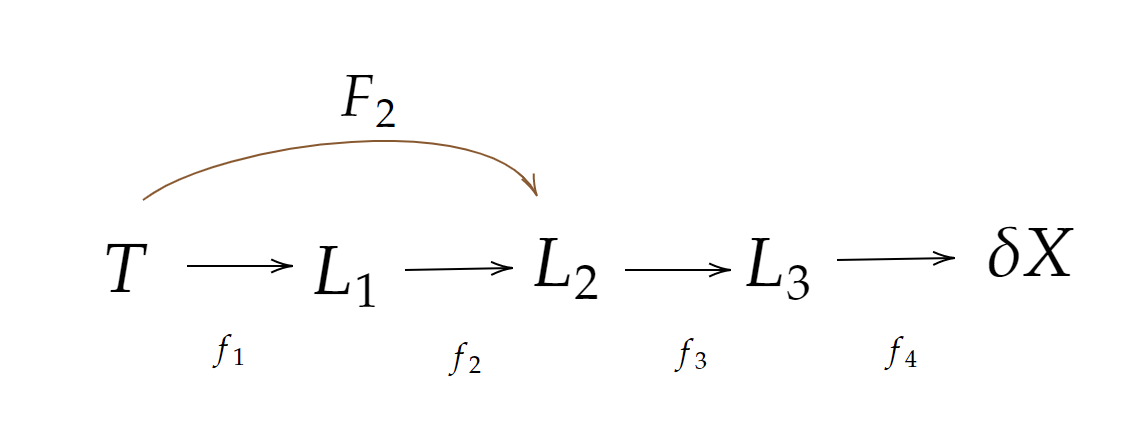}
	\caption{What are the $F_i$ maps?}
		\label{whatareF_i}
			\end{figure}

	\begin{defn}\label{whatisd}
		Take $d_{\min}=|\min_i{(\sigma_i)^2}|$ and $d_{\max}=|\max_i{(\sigma_i)^2}|$ with $\sigma_i$ the nonzero singular values of the linear map  $W_n \circ \dots \circ W_2 \circ W_1$ (in general the $M_i$ are affine maps, but when obtaining $d_{\min}$ and $d_{\max}$ we ignore the bias vectors of $M_i$ and only consider their linear part $W_i$). 
		
	\end{defn}
	
	Finally, define the good sets and bad sets of $T$ as follows:
	for $\eta\geq 0$, the \emph{good set} of margin $\eta$ at time $t$ is
	\begin{equation}\label{eq:good_set}
	G_\eta(t)=\{s\in T:\delta X(s,\alpha(t))>\eta\}
	\end{equation}
	and the  \emph{bad set} of margin $\eta$  at time $t$ is
	\begin{equation}
	B_{-\eta}(t)=\{s\in T:\delta X(s,\alpha(t))<-\eta\}.
\end{equation}

The main result of this paper is the following Theorem:

		\begin{thm}\label{weakenedDC1}

			Assume that a data set $T \subset \R^n$ satisfies the  doubling condition (\ref{def:no_small_islated_data_cluster_2}), with $\kappa>1$ and $\delta,\sigma$, $\beta>0$, $1>\ell>0$ and family of truncations $L$ with truncation sets of size $k$.	Then for a.e. DNN with truncation set  $B \in L$ at time $t_0$ we have that  there exists constants $C_2=C_2(\eta, K ,\kappa,\sigma,\beta,\delta_0,\ell,d_{\min},d_{\max})$ and $\xi'$ such that if we have: 
			\begin{equation}\label{eq:explicit_constants_2_main_theorem}
		\quad \frac{d_{\min}\beta}{2} \geq \eta, \quad 0\leq \xi' \in [d_{\min}\ell, d_{\max}\ell],   \quad \quad \xi' < \eta, \quad   \delta_0<\delta,
		\end{equation} 
			and if the good and bad sets at $t=t_0$ satisfy
			\begin{equation}\label{good_set_big_and_bad_set_empty_B}
				\mu(G_\eta(t_0))>1-\delta_0\quad\text{and}\quad B_{-\xi' }(t_0)=\emptyset,
			\end{equation}
			then for all $t\geq t_0$,
			\begin{equation}\label{eq:stability_B}	
    \text{loss}(t)\leq \log(2)C_2, \quad
    \text{acc}(t)=\mu(G_0(t))\geq 1-C_2.	\end{equation}
			
		\end{thm}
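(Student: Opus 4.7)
The proof plan mirrors the three-step strategy outlined in the introduction:
\begin{equation*}
\text{high acc}(t_0) \Rightarrow \text{small loss}(t_0) \Rightarrow \text{small loss}(t) \Rightarrow \text{high acc}(t).
\end{equation*}
First I would dispose of the two easy implications. The step $\text{small loss}(t_0) \Rightarrow \text{small loss}(t)$ uses that training is loss-minimizing: $\bar L(\alpha(t))$ is non-increasing along the (stochastic) gradient flow, so $\text{loss}(t) \leq \text{loss}(t_0)$ for all $t \geq t_0$. The step $\text{small loss}(t) \Rightarrow \text{high acc}(t)$ is exactly Lemma \ref{Upper_Bound_on_Loss}, i.e., the inequality \eqref{acc_loss}.

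The heart of the argument is the implication $\text{high acc}(t_0) \Rightarrow \text{small loss}(t_0)$. I would partition the training set at time $t_0$ into $G_\eta(t_0)$, the middle band
\begin{equation*}
M(t_0):=\{s\in T:-\xi' \leq \delta X(s,\alpha(t_0)) \leq \eta\},
\end{equation*}
and $B_{-\xi'}(t_0)$, and estimate the loss on each piece. By hypothesis $B_{-\xi'}(t_0)=\emptyset$, so the loss splits as a sum over $G_\eta(t_0)$ and $M(t_0)$. On $G_\eta(t_0)$ the inequality $1/p_{i(s)} \leq 1+(K-1)e^{-\delta X}$ together with $\delta X>\eta$ yields a per-point bound $-\log p_{i(s)} \leq \log(1+(K-1)e^{-\eta})$, contributing a quantity of order $(K-1)e^{-\eta}$ to the total. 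On $M(t_0)$ the lower bound $\delta X \geq -\xi'$ gives a uniform upper bound on $-\log p_{i(s)}$ depending only on $\xi'$ and $K$, so the total contribution from $M(t_0)$ is bounded by a constant times $\mu(M(t_0))$. The remaining task is to bound $\mu(M(t_0))$.

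This measure bound is the main obstacle, and it is where the uniform doubling condition enters. The observation is that $M(t_0)$ is contained in the union, over class pairs $(i,j)$ with $j\neq i$ and over the cells of the piecewise-linear decomposition induced by the activation functions, of preimages under $X(\cdot,\alpha(t_0))$ of output-space slabs of the form $\{|X_i-X_j|\leq \max(\eta,\xi')\}$. On each such cell, $X_i-X_j$ is a single affine function of $s$, so the preimage is a genuine truncated slab in $\mathbb{R}^n$ whose truncation hyperplanes are exactly those of the DNN truncation set $B$, and the width of this slab is sandwiched between the output-space width divided by $d_{\max}$ and divided by $d_{\min}$. The hypotheses $\xi'\in[d_{\min}\ell,d_{\max}\ell]$ and $d_{\min}\beta/2\geq\eta$ ensure that the relevant widths lie in the admissible range $[\ell,\beta]$ on which \eqref{eq:no_small_isolated_data_cluster} applies, and because $B\in L$ by assumption, the doubling condition is indeed available for these truncated slabs. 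Iterating \eqref{eq:no_small_isolated_data_cluster} upward from $M(t_0)$ (width $\sim\eta/d_{\min}$) to a slab of width $\beta$, and using $\mu(G_\eta(t_0))>1-\delta_0>1-\delta$ to exclude the saturating branch $\bar\mu\geq\delta$, yields an explicit upper bound on $\mu(M(t_0))$ of the form $(1+\sigma)^{-N}$ with $N=N(\kappa,\ell,\beta,\eta,\xi')$. Plugging this back produces an absolute constant $C_2=C_2(\eta,K,\kappa,\sigma,\beta,\delta_0,\ell,d_{\min},d_{\max})$ controlling $\text{loss}(t_0)$, and chaining the three implications delivers \eqref{eq:stability_B}.

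The ``a.e.\ DNN'' qualifier is a transversality hypothesis: it ensures that each $F_i^{-1}(P_w)$ is genuinely $(n-1)$-dimensional so that $B$ is an honest truncation set, and that the singular-value constants $d_{\min}, d_{\max}$ are well-defined and nonzero. The most delicate point in the execution is the reassembly step, since $\delta X$ involves both the class label $i(s)$ and a maximum over $j\neq i(s)$, so the doubling condition must be applied on each of finitely many cells and each of finitely many $(i,j)$-orientations; the cell-by-cell estimates must then be summed with care, absorbing the combinatorial overhead into $C_2$, to produce a single clean bound.
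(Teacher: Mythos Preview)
Your three-step scaffold matches the paper, and your treatment of $G_\eta(t_0)$, $B_{-\xi'}(t_0)$, and the two easy implications is correct. The substantive divergence is in how you control the middle band $M(t_0)$, and there is a real gap there.

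The paper does \emph{not} apply the doubling condition on $T$ directly to the cell-by-cell preimage slabs. Instead it \emph{propagates} the doubling condition through the network one layer at a time: Lemma~\ref{linear1} pushes it through an affine map (this is where $\beta\mapsto d_{\min}\beta$ and $\ell\mapsto\ell'\in[d_{\min}\ell,d_{\max}\ell]$ appear), and Lemma~\ref{Lemmaactivationfunction2} pushes it through the activation. The crucial content of Lemma~\ref{Lemmaactivationfunction2} is the averaging trick showing that the fold $x\mapsto|x|$ preserves $\sigma$ exactly, provided no data sits on the fold hyperplane; this is why the theorem is stated for ``a.e.\ DNN'' and why the final $C_2$ carries \emph{no} combinatorial factor in the network size. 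After the full propagation one has a one-dimensional uniform doubling condition on $\delta X$ itself, and then Proposition~\ref{lemmadoublingconditionondeltaX} (via Lemma~\ref{lem:loss_bound_3}) yields the loss bound.

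Your approach bypasses this propagation and instead sums cell-by-cell estimates, planning to ``absorb the combinatorial overhead into $C_2$''. But the explicit $C_2$ in \eqref{eq:constant_C_2} has no such overhead; if you sum naively over the (exponentially many) linearity cells and over the $K(K-1)$ class pairs, you recover the blow-up that plagued the earlier Theorem~\ref{Pro_DC}. To match the stated $C_2$ you would need to argue that the iterated doublings of the per-cell slabs remain disjoint and jointly sit inside $G_\eta^c$, so that their total mass is bounded by $\delta_0$ rather than by $1$ times the number of cells. That is possible, but it is essentially a repackaging of the propagation lemmas, and the argmax partition $\{j=\arg\max_{j'\ne i(s)}X_{j'}\}$ introduces additional half-spaces not coming from $B$, which must be handled as the paper does in its final paragraph (treating $\tilde T\mapsto\delta X$ as a further piecewise-linear map with $d_{\min}=d_{\max}=1$). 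A second, smaller gap: bounding the $M(t_0)$ contribution by ``a constant times $\mu(M(t_0))$'' is too coarse to produce the multi-term structure of $C_2$; the paper further decomposes $M(t_0)$ into the dyadic shells $I_i=[\ell\kappa^i,\ell\kappa^{i+1})$ (Lemma~\ref{lem:loss_bound_3}), and the sum over $i$ is what generates the $\sum_{i=0}^{p-1}$ term in \eqref{eq:constant_C_2}.
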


  Here we have that \begin{equation}
\label{eq:constant_C_2}
\begin{split}
\ & C_2(\eta, K ,\kappa,\sigma,\beta,\delta_0,\ell,d_{\min},d_{\max}):=  \\
& \max_{\xi \in [d_{\min}\ell , d_{\max}\ell]}  \frac{1}{\log(2)} \left(\log(1+(K-1)e^{-\eta})+\log(1+(K-1)e^{-(\frac{\eta}{\kappa})})\delta_0\right. \\
&\left.+ \frac{\delta_0}{(\sigma+1)^{\frac{\log\left(\frac{\eta}{\xi}\right)}{\log \kappa}-1}}\log(1+e^{\xi}(K-1))\right. \\
&\left.+ \sum_{i=0}^{p-1} \frac{\delta_0}{(\sigma+1)^{\frac{\log\left(\frac{\eta}{\xi \kappa^{i+1}}\right)}{\log \kappa}-1}}\log(1+(K-1) e^{-\xi \kappa^{i+1}})\right),
\end{split}
\end{equation}
where $p=\lfloor \frac{\log\left(\frac{\eta}{\xi}\right) }{\kappa}\rfloor$.

Here, the condition that the truncation set of the DNN, $B$, is an element of $L$, and condition \eqref{eq:explicit_constants_2_main_theorem} are only required at $t_0$.

\begin{remark}
    We would like the constant $C_2$ to be as small as possible so that accuracy is bounded by a large number (a number close to $1$) and loss is bounded by a small number. It is important to note that the $\log$ functions which appear in $C_2$ come from the cross-entropy loss \eqref{loss_function} and depend on the $\log$ in the cross-entropy loss. The exponential functions (such as $e^{-\eta}$), on the other hand, come from softmax \eqref{soft_max}. Because of this, if we use $\log_{10}$ for the cross-entropy while training (instead of $\ln$) we will have a smaller $\log(2)C_2$ constant, meaning that loss will be smaller. This, however, will not decrease the constant $C_2$ (or increase the bound on accuracy) given that we are dividing by $\log(2)$.

    Changing base in softmax \eqref{soft_max} does, however, change the constant $C_2$. By changing the base in softmax from $e$ to some constant $b$, one can replace every $e$ in $C_2$ with $b$. In Fig. \ref{constant_b} we show how $C_2$ changes based on $b$, though in this example we fix $\xi=.4$. We also take $K = 2$, $\eta = 18$, $\delta_0 = 0.2$, $\kappa = 2$ and $\sigma = 0.9$. We see that $C_2$ decreases from $.06$ when $b=2$ to $.03$ when $b=5$. This is a significant change in terms of stability of accuracy (i.e. bounding accuracy by $.97$ is much better than bounding it by $.94$). Throughout the paper, we use Desmos to compute $C_2$ because of its level of precision.  

    \begin{figure}[h!]
\includegraphics[width=.8\textwidth]{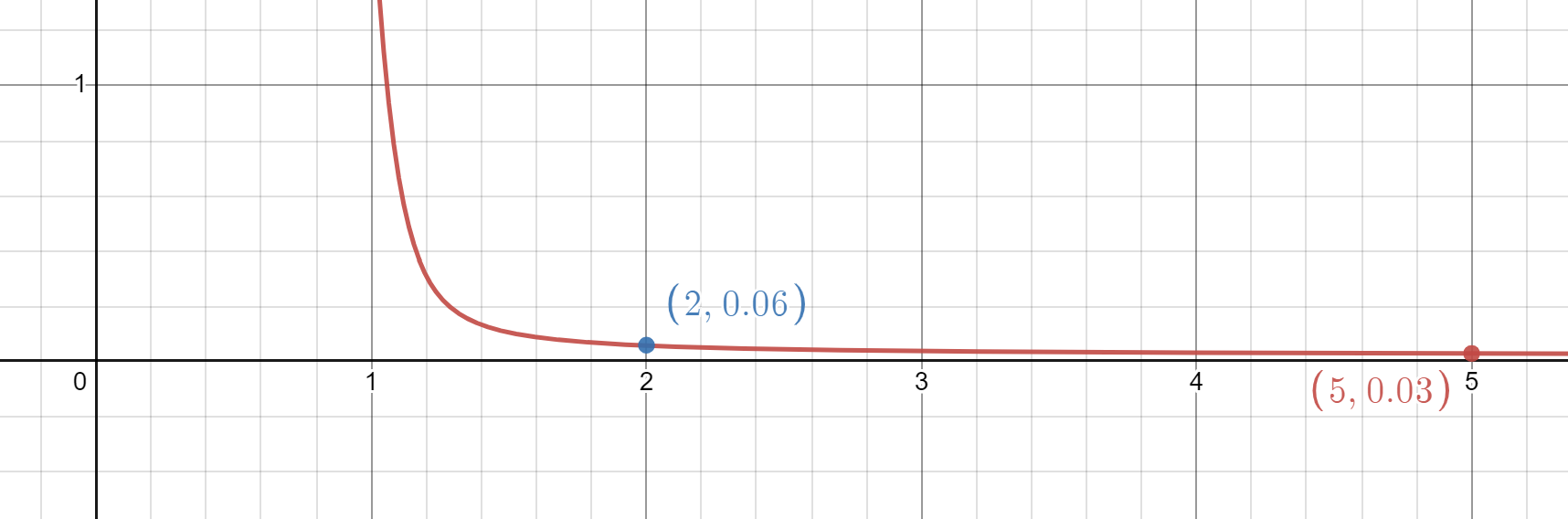}
\caption{$C_2$, represented on the $y$-axis, for different values of $b$, represented on the $x$-axis, with fixed $\xi=.4$. We see that $C_2$ decreases from $.06$ when $b=2$ to $.03$ when $b=5$. This is a significant change in terms of stability of accuracy.}
\label{constant_b}
\end{figure}

  \end{remark}

\subsection{Interpretation of the constant $C_2$}
  We would like the constant $C_2$ to be small so that accuracy is high. To make $C_2$ small, we need to understand how the different parameters in the equation affect its value. Let's analyze each part of the equation:
\begin{enumerate}
    \item $\log(1+(K-1)e^{-\eta})$: To minimize this term, we want $\eta$ to be large, because when $\eta$ is large, $e^{-\eta}$ approaches $0$, and the term inside the logarithm approaches $1$, leading to a smaller value of the logarithm. The connection between $\eta$ and the uniform doubling condition is given in the condition \eqref{eq:explicit_constants_2_main_theorem}: 
    \begin{equation}
        \frac{d_{\min}\beta}{2} \geq \eta.
    \end{equation}
    This means that we want $\beta$, from the doubling condition, to be large. That is, we want the widths of the slabs which satisfy the doubling condition to be as large as possible. 
\item $\log(1+(K-1)e^{-(\frac{\eta}{\kappa})})\delta_0$: We want $\delta_0$ to be small to minimize this term since it is multiplied by the whole expression. Additionally, having a large $\eta$ will also help in making this term small for the same reasons explained in point (1).

The connection between $\delta_0$ and the uniform doubling condition is given by the condition \eqref{eq:explicit_constants_2_main_theorem}:

\begin{equation}
 \delta_0<\delta,
\end{equation}
with $\delta$ coming from the uniform doubling condition. In essence, the larger $\delta$ is, the bigger the measure of the training set that must satisfy the DC, see \eqref{eq:no_small_isolated_data_cluster_2}. 

\item $\frac{\delta_0}{(\sigma+1)^{\frac{\log\left(\frac{\eta}{\xi}\right)}{\log \kappa}-1}}\log(1+e^{\xi}(K-1))$: To minimize this term, we want $\xi$ to be small, because when $\xi$ is small, $e^{\xi}$ approaches 1, and the term inside the logarithm approaches $1+(K-1)$. Thus, we would want $d_{\max}\ell$ to also be small and we can ensure that $d_{\max}\ell$ is small by checking the uniform doubling condition on the training set for small $\ell$. This is, the size of the widths of the slabs that we start checking the UDC with should be small.

Minimizing this term also requires $\kappa$ and $(1+\sigma)$ to be roughly the same size, as this causes the term to decrease. Essentially, this means that "doubling" the width of the slab should "double" the number of points contained in the doubled slab. Ideally, $(1+\sigma)$ should be much larger than $\kappa$, but satisfying the uniform doubling condition is unlikely in such cases. Also, a small $\delta_0$ will help in minimizing this term.

 \item $\sum_{i=0}^{p-1} \frac{\delta_0}{(\sigma+1)^{\frac{\log\left(\frac{\eta}{\xi \kappa^{i+1}}\right)}{\log \kappa}-1}}\log(1+(K-1) e^{-\xi \kappa^i})$: To minimize this term, we want $\kappa$ and $(1+\sigma)$ to be of about the same size, because when their sizes are similar, the term inside the summation becomes smaller. Additionally, as $\xi$ decreases, the terms inside the logarithms increase, and as a result, the sum increases. However, increasing $\kappa$ can help in reducing the size of the terms inside the logarithms, leading to a smaller sum overall. Finally, $p=\lfloor \frac{\log\left(\frac{\eta}{\xi}\right) }{\kappa}\rfloor$. This value of $p$ represents the number of terms in the sum and is determined by the value of $\xi$, which is also included in the denominator of the logarithmic term inside the sum. As $\xi$ decreases, the value of $p$ increases, meaning that the sum contains more terms, and thus, the overall value of the sum might increase. Again, a small $\delta_0$ will help in minimizing this term.

\item The function depends on $K$ through the terms inside the logarithms. As $K$ increases, the terms inside the logarithms also increase, but since we are taking the logarithm of these terms, the effect of increasing $K$ is dampened. Therefore, we can say that the function has a sub-linear dependence on $K$.
\end{enumerate}

In summary, to make the equation small, we would like a large $\eta$, small $\xi$, small $\delta_0$, large $\beta$ and large $d_{\min}$, so that \eqref{eq:explicit_constants_2_main_theorem} is satisfied, and to make $\kappa$ and $(1+\sigma)$ of about the same size. The dependence on $K$ is sub-linear.

A large $\eta$ makes the good set $G_\eta(t_0)$ small and so $\delta_0$ would be large. In general, we try to balance having a large $\eta$ and small $\delta_0$. Numerically it seems like having $\eta \approx 18$ and $\delta_0 \approx .2$ is reasonable for examples such as Example \ref{example_1} when the DNN has high accuracy. To have a large $\eta$, we need a large $\beta$ and $d_{\min}$ to not be to small, see Remark \ref{d_min} for more on $d_{\min}$. We therefore also want $\delta$ to be large, so that $\delta_0$ can be large, see \eqref{eq:explicit_constants_2_main_theorem}. Finally, because $C_2$ is defined as the max over $ {\xi \in [d_{\min}\ell , d_{\max}\ell]}$, we want $d_{\max}\ell$ to also be small. However, we don't want it to be too small otherwise finding a ${\xi' \in [d_{\min}\ell, d_{\max}\ell]}$ for which $B_{-\xi' }(t_0)=\emptyset$ might be impossible.  

For a simple example, when   $\delta_0=0.2$, $\eta=18$, $K=2$, $\kappa=2$, $\sigma=0.9$, $\ell=.001$, $d_{\min}=.01$ and $d_{\max}=800$  we have that $C_2=.05$, and so assuming the conditions of Theorem \ref{weakenedDC1} are satisfied we have that the accuracy is bounded by $.95$ and loss is bounded by $.015$. More on this will be discussed in Section \ref{p_results}.

Finally, the variable $\xi$ is probably the most important parameter for the constant $C_2$, and while we don't know it before training (given that we don't know $d_{\min}$ and $d_{\max})$, we can still study how stable training on some given data might be for different assumptions of what $\xi$ will be after training. In Fig. \ref{constant_xi} we show how $C_2$ depends on $\xi$, with the other parameters again being $\delta_0=0.2$, $\eta=18$, $K=2$, $\kappa=2$, $\sigma=0.9$. We see that for $\xi<1$ one can obtain significant stability of accuracy results (i.e. accuracy will be bounded from below in a significant manner). We can obtain a small $\xi$ by ensuring that $\ell d_{\max}$ is small, which means checking the uniform doubling condition on $T$ for a small enough $\ell$. We also need the bad set to be small, but we are assuming that accuracy on the training set is already high in which case the bad set would be small, see Example \ref{result_deltax}. Having a small bad set does not mean that training should stop, because it might still be worthwhile to increase the size of the good sets $G_{\eta}$ and make the bad set even smaller. However, stability of accuracy will ensure a lower bound on the accuracy and loss of the DNN as training progresses. $\eta$ is probably the second most important constant and we will talk more about this constant in Section \ref{p_results}.  

\begin{figure}[h!]
\includegraphics[width=.8\textwidth]{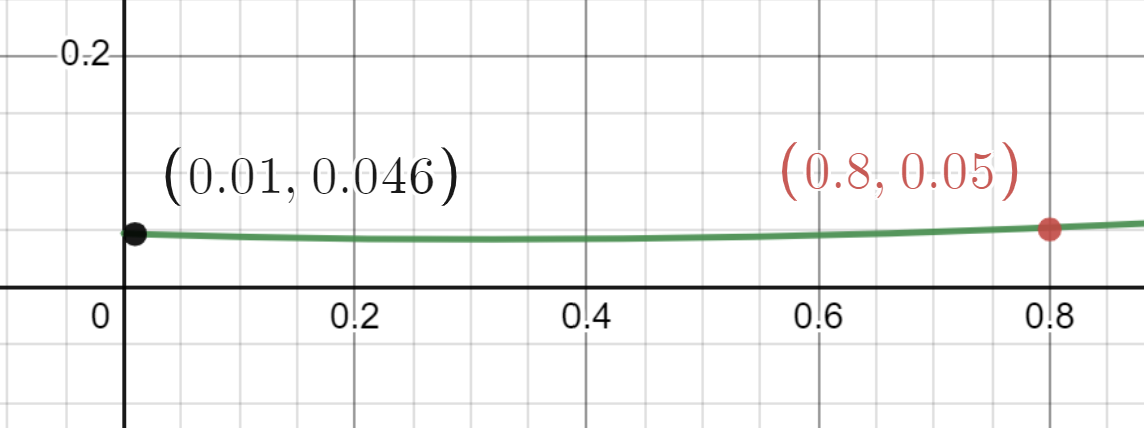}
\caption{$C_2$, represented on the y-axis, for different values of $\xi$, represented on the x-axis.}
\label{constant_xi}
\end{figure}

\begin{remark}
    One can change condition \eqref{eq:explicit_constants_2_main_theorem} so that $\xi'  \in [d_{\min}\ell\kappa^i, d_{\max}\ell\kappa^{i'}]$ for $i, i' \in \N$. Then $C_2$ is taken as the max over $\xi'  \in [d_{\min}\ell\kappa^i, d_{\max}\ell\kappa^{i'}]$. For $i,i'=0$, we obtain the previous result. This can allow better control when choosing $\xi'$ and also means that we can obtain a smaller $C_2$.   
\end{remark}

Now, we discuss the conditions on the DNN. First, the truncation set $B$ of the DNN is a family of sets of hyperplanes in $\R^n$ and we require that no object in the training set be contained in one of these hyperplanes. This is why we say that for a.e. DNN for which $B \in L$ do we have the stability of accuracy. The a.e. is to ensure that no element in $T$ is contained in a hyperplane of one of the sets of $B$. Further, we require that $B \in L$ at $t_0$, which in practice would mean that for each element in $L$ the number of truncations would be very large (i.e. exponential in the layers $l$ of the DNN) and so checking the uniform doubling condition with even one truncation set might not be feasible. 

This is why we introduce the simplified version of the uniform doubling condition and numerically show that, on a model example, this SUDC is correlated with the stability of accuracy, see Subsection \ref{DX_doubling_num}. In this sense, Theorem \ref{weakenedDC1} is still useful because we can check the SUDC on a training set for constants that would make $C_2$ small (assuming the good set is large and the bad set is small). Thus, even though the truncations are being ignored, we can still study numerically how the SUDC provides information on the stability of accuracy of the DNN using $C_2$ as a guide, see Example \ref{DC_vs_loss_example_1}. In Subsection \ref{are_truncations_important} we explain why truncations might not be important when verifying the uniform doubling condition, and why it might be reasonable to assume that the SUDC provides important information for the stability of  DNNs.                        

In conclusion, in this theorem, a training set $T$ which satisfies the uniform doubling condition and a set of DNNs is given, with the DNNs defined by their architecture and parameters. If at $t_0$ a DNN from this set trained on the training set $T$ has high accuracy, in the sense that its good set is big and its bad set is small, then the DNN will have high accuracy for all time. This is because the DNNs loss will be small at $t_0$. In Subsection \ref{DC_delta_X} we show how a doubling condition on $\delta X$ leads to a similar result, and this is because \eqref{eq:cond_B_loss_estimateWDC} gives a bound on the loss while \eqref{acc_loss} bounds accuracy in terms of loss. The doubling condition on $\delta X$ can be checked numerically, but it is much simpler to just look at the loss at $t_0$ and see if it is small, in the sense that equation \eqref{acc_loss} is the reason accuracy is high.

\begin{remark}
  \label{d_min}

 For most DNNs $d_{\min}$ can be very small (almost zero) and so satisfying \eqref{eq:explicit_constants_2_main_theorem} might be difficult. However, it has been shown that one can perform an SVD on the layer matrices of a DNN and remove the small singular values without affecting the accuracy of the DNNs, see \cite{xu2019trained, yang2020learning,xue2013restructuring,cai2014fast,anhao2016svd, berlyand2023enhancing}. This SVD pruning method is a valuable technique when training DNNs. During the training process of a DNN, every few epochs one can apply the SVD on the weight layer matrices, denoted as \(W_i\). Singular values that are smaller than a specified threshold can be pruned (set to zero) to reduce the size of the model. By judiciously selecting an appropriate threshold, one can ensure that the accuracy of the DNN remains largely unaffected, thus achieving a balance between model compression and performance. This pruning technique works well with standard training algorithms such a SGD and Adams; see \cite{berlyand2023enhancing}.

 For example, it was been shown that for DNNs trained on MNIST with a single $1000 \times 1000$ weight layer matrix $W$, one can generally remove the small singular values to make $d_{\min}>\sqrt{700}$ without changing the accuracy of the DNN, see \cite{shmalo2023deep}. In general, many of the small singular values of a DNN can be removed without changing its accuracy, see \cite{yang2020learning, shmalo2023deep,berlyand2023enhancing, staats2023boundary}. After the small singular values are removed, the new DNN will have the stability result given in this theorem, though with much bigger $d_{\min}$. 

Further, for a DNN with a single $n \times n$ weight layer $W$, under the assumption that by removing small singular values of a weight matrix $W$ one is only removing the noise in $W$ while preserving the information contained in $W$, it should be the case that one can remove all singular values of $W$ smaller than $2.858 \sigma_{med}$ where $\sigma_{med}$ is the median empirical singular value of $W$, without decreasing accuracy, see \cite{donoho2013optimal}. More on this will be discussed in a different paper. For more on applying singular value decomposition for DNNs, see \cite{yang2020learning,xue2013restructuring,cai2014fast,anhao2016svd}.        
\end{remark}

\subsection{Some numerical results}
\label{DX_doubling_num}

We verify these results numerically for examples such as Example \ref{example_1}. The given numerical simulations aim to investigate the relationship between the uniform doubling condition of a slab and the loss and accuracy of deep neural networks (DNNs) on various data sets. In these examples, the absolute value and Leaky ReLU activation functions are used and cross-entropy loss function is what we train with. We only check the uniform doubling condition for slabs and don't involve the truncation sets in these numerics. We call this the simplified uniform doubling condition (SUDC).  

Here, the SUCD involves the expansion of the slab's thickness by a factor $\kappa$, followed by counting the number of data points inside the slab. This process is repeated until the number of new data points inside the slab is less than $\sigma+1$ times the number of data points in the previous iteration, see Fig. \ref{SUDC_example1}. The code calculates the average number of doubling steps, the average number of points in the final slab, and the average width of the final slab for multiple slabs. The average width of the final slab is the most important of these quantities and gives us an estimate of what $\beta$ might be for the training data set. From here on, we will call this quantity $\bar{\beta}$ and assume that it is a good estimator for $\beta$. For simplicity, we also sometimes refer to $\bar{\beta}$ as the average width of DC (i.e. the average of the largest widths for which SUDC is satisfied). 

The data sets are generated using the same method as in Example \ref{example_1}. To create variability across the data sets, the number of samples is chosen randomly within a predefined range. In Example \ref{DC_vs_loss_example_1} each data set has between $200$ and $15,000$ objects and the number of objects is chosen uniformly from this range. Each data set comprises data points and corresponding labels, where the data points lie in a two-dimensional space and the labels represent the target class of each point. 

After generating a data set, it is split into training and testing subsets. The number of training samples is also chosen randomly, within a range of $100$ to the total number of samples in the data set. This random selection process ensures that the trained DNNs are exposed to varying amounts of training data, enabling the analysis of the relationship between the doubling condition, loss, and accuracy of the DNNs under different training scenarios.      

For each of the generated data sets, the code trains a DNN and evaluates its performance in terms of loss and accuracy. The SUCD is then checked for the training data sets, and $\bar{\beta}$ is calculated for the training data. The results are collected and sorted by the size of $\bar{\beta}$. For an example of the SUCD applied to a data set, see Fig. \ref{SUDC_example1}. 

\begin{figure}	\includegraphics[scale=.65]{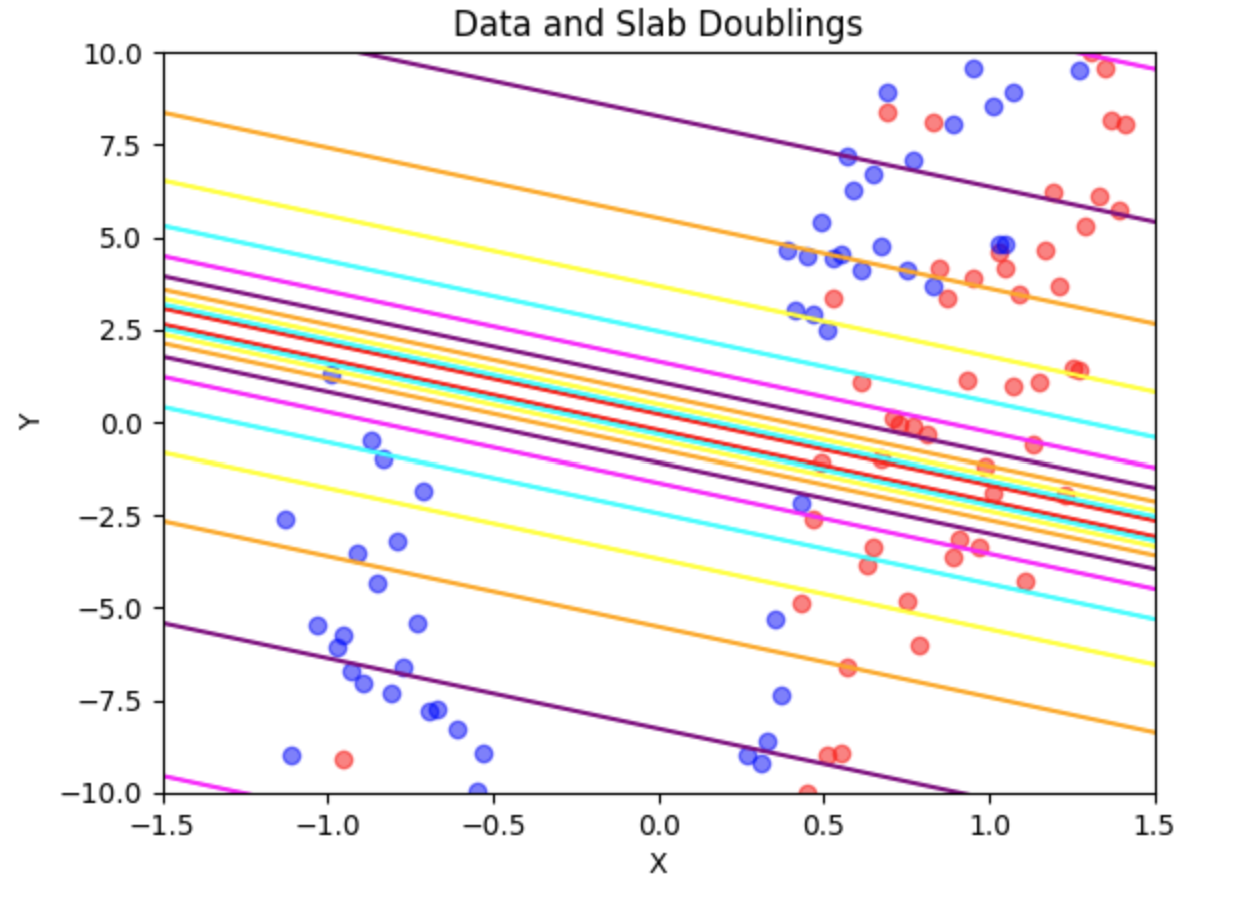}
			\caption{SUDC for data set with two classes, red and blue. We start with a single slab of small width and continue to double it (shown in different colors) until the doubling condition is not satisfied.}
			\label{SUDC_example1}
		\end{figure}

The code proceeds to plot various graphs to visualize the relationships between the doubling condition, loss, and accuracy of the DNNs. 

\begin{ex}
\label{DC_vs_loss_example_1}
    In this example, we take $\delta=1$, $\kappa=2$, $\sigma=.9$, and $\ell=.001$. We train $100$ different DNNs on $100$ different data sets and check the SUCD for all data sets using $50,000$ initial slabs and doubling them until the SUDC is not satisfied. We train the DNNs for $5$ epochs, and each DNN has the same number of layers ($4$ hidden layers with $1000$ nodes in each layer). In Fig. \ref{ACC_VS_CD_stability_1} we show the accuracy of the DNNs on the training set vs $\bar{\beta}$.  Then, we find the average accuracy of the DNNs on the training set and the $50$ DNNs with accuracy closest to this average accuracy. In Fig. \ref{Loss_vs_DC_stability_1} we graph the loss on the training set vs $\bar{\beta}$ (for the training set), for the $50$ chosen DNNs. Fig. \ref{Loss_vs_DC_stability_1} shows that even DNNs with similar accuracy will have lower loss if the data they trained on had more slabs that satisfied this simplified uniform doubling condition (i.e. the $\bar{\beta}$ is bigger). This means that those DNNs, with lower loss, will have greater stability of accuracy. As we see from both graphs, training sets that satisfy the SUDC for more slabs  have that the DNNs trained on them obtain lower loss and, in this example, even higher accuracy after the same epochs of training. 

    One can use $C_2$ to analyze the behavior in Fig. \ref{Loss_vs_DC_stability_1}. The main reduction in loss happens when $\bar{\beta}$ changes from $0$ to $.1$. By increasing the size of $\bar{\beta}$, we increase the possible size of $\eta$. For these DNNs, $d_{\max} \approx 1000$ and $d_{med} \approx 700$ (though there is variation), meaning that taking $\eta<700 \frac{\bar{\beta}}{2}$ is probably reasonable, see \eqref{eq:explicit_constants_2_main_theorem}. That means $0<\eta<35$. We assume that $\delta_0 \approx 1-1.03^{-\eta}$, which means that the size of the good set decreases exponentially with $\eta$, see Fig. \ref{deltaX_final}. Finally, we can assume that $\xi \in [.7,1.5]$ and the bad set can be of size $1.5$. Fig. \ref{C_2forexample1} shows the constant $C_2$ under these assumptions, when $\eta$ varies from $2$ to $16$. We see a similar drop in $C_2$ from $.12$ to $.06$.  In this example, we see that the loss itself does seem to depend on $\bar{\beta}$ though the loss does not lead to tight bounds on the accuracy. Nevertheless, we still obtain important information on the size of the loss. We also see that $C_2$ might provide an explanation for why loss drops so quickly when $\bar{\beta}$ goes from $0$ to $.1$ and then plateaus. This is because $C_2$ itself plateaus when $\eta$ gets large, see Fig. \ref{C_2forexample1}.              

    \begin{figure}[h!]
       \label{acc_loss_DC_stability_1}  \begin{subfigure}[b]{0.4\textwidth}
         \centering
\includegraphics[width=1.4\textwidth]{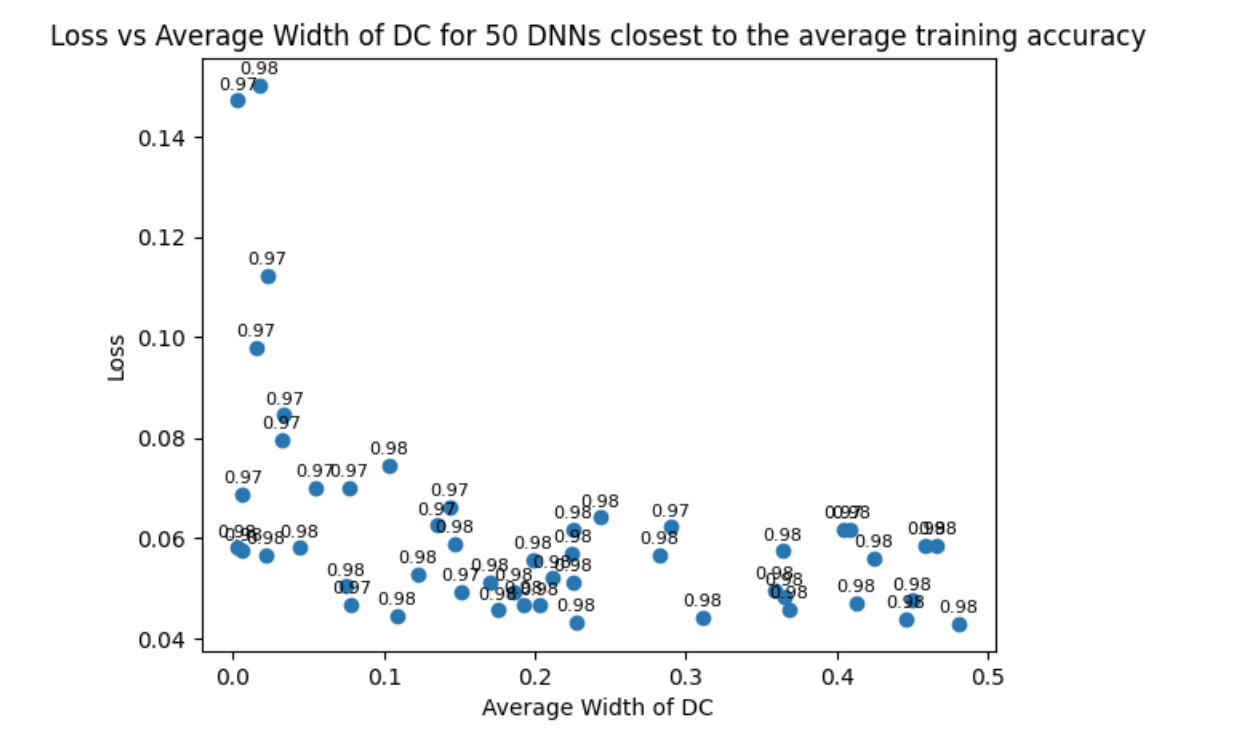}
         \caption{Training loss, represented on the y-axis, vs $\bar{\beta}$ (or average width of DC), represented on the x-axis. }    \label{Loss_vs_DC_stability_1}
     \end{subfigure}
     \hfill
     \begin{subfigure}[b]{0.4\textwidth}
         \centering    \includegraphics[width=1.1\textwidth]{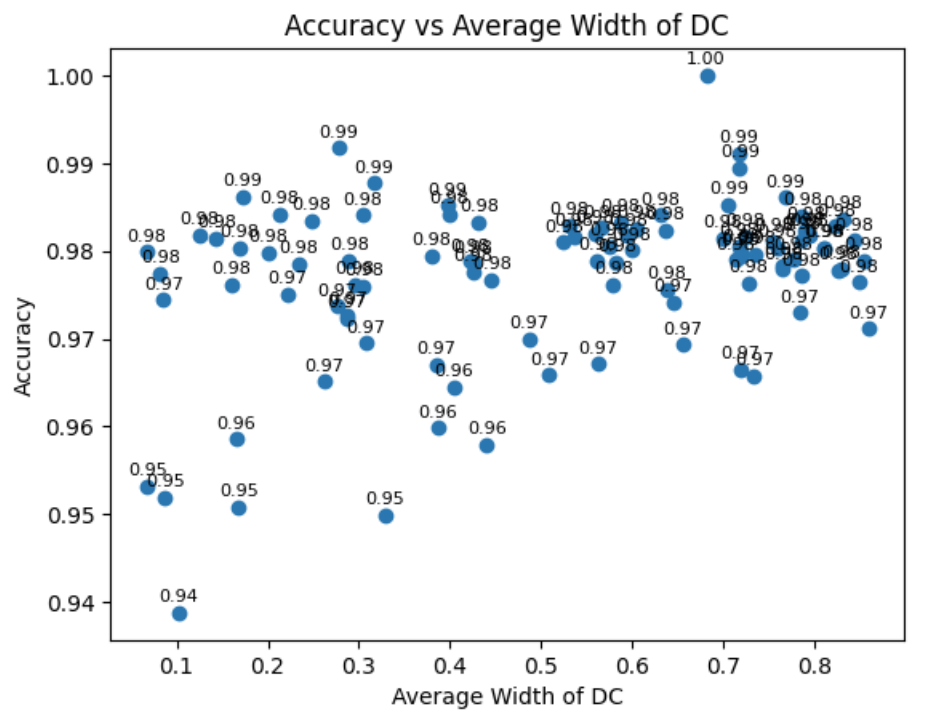}
         \caption{Training accuracy, represented on the y-axis, vs $\bar{\beta}$, represented on the x-axis.}
         \label{ACC_VS_CD_stability_1}
     \end{subfigure}
\caption{DNN training loss and training accuracy vs $\bar{\beta}$.The numbers above the blue points represent the accuracy of the DNNs.}

\end{figure}

	\begin{figure}	\includegraphics[scale=.55]{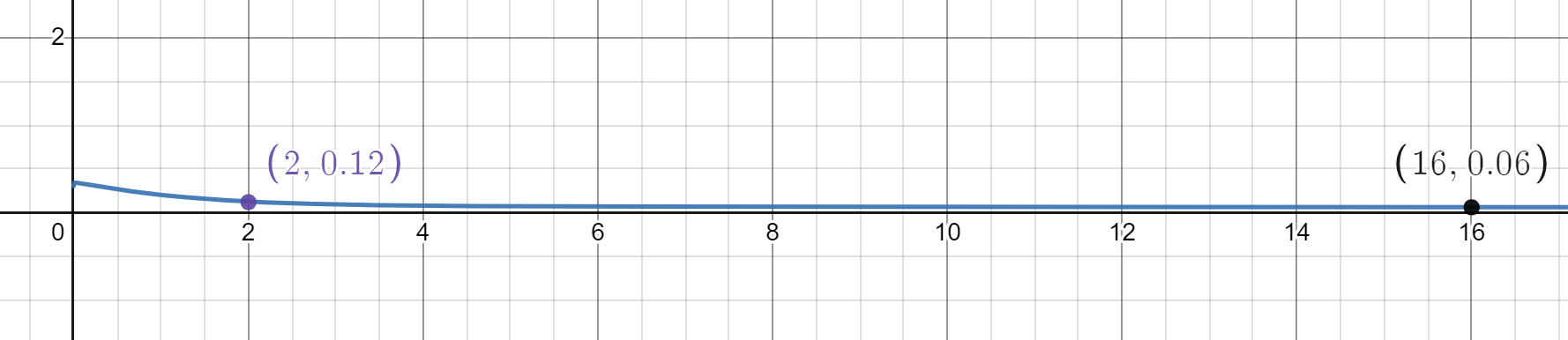}
			\caption{$C_2$, represented on the y-axis, vs $\eta$, represented on the x-axis.}
			\label{C_2forexample1}
		\end{figure}

\end{ex}

\begin{ex}
\label{DC_vs_loss_example_2}
    In this example, we take $\delta=1$, $\kappa=2$, $\sigma=.9$ and $\ell=.003$. We train $100$ different DNNs on $100$ different training sets, each having between $10,000$ to $20,000$ objects, and check the SUDC for all training sets using $50,000$ initial slabs and doubling them until the simplified uniform doubling condition is not satisfied. We train the DNNs for $5$ epochs, and each DNN has the same number of layers ($4$ hidden layers with $1000$ nodes in each layer). In Fig. \ref{ACC_VS_CD_3_stability_2}, we show the accuracy of the DNNs on the training set vs $\bar{\beta}$. Then, we find the average accuracy of the DNNs on the training set and the $50$ DNNs with accuracy closest to this average accuracy, and in Fig. \ref{Loss_vs_DC_3_stability_2} we graph the loss on the training set vs $\bar{\beta}$. Fig. \ref{Loss_vs_DC_3_stability_2} shows that even DNNs with similar accuracy will have lower loss if the data they trained on had more slabs that satisfied this simplified uniform doubling condition. Again, this means that those DNNs, with lower loss, will have greater stability of accuracy.  In this example, accuracy does not seem to depend on the simplified uniform doubling condition, though more rigorous analysis does show a correlation on similar problems, see Example \ref{new_example2} and Subsection \ref{example1_new}.    

    \begin{figure}[h!]
       \label{acc_loss_DC_3_stability_2}  \begin{subfigure}[b]{0.4\textwidth}
         \centering
\includegraphics[width=1.4\textwidth]{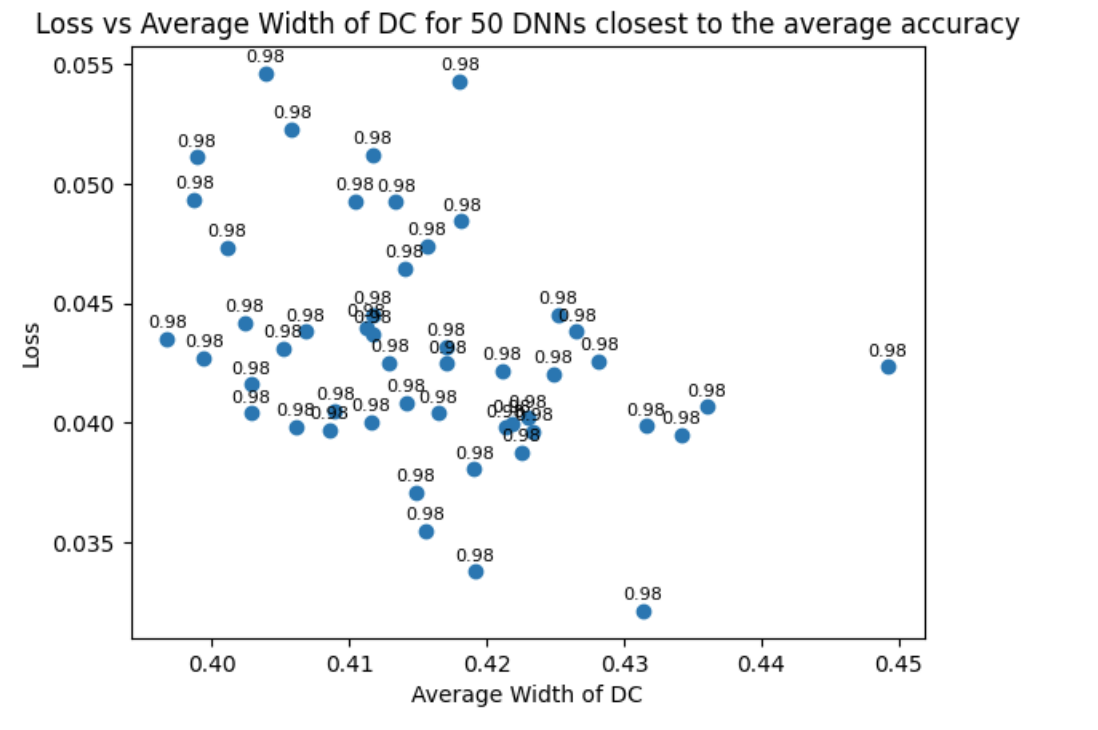}
         \caption{Training loss, represented on the y-axis, vs $\bar{\beta}$, represented on the x-axis.}    \label{Loss_vs_DC_3_stability_2}
     \end{subfigure}
     \hfill
     \begin{subfigure}[b]{0.4\textwidth}
         \centering    \includegraphics[width=1.4\textwidth]{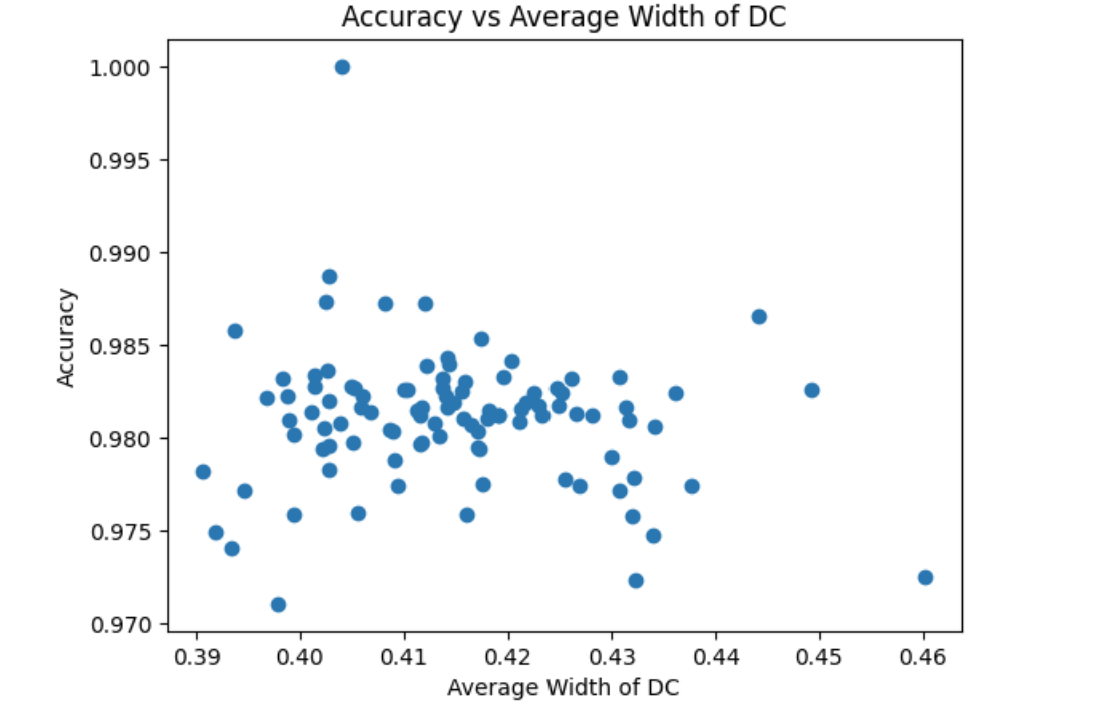}
         \caption{Training accuracy, represented on the y-axis vs $\bar{\beta}$, represented on the x-axis.}
         \label{ACC_VS_CD_3_stability_2}
     \end{subfigure}
\caption{DNN training loss and training accuracy vs $\bar{\beta}$. The numbers above the blue points represent the accuracy of the DNNs.}

\end{figure}

In Fig. \ref{points_vs_DC_1}  we show a graph of the number of points in the training set vs $\bar{\beta}$, to illustrate that there does not seem to be an immediate connection between the number of points in the training set and the SUDC. 

\end{ex}

\begin{ex}
\label{DC_vs_loss_example_3}
    In this example, we take $\delta=1$, $\kappa=2$, $\sigma=.9$ and $\ell=.003$. We train $100$ different DNNs on $100$ different data sets and check the simplified doubling condition for all data sets using $100,000$ initial slabs and doubling them until the SUDC is not satisfied. We train the DNNs for $5$ epochs, and each DNN has the same number of layers ($4$ hidden layers with $1000$ nodes in each layer). In Fig. \ref{ACC_VS_CD_stability_3}, we show the accuracy of the DNN on the test vs $\bar{\beta}$. Then, we find the average accuracy of the DNNs on the training set and the $50$ DNNs with accuracy (on the training set) closest to this average accuracy, and in Fig. \ref{Loss_vs_DC_stability_3} we graph the loss on the training set vs $\bar{\beta}$. Fig. \ref{Loss_vs_DC_stability_3} shows that even DNNs with similar accuracy will have lower loss if the data they trained on had more slabs that satisfied this simplified doubling condition. So again, we have greater stability of accuracy. There does not seem to be a good correlation between $\bar{\beta}$ and accuracy on the test set in this example, though more rigorous analysis does show a correlation on similar problems, see Example \ref{new_example2} and Subsection \ref{example1_new}.

    \begin{figure}[h!]
       \label{acc_loss_DC}  \begin{subfigure}[b]{0.4\textwidth}
         \centering
\includegraphics[width=1.4\textwidth]{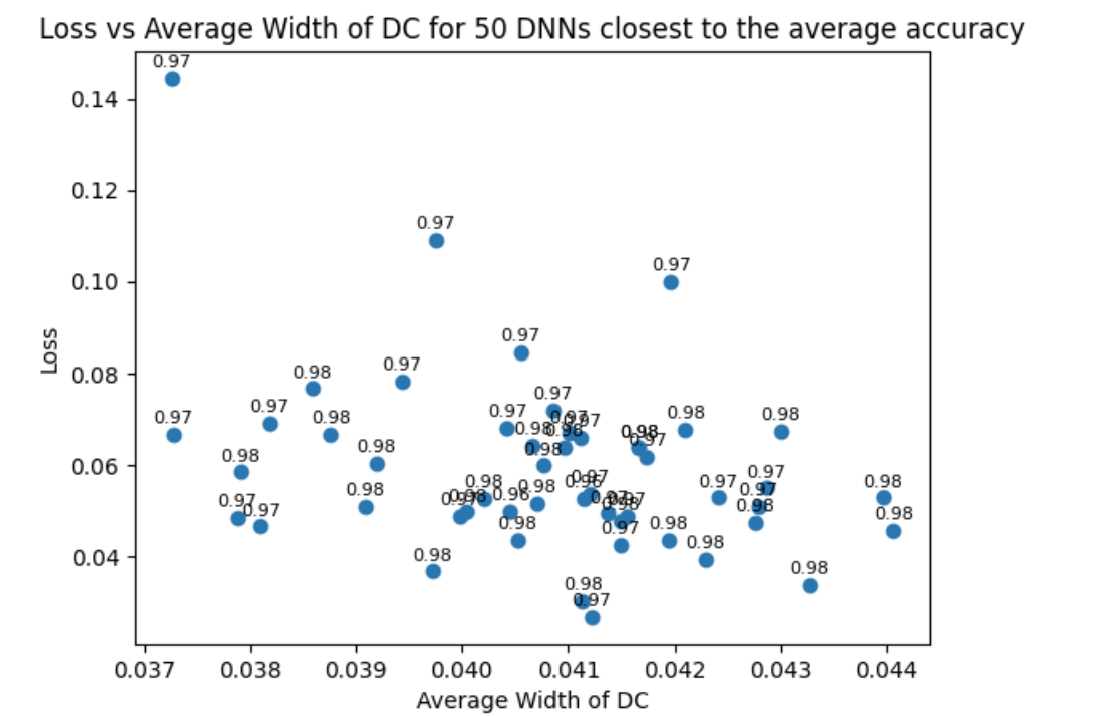}
         \caption{Loss on the training set, represented on the y-axis, vs $\bar{\beta}$, represented on the x-axis.}    \label{Loss_vs_DC_stability_3}
     \end{subfigure}
     \hfill
     \begin{subfigure}[b]{0.4\textwidth}
         \centering    \includegraphics[width=1.4\textwidth]{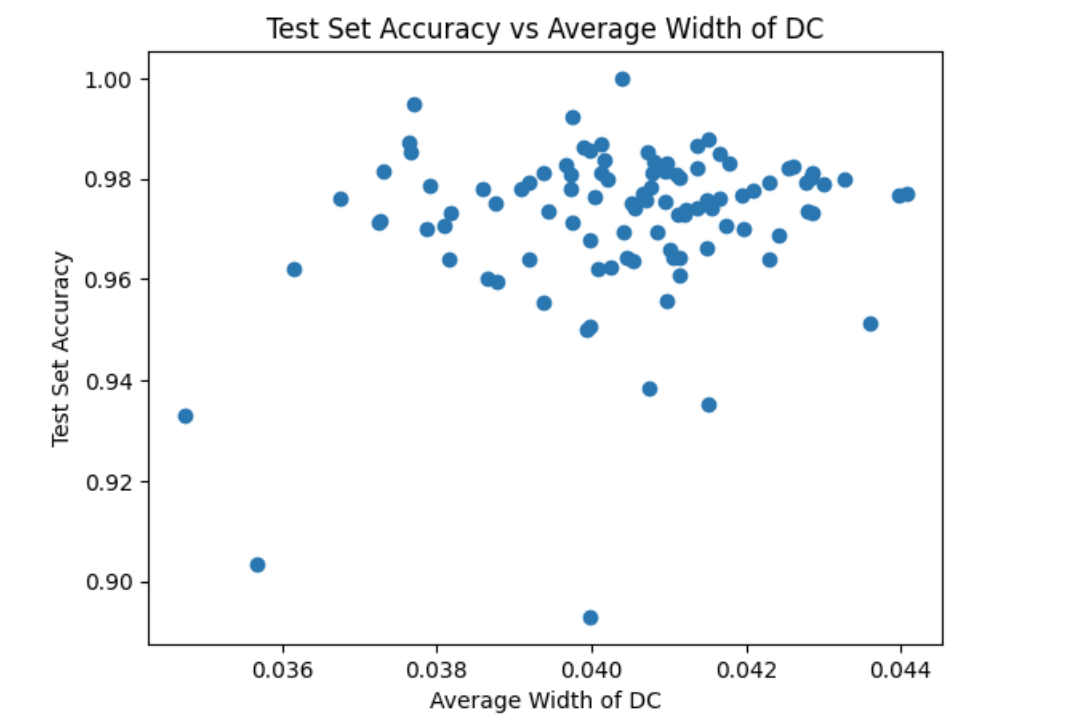}
         \caption{Accuracy on the test set, represented on the y-axis, vs $\bar{\beta}$, represented on the x-axis.}
         \label{ACC_VS_CD_stability_3}
     \end{subfigure}

\caption{DNN loss on training set and accuracy on test set vs $\Bar{\beta}$. The numbers above the blue points represent the accuracy of the DNNs.}

\end{figure}

Fig. \ref{numberofpoints_vs_DC_2} shows a graph of the number of points in the training set vs $\bar{\beta}$, to illustrate that there does not seem to be an immediate connection between the number of points in the training set and the simplified uniform doubling condition.

    \begin{figure}[h!]
       \label{acc_loss_DC_3}  \begin{subfigure}[b]{0.4\textwidth}
         \centering
\includegraphics[width=1.4\textwidth]{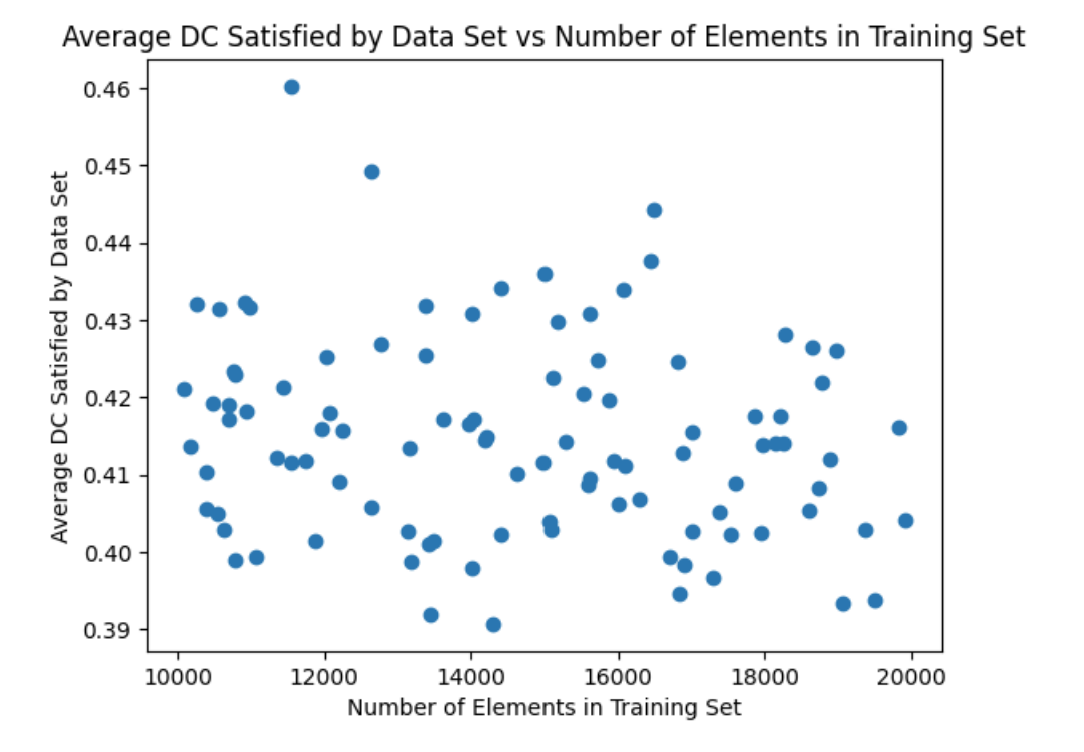}
         \caption{$\bar{\beta}$, represented on the y-axis, vs the number of elements in the training set for Example \ref{DC_vs_loss_example_2}, represented on the x-axis.}    \label{points_vs_DC_1}
     \end{subfigure}
     \hfill
     \begin{subfigure}[b]{0.4\textwidth}
         \centering    \includegraphics[width=1.4\textwidth]{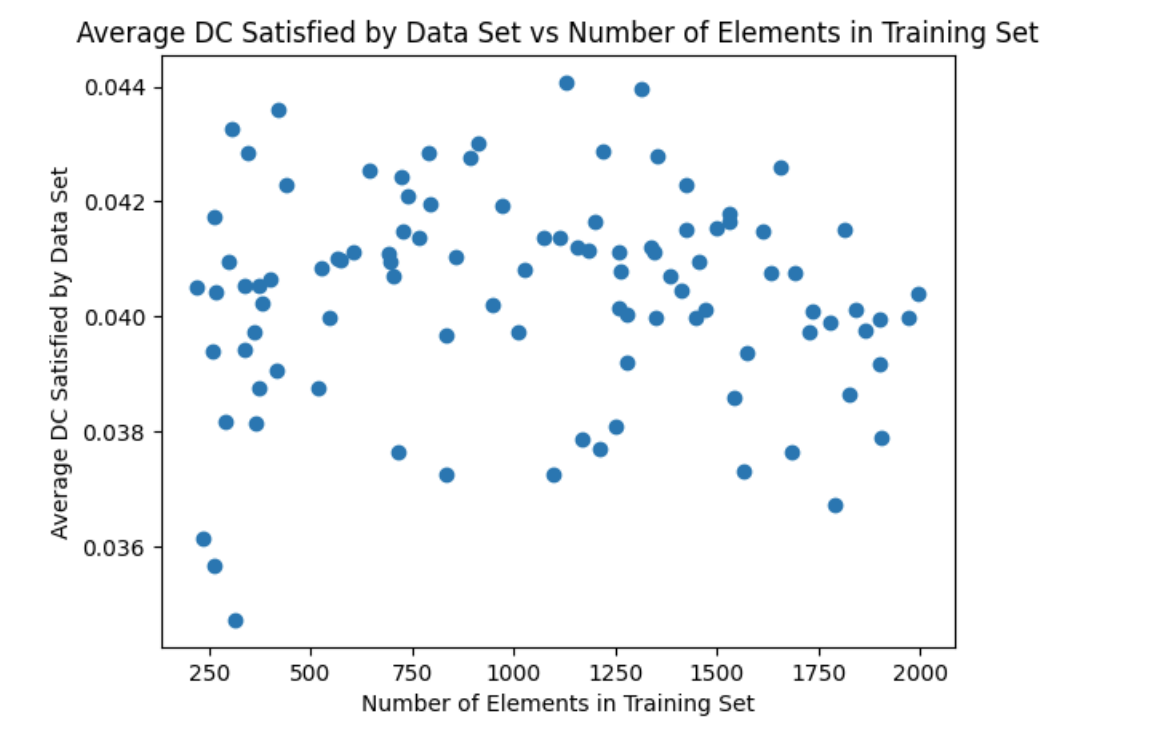}
         \caption{$\bar{\beta}$, represented on the y-axis, vs the number of elements in the training set for Example \ref{DC_vs_loss_example_3}, represented on the x-axis.}
         \label{numberofpoints_vs_DC_2}
     \end{subfigure}
\caption{Number of objects in data set vs $\bar{\beta}$.}
\end{figure}

\end{ex}

\begin{ex}
\label{new_example2}
    In the following example, we perform a similar analysis as the previous examples, but we fit the data to a simple linear regression model to see the underlying correlation between $\bar{\beta}$, loss, and accuracy on the test sets. Here, we use the Leaky ReLU activation function:
\begin{equation}
f(x) = 
\begin{cases} 
x & \text{if } x \geq 0 \\
0.1x & \text{if } x < 0. 
\end{cases}
\end{equation}

Note that while Theorem \ref{weakenedDC1} is given for DNNs with the absolute values activation function, we have that a similar theorem can be obtained for DNNs with the Leaky ReLU activation function, see Section \ref{otheractivationfunctions}.  

 In this example, we configure the parameters as follows:
\begin{itemize}
    \item Number of data sets we train on: $200$
    \item Epochs trained: $10$
    \item \( \kappa \): $2.1$
    \item $\sigma$: $0.9$
    \item Width of the original slab: $0.001$
    \item Number of original slabs: $700,000$
\end{itemize}

See Fig. \ref{acc_loss_DC_test_training} for the results of this numerical simulation. We see a positive correlation between $\bar{\beta}$ and accuracy on the test set. 

 \begin{figure}[h!]
    \centering
    
    \begin{subfigure}[b]{0.45\textwidth}
        \centering
        \includegraphics[width=\textwidth]{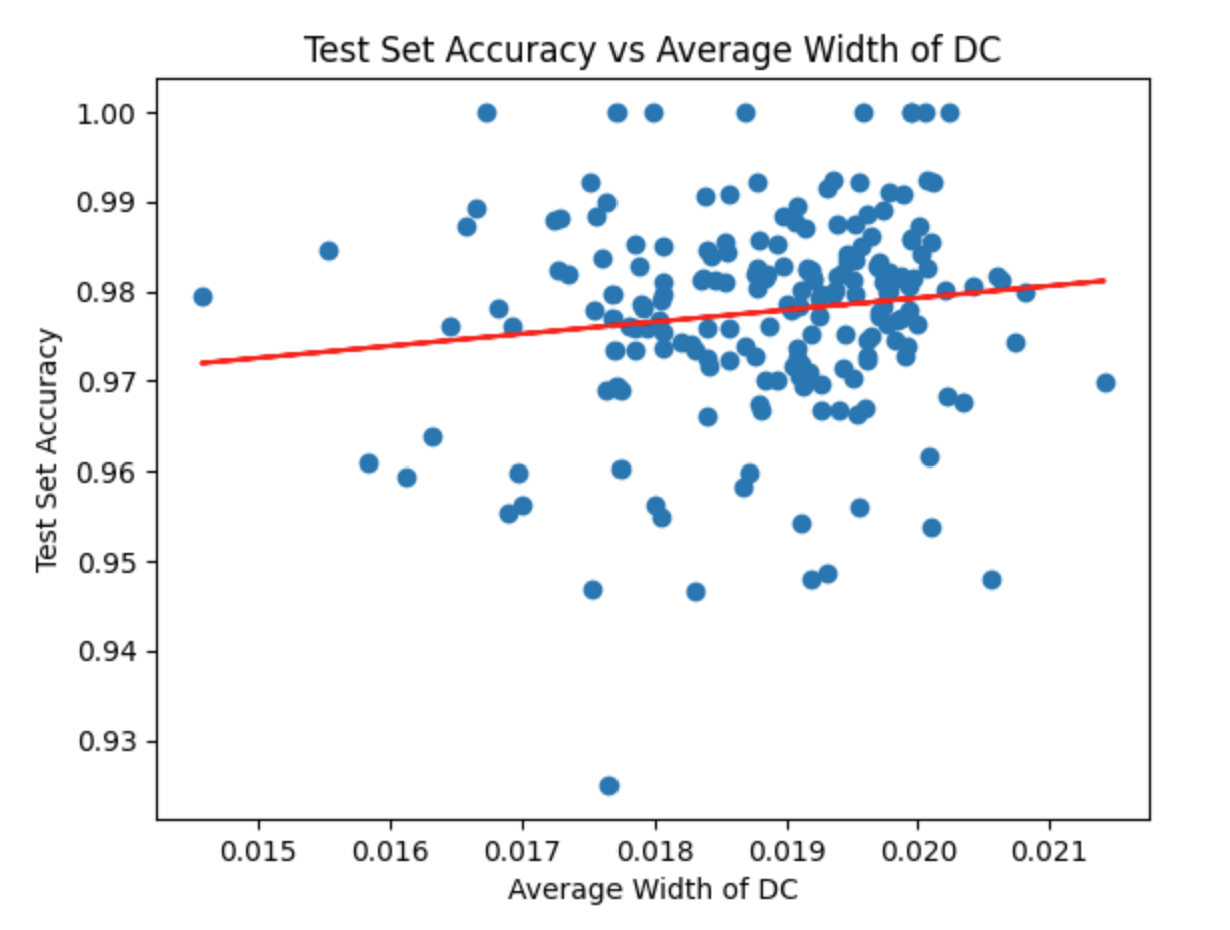}
        \caption{Accuracy on the test set, represented on the y-axis, vs $\bar{\beta}$, represented on the x-axis.}
        \label{acc_vs_DC_example2new}
    \end{subfigure}
    \hfill
    \begin{subfigure}[b]{0.48\textwidth}
        \centering
        \includegraphics[width=\textwidth]{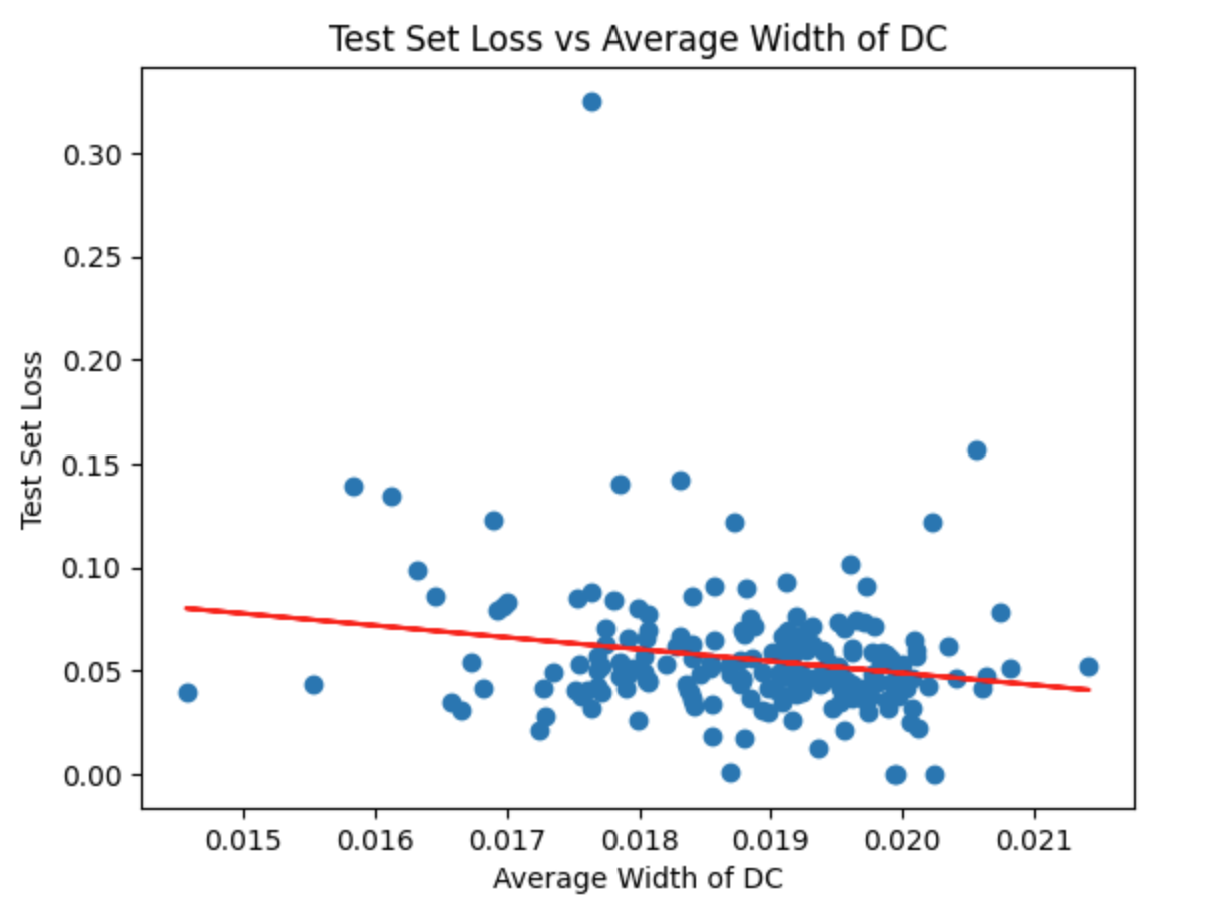}
        \caption{Loss on the test set, represented on the y-axis, vs $\bar{\beta}$, represented on the x-axis.}
        \label{ACC_VS_CD}
    \end{subfigure}

    \vspace{1em} 

    \begin{subfigure}[b]{0.45\textwidth}
        \centering
        \includegraphics[width=\textwidth]{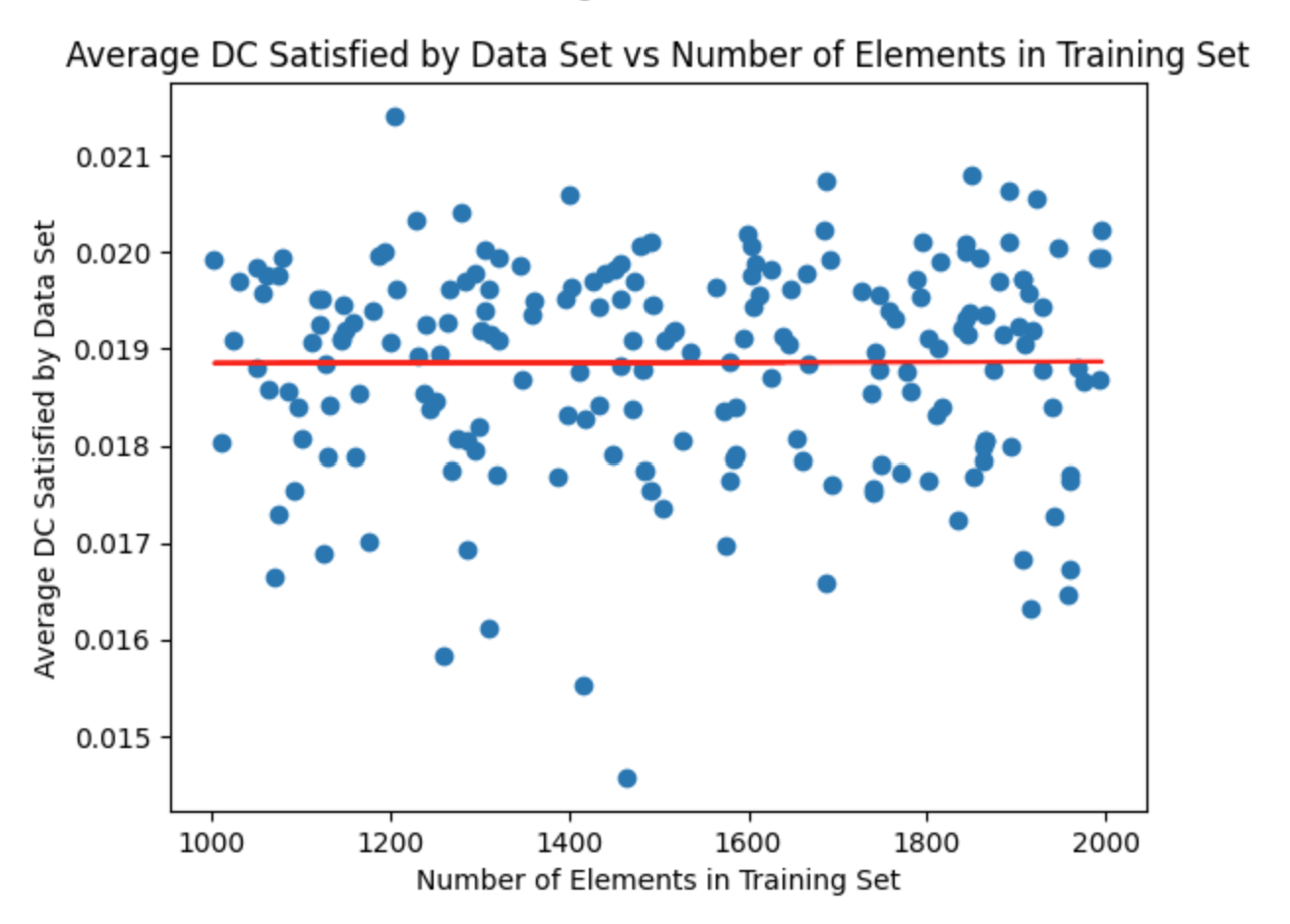}
        \caption{$\Bar{\beta}$ vs number of points in training set.}
        \label{LabelForFourthImage}
    \end{subfigure}

    \caption{DNN loss on test set and accuracy on test set vs $\Bar{\beta}$. }
    \label{acc_loss_DC_test_training}
\end{figure}

\end{ex}

\begin{ex}
\label{new_example3}
    In this example, we perform a similar analysis as the previous examples, again fitting the data to a simple linear regression model to see the underlying correlation between $\bar{\beta}$, loss, and accuracy on the test sets. In this example, we configure the parameters as follows:
\begin{itemize}
    \item Number of data sets we train on: $400$
    \item Epochs trained: $10$
    \item \( \kappa \): $2.1$
    \item $\sigma$: $1$
    \item Width of the original slab: $0.0001$
    \item Number of original slabs: $2,000,000$
\end{itemize}

See Fig. \ref{acc_loss_DC_test_training_3} for the results of this numerical simulation. Again, we see a strong correlation between the SUDC and the accuracy and loss on the test set. 

 \begin{figure}[h!]
    \centering
    
    \begin{subfigure}[b]{0.44\textwidth}
        \centering
        \includegraphics[width=\textwidth]{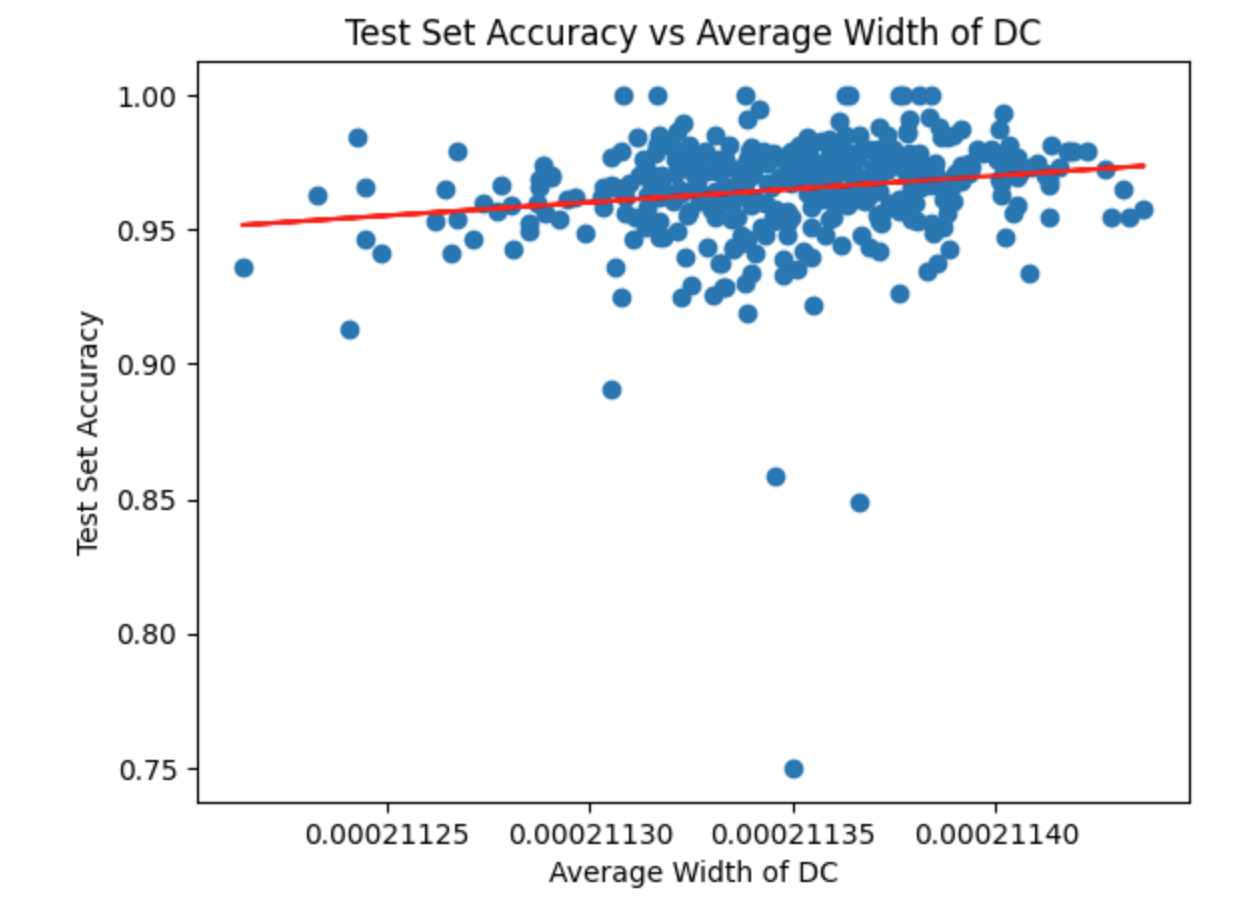}
        \caption{Accuracy on the test set, represented on the y-axis, vs $\bar{\beta}$, represented on the x-axis.}
        \label{acc_vs_DC_example2new}
    \end{subfigure}
    \hfill
    \begin{subfigure}[b]{0.53\textwidth}
        \centering
        \includegraphics[width=\textwidth]{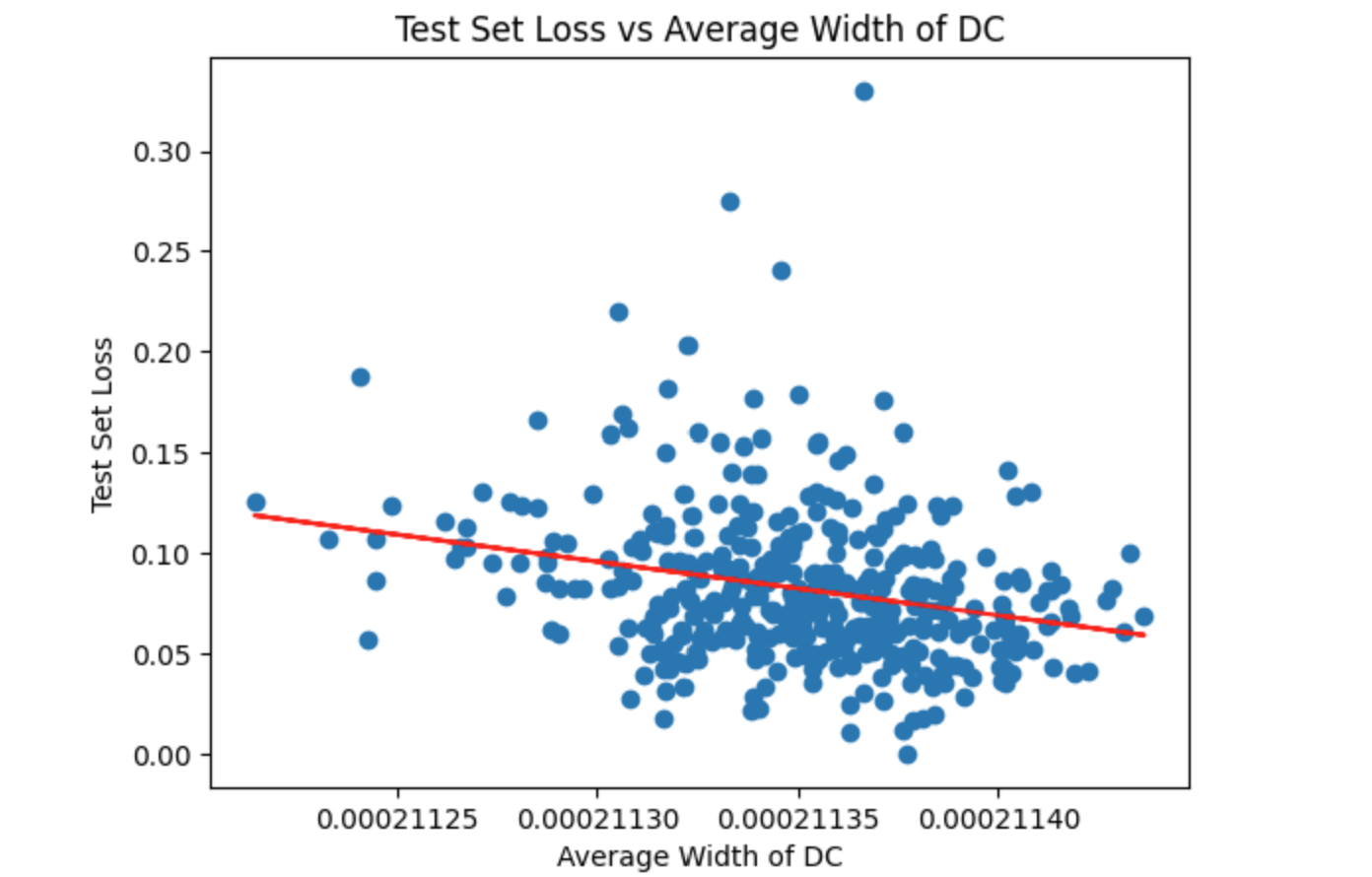}
        \caption{Loss on the test set, represented on the y-axis, vs $\bar{\beta}$, represented on the x-axis.}
        \label{ACC_VS_CD_3}
    \end{subfigure}

    \vspace{1em} 

    \begin{subfigure}[b]{0.45\textwidth}
        \centering
        \includegraphics[width=\textwidth]{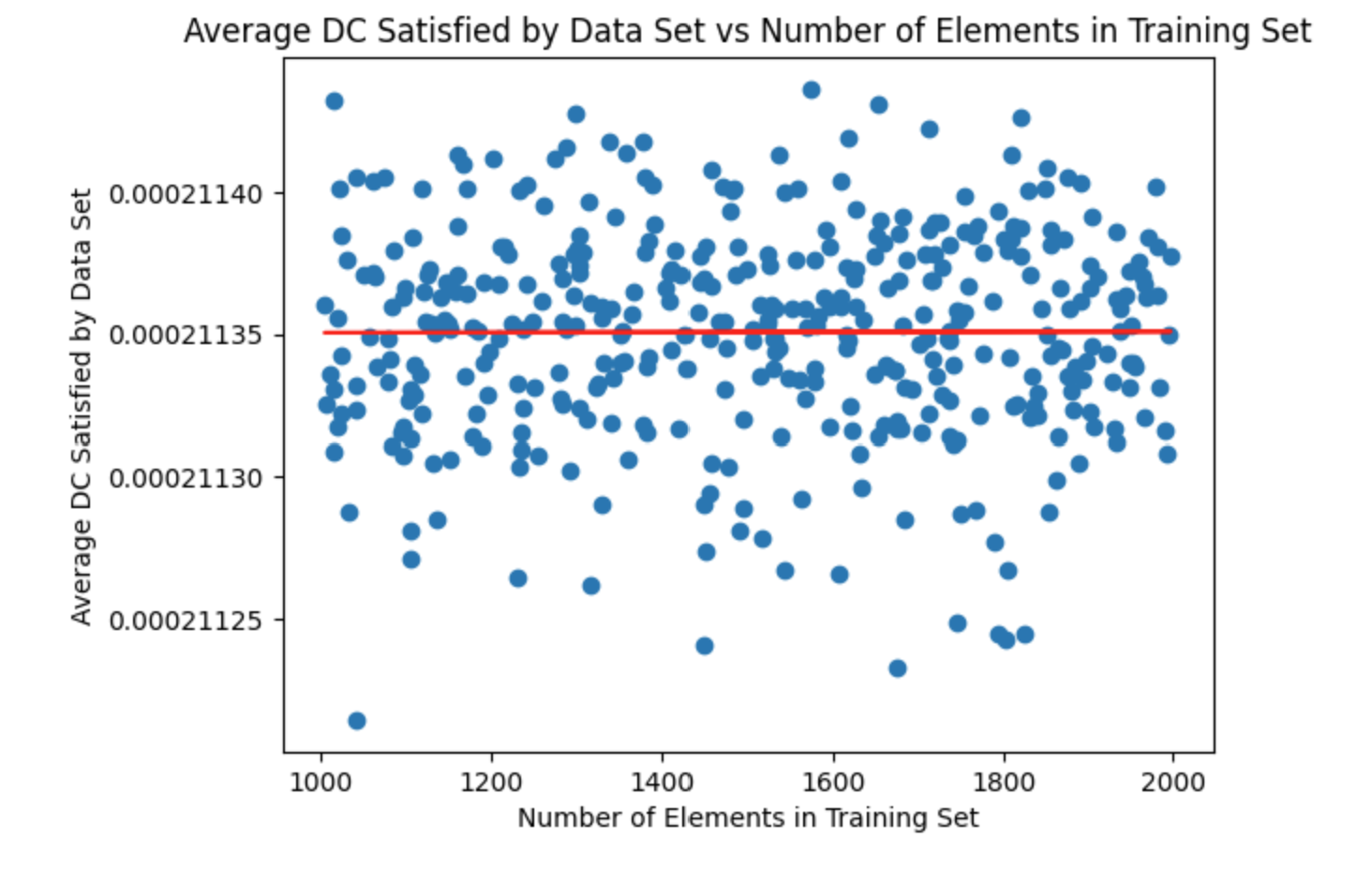}
        \caption{$\Bar{\beta}$ vs number of points in training set.}
        \label{LabelForFourthImage_3}
    \end{subfigure}

    \caption{DNN loss on test set and accuracy on test set vs $\Bar{\beta}$. }
    \label{acc_loss_DC_test_training_3}
\end{figure}

\end{ex}

\subsection{Numerics for a different classification problem}
\label{example1_new}

Given an integer \( M \), let's first define a sequence of random functions. Each function is represented as:

\[
f_i(x) = \cos\left(\pi a_i x + \theta_1^i\right) \times \sin\left(\pi b_i x + \theta_2^i\right).
\]

Where the frequencies \( a_i \) and \( b_i \) are randomly sampled from the discrete set \(\left\{ \frac{1}{5}, \frac{2}{5}, \frac{3}{5}, \frac{4}{5} \right\}\) and the phases \( \theta_1^i \) and \( \theta_2^i \) are sampled from a uniform distribution over the interval \( [0, 2\pi] \). The vector function $f(x)$ has a total of $M$ components $f(x)_i$.

Furthermore, we define a random matrix \( B \in \mathbb{R}^{M \times M} \) where each entry of \( B \) is sampled uniformly from the interval \( [0, 5] \). This matrix interacts with our random functions as follows:

\[
g(x) = B f(x),
\]
where the product represents a matrix-vector multiplication, and \( f(x) \) is a vector of the evaluated random functions at a given \( x \).

 We take a series of \( N \) points  \( x_j \) which are sampled uniformly from the interval \( [-10, 10] \). The classification problem arises when we generate two classes of data points around the curve defined by \( g_1 \), the first component of $g$. For each $j$, we obtain a point in $\R^2$ from one of the two classes, with both classes having an equal number of elements. The first class, represented in red, is generated by taking \( g_1(x_j) \) and subtracting a random positive number from it to obtain $z_j$, whereas the second class, in blue, is obtained by adding a random positive number to the curve.

Our dataset \( D \) is then composed of these points:

\[
D = \{ (x_j, z_j) \}_{j=1}^{N}.
\]

 We use this to generate both a training and test data set with $N$ elements in total. Additionally, Gaussian noise is added to both classes of the training set to scatter the points, see Fig. \ref{fig:sample_data}.

The task is to discriminate between these two classes, recognizing the underlying structure provided by the function \( g_1 \) and the added perturbations. We use a DNN with the same architecture as in the previous examples (see Example \ref{new_example2}) to train on the training set in order to classify the red and blue points. In this example, we randomly create a training set of between $40,000$ to $60,000$ data points and a test set composed of a similar size of data points.

In this example, we configure the parameters as follows:
\begin{itemize}
    \item Number of data sets we train on: $200$
    \item Epochs trained: $10$
    \item \( \kappa \): $2.1$
    \item $\sigma$: $.9$
    \item Width of the original slab: $0.001$
    \item Number of original slabs: $700,000$
\end{itemize}

\begin{figure}[h]
    \centering
    \includegraphics[width=1\textwidth]{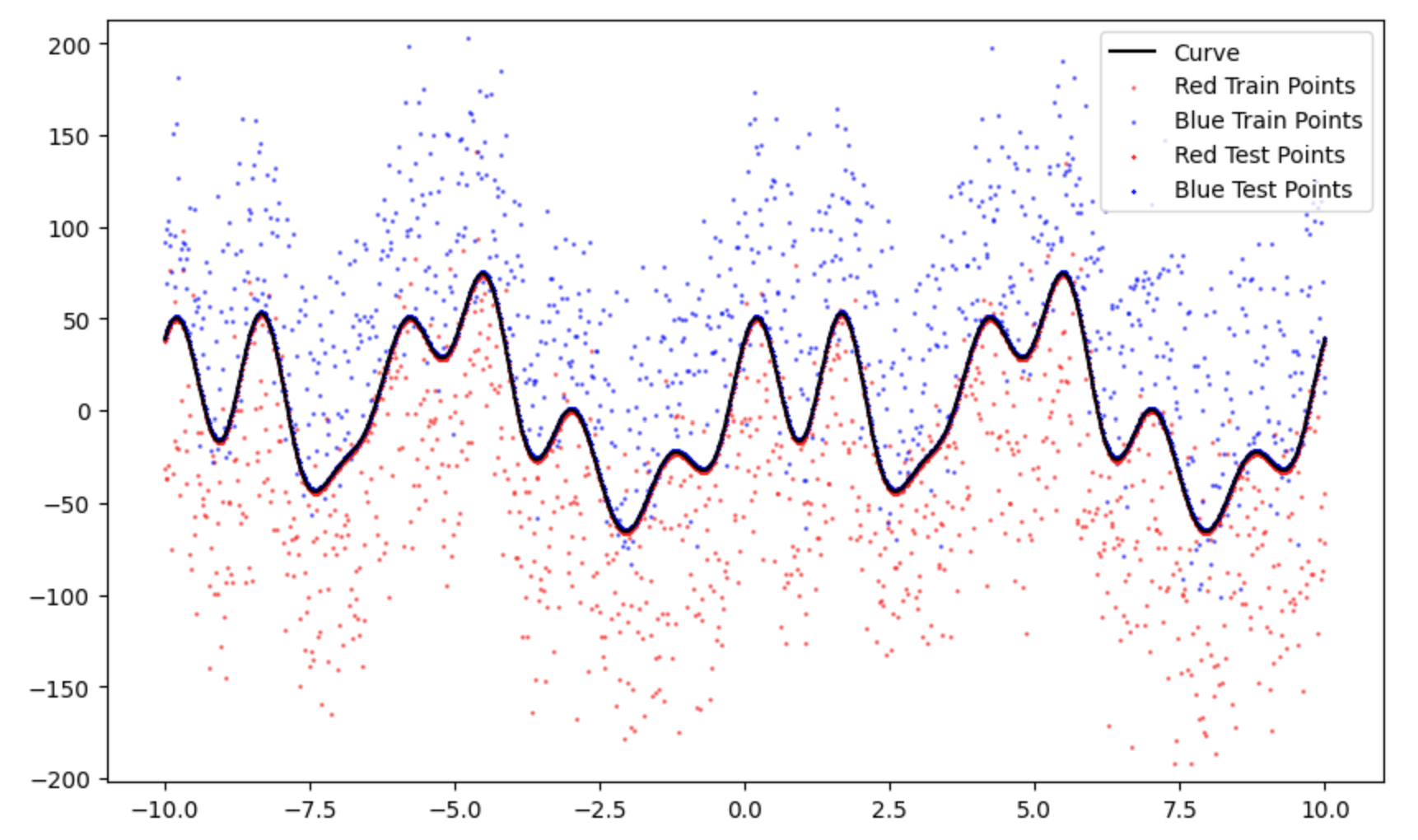}
    \caption{Illustrative example of the generated data. The black curve showcases the function \( g_1(x) \), the red points indicate the first class, and the blue points signify the second class.}
    \label{fig:sample_data}
\end{figure}

 For the numerical results, see Fig. \ref{acc_loss_DC_test_training_example1new}. The classification problem here is more difficult than the previous problem, and we only train for $10$ epoch, leading to low accuracy (a little above $50\%$ for most of the DNNs trained, which is slightly better than pure probability given that this is a classification problem with two classes). Nevertheless, we still see a distinct linear correlation between the loss and accuracy on the test set and $\Bar{\beta}$. Again, we confirmed that no linear correlation existed between the number of data points being trained on and $\Bar{\beta}$.

In future work, we will use a new random matrix theory pruning approach (see \cite{shmalo2023deep, berlyand2023enhancing}) to achieve better accuracy for this simple classification problem, and we will perform this simple SUDC analysis to see if the correlations between $\Bar{\beta}$ and accuracy on the test set still persist in this more complex situation.

 \begin{figure}[h!]
    \centering
    
    \begin{subfigure}[b]{0.45\textwidth}
        \centering
        \includegraphics[width=\textwidth]{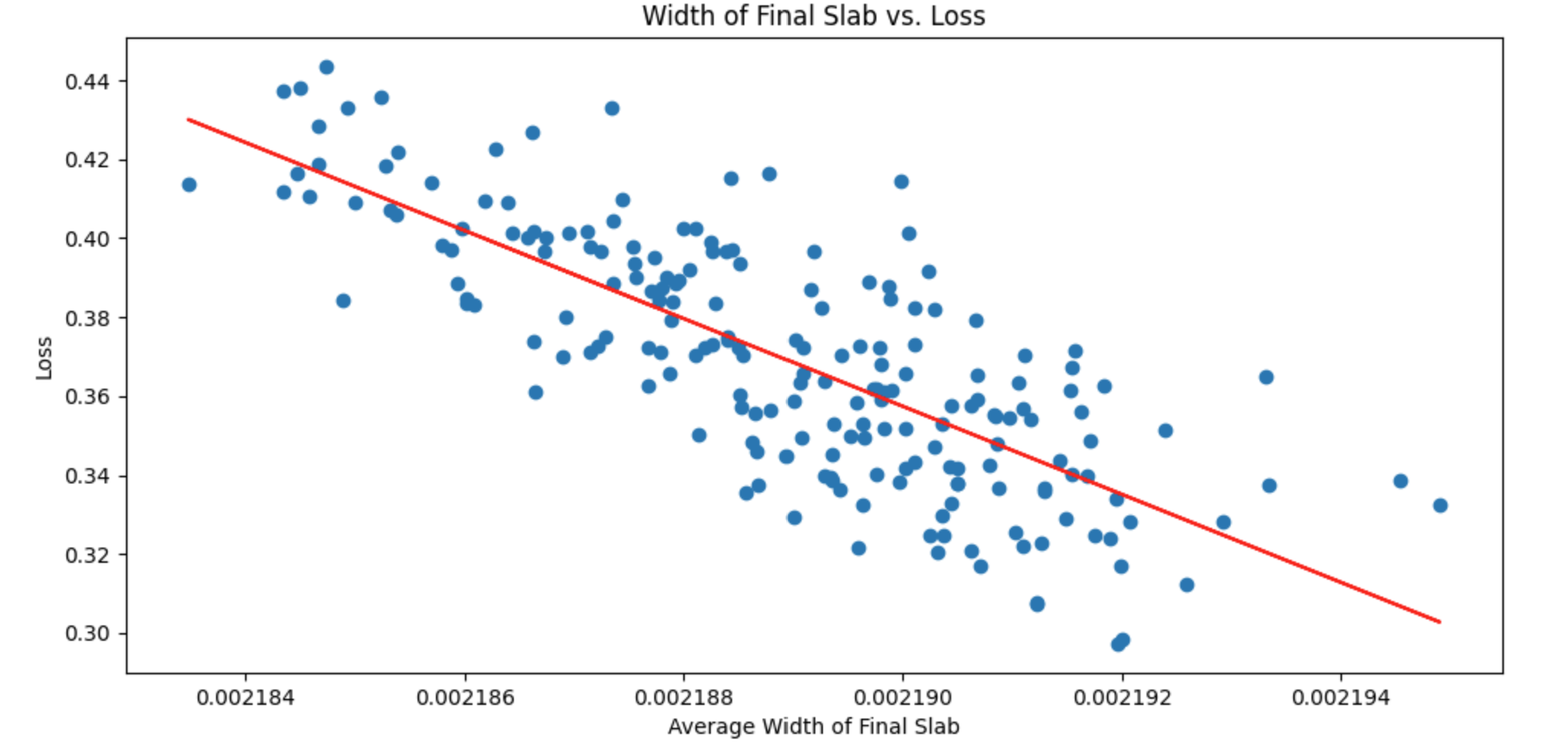}
        \caption{Loss on the training set, represented on the y-axis, vs $\bar{\beta}$, represented on the x-axis.}
        \label{loss_vs_DC_example1new}
    \end{subfigure}
    \hfill
    \begin{subfigure}[b]{0.48\textwidth}
        \centering
        \includegraphics[width=\textwidth]{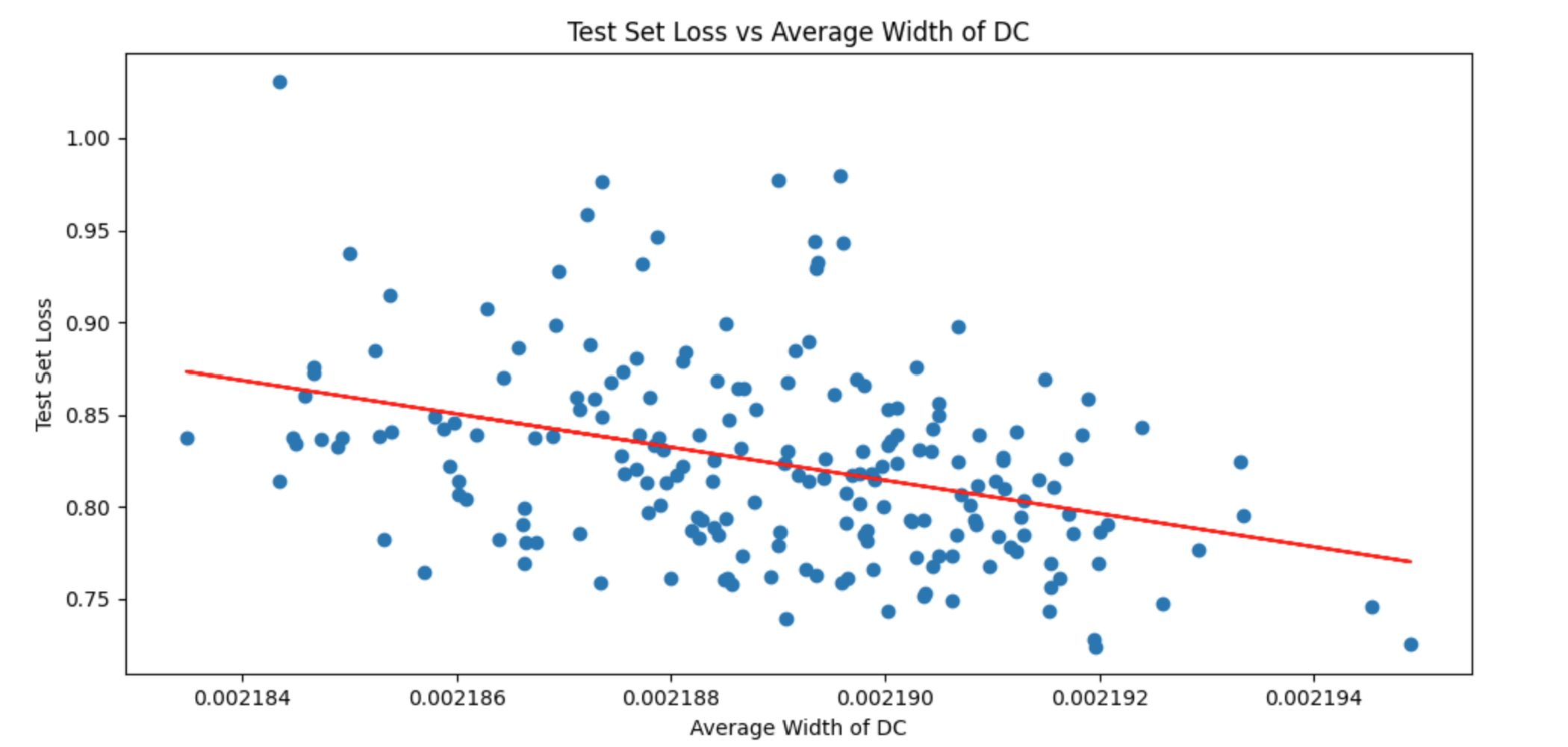}
        \caption{Loss on the test set, represented on the y-axis, vs $\bar{\beta}$, represented on the x-axis.}
        \label{DC_vs_test_loss_new_example1}
    \end{subfigure}

    \vspace{1em} 

    \begin{subfigure}[b]{0.45\textwidth}
        \centering
        \includegraphics[width=\textwidth]{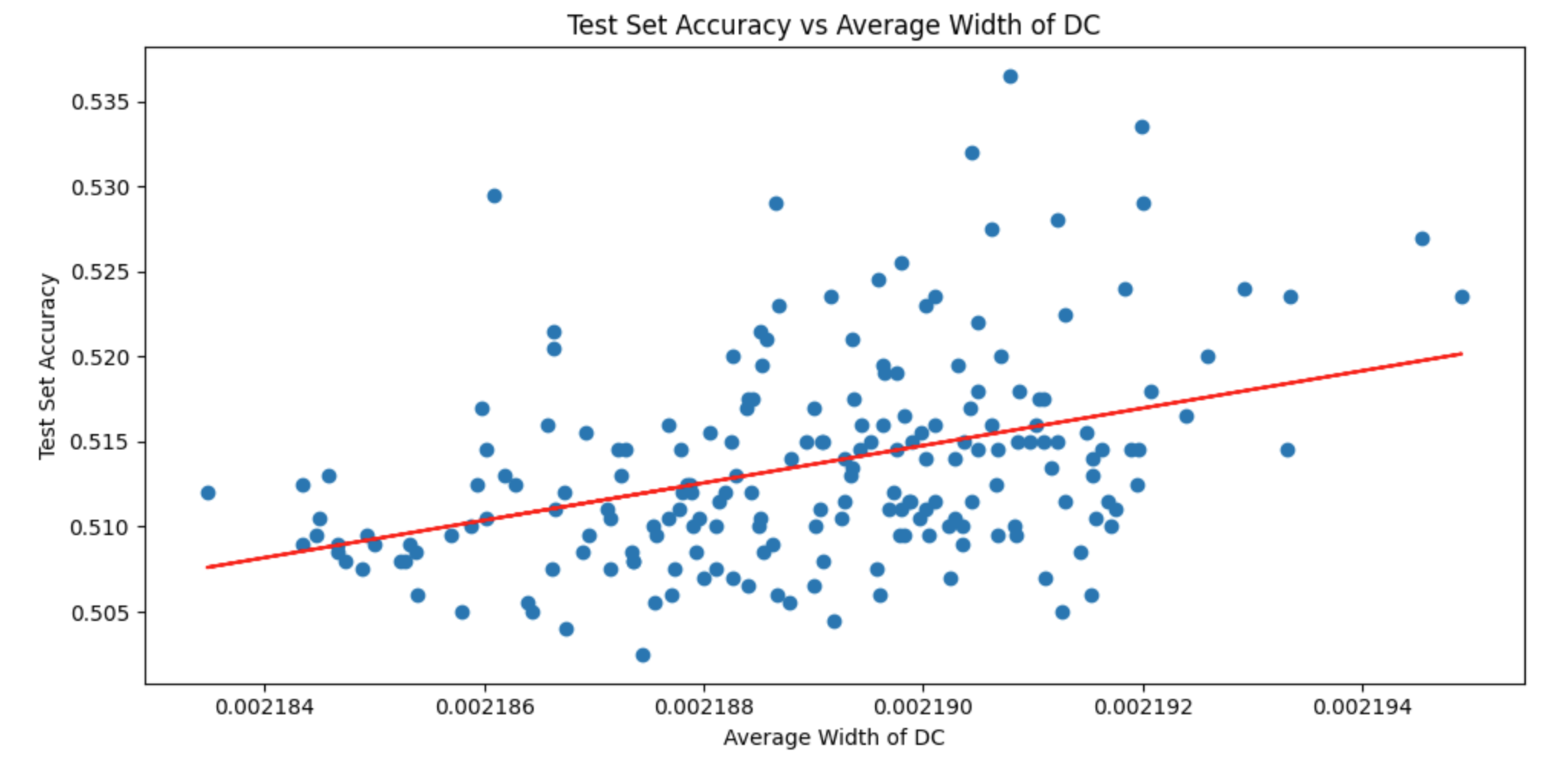}
        \caption{Accuracy on test set vs $\Bar{\beta}$.}
        \label{LabelForFourthImage}
    \end{subfigure}

    \caption{DNN loss on test set and accuracy on test set vs $\Bar{\beta}$. }
    \label{acc_loss_DC_test_training_example1new}
\end{figure}

\subsection{An application of the SUDC}

In the field of machine learning, data augmentation has emerged as a powerful technique to improve the generalization and robustness of deep neural networks. By introducing variations to the original training data, models are encouraged to recognize patterns that are invariant to these modifications, see \cite{goodfellow2014explaining, szegedy2013intriguing, zheng2016improving,thulasidasan2019mixup}. 

The essence of data augmentation is the creation of modified versions of the original dataset. Formally, given a dataset $T = \{(x_i, y_i)\}$, data augmentation yields a new dataset $T' = \{(x_i', y_i')\}$ where the transformations applied to produce $x_i'$ and $y_i'$ are determined by the chosen augmentation strategy. Training typically proceeds either on $T'$ or on an extended dataset, often the union of the original and augmented data, $T \cup T'$.

While numerous studies have underscored the efficacy of various data augmentation strategies in bolstering model robustness  the challenge remains in choosing the right augmentation technique. Factors such as the type and extent of augmentation can significantly influence model performance.

A potential approach to this challenge lies in the utilization of the SUDC. For different augmented training sets derived from $T$, the SUDC can provide a measure, $\bar{\beta}$, which may correlate with the training and generalization capabilities of a model trained on such data. By comparing $\bar{\beta}$ for different augmentation strategies, one could potentially infer which method might be most beneficial. For instance, a higher $\bar{\beta}$ might indicate a more favorable training set for a given problem. Harnessing this insight, one could optimize the augmentation process to yield training datasets that promise better test-time performance. A more exhaustive exploration of this application will be the focus of future work.

\section{Previous results in this direction.}
\label{p_results}

It is useful to review the first results given in this direction. In \cite{LJS} it was shown that small clusters in $\delta X$ lead to instability of accuracy, and so a doubling condition on $\delta X$ was introduced which prevents the existence of small clusters.
    	
    	The following is the doubling condition on $\delta X$:

 \begin{defn}\label{doubling_condition_DX}
The set $\{\delta X(s,\alpha(t_0)):s\in T\}$ satisfies \emph{the doubling condition}  at time $t_0$ if there exists a mass $m_0>0$ and constants $\kappa>1$, and $\delta,\sigma, \beta>0$ such that for $x=0$, and for all intervals $I\subset\mathbb R$ centered at $x=0$ with $|I|\leq \beta$,
		\begin{equation}\label{eq:cond_B}
		\mu\left(\{s\in T:\delta X(s,\alpha(t_0))\in\kappa I\}\right)\geq\min\Big\{\delta,\;\max\big\{m_0,\;(1+\sigma)\,\mu\left(\{s\in T:\delta X(s,\alpha(t_0))\in I\}\right)\big\}-m_0\Big\},
		\end{equation}
		where $\kappa I=I_{\kappa}$ is the interval with the same center as $I$ but whose width is multiplied by $\kappa$.
	
\end{defn}

Essentially, this doubling condition requires that 

\begin{equation}\label{eq:cond_B}
		\mu\left(\{s\in T:\delta X(s,\alpha(t_0))\in\kappa I\}\right)\geq (1+\sigma)\,\mu\left(\{s\in T:\delta X(s,\alpha(t_0))\in I\}\right),
		\end{equation}
		for all intervals $I$ centered at $x=0$. This means that as we double the interval $I$ by the constant $\kappa$ (which is therefore referred to as the doubling constant), we must capture more data than what was originally in $I$.
		
		In Fig. \ref{NSDC} we see that $\delta X$ satisfies the doubling condition at $x=0$, however, it does not satisfy the doubling condition for the intervals $I$ and $\kappa I$ shown in the figure. This is because as we double $I$ by the constant $\kappa$ to obtain $\kappa I$ we don't capture any more of the data.    

	\begin{figure}
			\includegraphics[scale=.7]{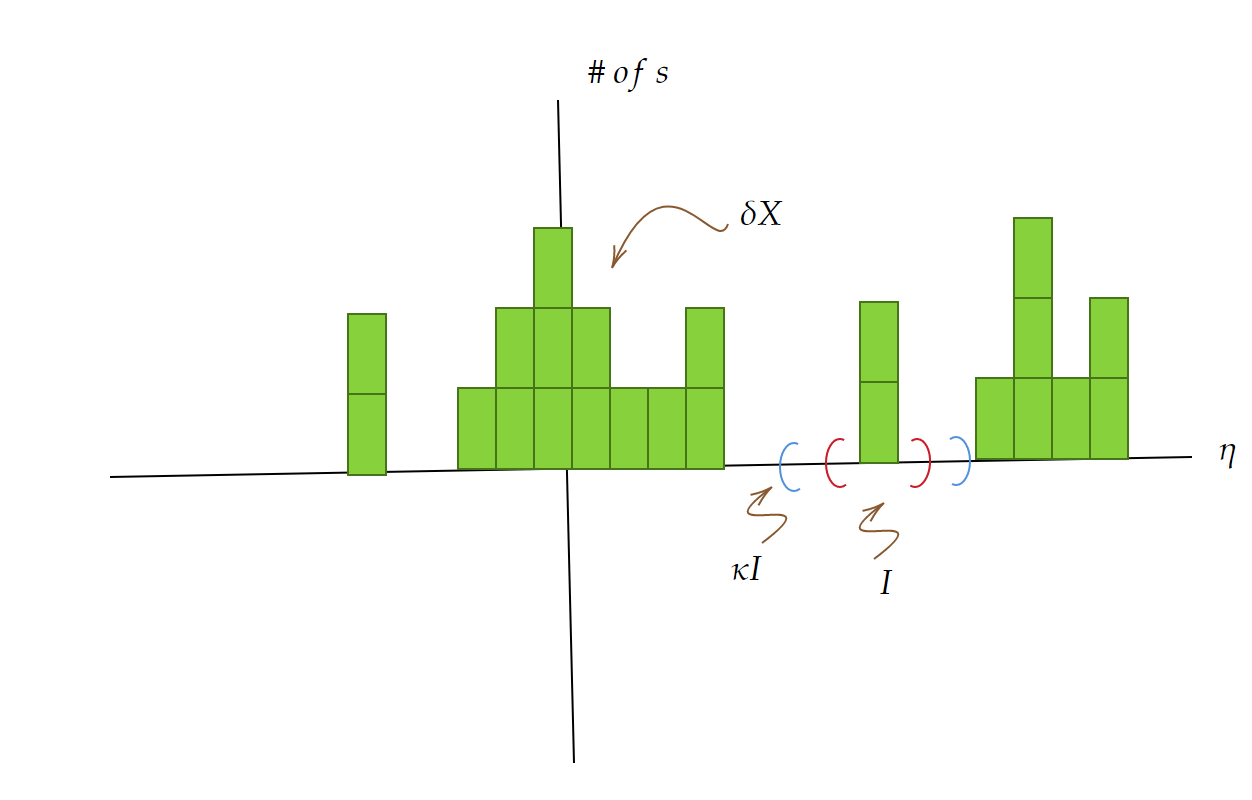}
			\caption{An example of a histogram of $\delta X(T)$ with slabs (in this case intervals) $\kappa I$ and $I$ not satisfying the doubling condition.}
			\label{NSDC}
		\end{figure}
		
We call $m_0$ the residual mass. If $(1+\sigma)\mu(I)\leq m_0$ the doubling condition is automatically satisfied, and so we can think of $m_0$ as the smallest mass for which the doubling condition must be satisfied.

This doubling condition on $\delta X$ provides for the stability result given in Theorem \ref{thm:stabilityB}, however, one can only verify the doubling condition after training has commenced (in order to obtain $\delta X$). Thus, one must know the architecture of the DNN and its parameters to obtain $\delta X$ and verify whether the DNN has the stability of accuracy during training. As we will see, a doubling condition on the training set directly can also be used to ensure stability.

	 We now state the stability of accuracy result with the doubling condition on $\delta X$. 
  
  \begin{thm}[Berlyand, Jabin, Safsten, 2021] \label{thm:stabilityB}
		Assume that the set $\{\delta X(s,\alpha(t_0)):s\in T\} \subset \R^1$ satisfies the doubling condition given in Def. \ref{doubling_condition_DX} with constants $\beta, \sigma, \kappa, m_0, \delta$ at point $x=0$ at some time $t_0$.
		Then for every $\varepsilon>0$ there exists a constant $C_1=C_1(\gamma,\eta, K ,\kappa,\sigma)$ such that if
		\begin{equation}\label{eq:explicit_constants_2}
		m_0\leq \frac{\eps}{2C_1},\quad \kappa(\log \frac{1}{\eps}+C_1)\leq \eta\leq \frac{\beta}{2},\quad \delta_0\leq \frac{\eps}{C_1}\eta^\gamma,
		\end{equation}
		  and if the good and bad sets at $t=t_0$ satisfy
				\begin{equation}\label{good_set_big_and_bad_set_empty_B_1}
				\mu(G_\eta(t_0))>1-\delta_0\quad\text{and}\quad B_{-1}(t_0)=\emptyset,
				\end{equation}
				then for all $t\geq t_0$,
				\begin{equation}\label{eq:stability_B}
				\text{acc}(t)=\mu(G_0(t))\geq 1-\varepsilon.
				\end{equation}
				and
		\begin{equation}\label{eq:stability_generalized_2}
		\mu(G_{\eta^*}(t))\geq 1-2\,\log 2\,\eps\,e^{\eta^*},
		\end{equation}
		for all $\eta^\ast>0$.
	\end{thm}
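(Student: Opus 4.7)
The plan is to follow the implication chain emphasized in the introduction: the hypotheses at $t_0$ force $\text{loss}(t_0)$ to be small; loss is non-increasing along training, so $\text{loss}(t)\leq\text{loss}(t_0)$ for $t\geq t_0$; and then \eqref{acc_loss} converts small loss into high accuracy. Only the first step is substantive, and the whole proof reduces to an explicit bound on $\text{loss}(t_0)$ in terms of $\kappa,\sigma,\delta,m_0,\beta,\eta,\delta_0,K$.

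First I would record two pointwise bounds relating the cross-entropy to $\delta X$. Writing $p_{i(s)}=1/\bigl(1+\sum_{j\neq i(s)}e^{X_j-X_{i(s)}}\bigr)$ and noting that $X_j-X_{i(s)}\leq -\delta X(s,\alpha)$ for all $j\neq i(s)$ with equality for the arg-max, one obtains
\[
\log\bigl(1+e^{-\delta X(s,\alpha)}\bigr)\ \leq\ -\log p_{i(s)}(s,\alpha)\ \leq\ \log\bigl(1+(K-1)e^{-\delta X(s,\alpha)}\bigr).
\]
The upper bound drives the estimate of $\text{loss}(t_0)$ and the lower bound drives the refined conclusion \eqref{eq:stability_generalized_2}.

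The heart of the proof is the iterated doubling estimate. Let $I^{(j)}$ be the interval of width $\eta\kappa^{-j}$ centered at $0$; the hypothesis $\eta\leq\beta/2$ guarantees $|I^{(j)}|\leq\beta$ for every $j\geq 0$, so \eqref{eq:cond_B} applies with $I=I^{(j+1)}$ and $\kappa I=I^{(j)}$. Because $I^{(0)}\subset G_\eta(t_0)^c$, we have $\mu\{\delta X\in I^{(0)}\}\leq\delta_0<\delta$, so the $\delta$-saturation branch in \eqref{eq:cond_B} never activates and the condition unrolls into the recurrence
\[
\mu\{\delta X\in I^{(j+1)}\}\ \leq\ \frac{\mu\{\delta X\in I^{(j)}\}+m_0}{1+\sigma},
\]
which iterates to $\mu\{\delta X\in I^{(j)}\}\leq \delta_0(1+\sigma)^{-j}+m_0/\sigma$. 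I would then decompose $T$ into $G_\eta(t_0)$, the annular shells $A_j=\{\delta X\in I^{(j)}\setminus I^{(j+1)}\}$ for $0\leq j<J$, and the innermost tail $I^{(J)}\cap\{\delta X\geq -1\}$, where $J$ is chosen so that $\eta\kappa^{-J}$ is of order $1$ and where $B_{-1}(t_0)=\emptyset$ is used to control the negative tail. On $G_\eta(t_0)$ the upper pointwise bound contributes at most $\log(1+(K-1)e^{-\eta})$; on each shell $A_j$ it contributes at most $\log(1+(K-1)e^{\eta\kappa^{-j}/2})$, which is of order $\eta\kappa^{-j}+\log K$. Summing against the geometric mass decay gives a convergent series whose value, up to constants depending on $\kappa,\sigma,K,\gamma$, is bounded by $C_1\bigl(\delta_0\,\eta^{-\gamma}+m_0\bigr)$, and so
\[
\text{loss}(t_0)\ \leq\ C_1\bigl(\delta_0\,\eta^{-\gamma}+m_0\bigr)\ +\ \log\bigl(1+(K-1)e^{-\eta}\bigr).
\]
The three scalings in \eqref{eq:explicit_constants_2} are engineered so that each of these three summands is bounded by a constant times $\varepsilon\log 2$: $m_0\leq\varepsilon/(2C_1)$ handles the residual, $\delta_0\leq(\varepsilon/C_1)\eta^\gamma$ handles the geometric part, and $\eta\geq\kappa(\log(1/\varepsilon)+C_1)$ handles the good-set tail. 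This gives $\text{loss}(t_0)\leq\varepsilon\log 2$ and hence \eqref{eq:stability_B} via monotonicity of loss and \eqref{acc_loss}.

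The refined bound \eqref{eq:stability_generalized_2} follows from a one-line Markov-type argument using the lower pointwise bound: for $s\notin G_{\eta^*}(t)$ one has $-\log p_{i(s)}\geq\log(1+e^{-\eta^*})\geq \tfrac12 e^{-\eta^*}$, whence $\mu(T\setminus G_{\eta^*}(t))\leq 2e^{\eta^*}\text{loss}(t)\leq 2e^{\eta^*}\varepsilon\log 2$. The main obstacle, in my view, is the bookkeeping in the dyadic iteration: tracking the residual $m_0$ and the cutoff to the tail $\{-1\leq\delta X<0\}$ simultaneously, identifying the correct polynomial rate $\eta^{-\gamma}$ (which comes from matching the geometric decay $(1+\sigma)^{-j}$ against the linear growth $\eta\kappa^{-j}$ of the per-shell pointwise bound and choosing $J$ optimally), and packaging the $\kappa,\sigma,\eta,K,\gamma$ dependence into a single constant $C_1$ so that the three clean conditions in \eqref{eq:explicit_constants_2} fall out.
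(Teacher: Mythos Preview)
This theorem is quoted from \cite{LJS} and is not re-proved in the present paper; what the paper does prove is the closely analogous Proposition~\ref{lemmadoublingconditionondeltaX} (via Lemma~\ref{lem:loss_bound_3} and Lemma~\ref{Upper_Bound_on_Loss}), where the doubling condition with residual mass $m_0$ is replaced by the uniform doubling condition with scaling factor $\ell$. Your outline matches that argument essentially step for step: the pointwise sandwich $\log(1+e^{-\delta X})\leq -\log p_{i(s)}\leq\log(1+(K-1)e^{-\delta X})$ is exactly the content of the paper's Appendix~A derivation of \eqref{eq:loss_estimates}; the dyadic iteration of the doubling inequality starting from $\mu(G_\eta^c)\leq\delta_0<\delta$ is the paper's derivation of \eqref{eq:interval_size_estimate_3WDC}--\eqref{eq:interval_size_estimate_5WDC}; the shell decomposition and summation against the per-shell upper bound is how the paper arrives at \eqref{eq:cond_B_loss_estimateWDC}; and your Markov-type step for \eqref{eq:stability_generalized_2} is precisely \eqref{chebyshev} combined with $\log(1+e^{-\eta^*})\geq\tfrac12 e^{-\eta^*}$.

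The only substantive addition you need over the paper's uniform-condition proof is the one you already supply: carrying the residual $m_0$ through the recurrence to get $\mu\{\delta X\in I^{(j)}\}\leq\delta_0(1+\sigma)^{-j}+m_0/\sigma$ instead of just $\delta_0(1+\sigma)^{-j}$, which is exactly what produces the separate condition $m_0\leq\varepsilon/(2C_1)$. Your identification of the exponent $\gamma\leq\log(1+\sigma)/\log\kappa$ as the rate at which the geometric mass decay beats the per-shell growth is also correct and matches \eqref{gamma}. So the proposal is sound and aligned with the paper's method; the ``bookkeeping'' you flag at the end is genuinely the only work left.
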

	
We take any $\gamma$ such that 
\begin{equation}\label{gamma}
    0<\gamma\leq\min\{1,\log(1+\sigma)/\log\kappa\}.
\end{equation}

Recall, $K$ is the number of classes in the DNN. 	\begin{remark}\label{constant}
	Here \begin{equation}\label{constantC_1}
C_1=\max \{6\eta^\gamma, \;\frac{1}{\log(2)}\kappa 6(K-1)(\sigma+1)a_p\,(e+2\kappa^\gamma)\,\}.
	\end{equation}
	with $a_j=\sum_{k=1}^{j}(1+\sigma)^{k}$ and $p=\lfloor\log(\eta)/\log\kappa\rfloor$. Already for $K=10$, $\sigma=1$ and $\kappa=2$ we have that $C_1$ is at least $800$ (and so $\eta$ is large and $\mu(G_\eta(t_0))$ is probably small). In Section \ref{main_result_1}, we will address this difficulty by finding a smaller constant $C_3$ and changing  the three conditions  given in (\ref{eq:explicit_constants_2}). 
	
	Note that $\eta>C_1$ and so when $\gamma=1$ (which depends on the doubling condition via the constants $\sigma$ and $\kappa$) we have that $\delta_0>\epsilon$. This makes it easier to satisfy condition \ref{good_set_big_and_bad_set_empty_B_1}. However only if $\eta$ can remain small.      
	\end{remark}
	
In this work, a new version of Theorem \ref{thm:stabilityB} will be proved, one which will be used to prove the main result given in Theorem \ref{weakenedDC1}. This theorem uses a uniform doubling condition on $\delta X$ (see Def. \ref{eq:uniform_doubling_condtion_deltaX}) instead of the doubling condition. Changing the doubling condition to the uniform doubling condition allows us to find a smaller constant than $C_1$ and the uniform doubling condition can, at least in theory, be checked numerically  giving a more practical stability of accuracy result. The new version of Theorem \ref{thm:stabilityB} is  given in  Proposition \ref{lemmadoublingconditionondeltaX}.

\subsection{Stability of accuracy via uniform doubling condtion on $\delta X(T,\alpha)$}
\label{DC_delta_X}

We now present another stability of accuracy result which complements Theorem \ref{weakenedDC1}.  First, we define a uniform doubling condition on $\delta X$:

\begin{defn}\label{eq:uniform_doubling_condtion_deltaX}
The set $\{\delta X(s,\alpha(t_0)):s\in T\}$ satisfies \emph{the uniform doubling condition} at time $t_0$ if there exists constants $\kappa>1$, and $\delta,\sigma, \ell, \beta>0$ such that for $x=0$, and for all intervals $I\subset\mathbb R$ around $x=0$ with $ |I|=\ell\kappa^i\leq \beta$ with $i \in \N$,
		\begin{equation}
		\mu\left(\{s\in T:\delta X(s,\alpha(t_0))\in\kappa I\}\right)\geq\min\{\delta,\;(1+\sigma)\,\mu\left(\{s\in T:\delta X(s,\alpha(t_0))\in I\}\right)\},
		\end{equation}
		where $\kappa I$ is the interval with the same center as $I$ but whose width is multiplied by $\kappa$.
	
\end{defn}

Essentially, this doubling condition requires that 

\begin{equation}
		\mu\left(\{s\in T:\delta X(s,\alpha(t_0))\in\kappa I\}\right)\geq (1+\sigma)\,\mu\left(\{s\in T:\delta X(s,\alpha(t_0))\in I\}\right),
		\end{equation}
		
		for some intervals $I$ centered at $x=0$ with length  $\ell \kappa^i, i \geq 0$ and smaller than $\beta$. This means that as we double the interval $I$ by the constant $\kappa$, we must capture more data than what was originally in $I$.

  Based on this uniform doubling condition on $\delta X$, we prove the following stability of accuracy result.

  \begin{prop}\label{lemmadoublingconditionondeltaX}
		Assume that the set $\{\delta X(s,\alpha(t_0)):s\in T\} \subset \R^1$ satisfies the uniform  doubling condition given in Def. \ref{eq:uniform_doubling_condtion_deltaX}, with $\kappa>1$ and $\delta,\sigma$, $\beta>0$, $1>\ell>0$  at point $x=0$ at some time $t_0$.
Then for the constant $C_3$ given in \eqref{eq:constant_C_3} we have: 
			\begin{equation}\label{eq:explicit_constants3}
		\quad \frac{\beta}{2} \geq \eta, \quad  \xi:=\ell \kappa^i  \quad i \in \N,\quad  \delta_0<\delta.
		\end{equation} 
			and if good and bad sets at $t=t_0$ satisfy
			\begin{equation}\label{good_set_big_and_bad_set_empty_B}
				\mu(G_\eta(t_0))>1-\delta_0\quad\text{and}\quad B_{-\xi }(t_0)=\emptyset,
			\end{equation}
			then for all $t\geq t_0$,
			\begin{equation}\label{eq:stability_B}
				\text{loss}(t)\leq \log(2)C_3, \quad \text{acc}(t)=\mu(G_0(t))\geq 1-C_3.
    \end{equation}
	\end{prop}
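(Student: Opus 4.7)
The plan is to implement the three-step chain $\text{high acc}(t_0)\Rightarrow\text{small loss}(t_0)\Rightarrow\text{small loss}(t)\Rightarrow\text{high acc}(t)$ described in the introduction, but now in one dimension directly on the distribution of $\delta X(\cdot,\alpha(t_0))$, so none of the truncation-set or SVD machinery of Theorem \ref{weakenedDC1} is needed. The middle implication is monotonicity of the loss under training, and the last implication is the inequality \eqref{acc_loss} from Lemma \ref{Upper_Bound_on_Loss}. Everything therefore reduces to establishing the first implication: $\text{loss}(t_0)\leq \log(2)\,C_3$.

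The starting point will be the softmax bound
\[
-\log p_{i(s)}(s,\alpha)\;=\;\log\Big(1+\sum_{j\neq i(s)} e^{X_j-X_{i(s)}}\Big)\;\leq\;\log\bigl(1+(K-1)e^{-\delta X(s,\alpha)}\bigr),
\]
so that $\text{loss}(t_0)\leq \sum_{s\in T}\mu(s)\log(1+(K-1)e^{-\delta X(s,\alpha(t_0))})$. I would then decompose $T$ into the good set $G_\eta(t_0)$, a central core $C=\{|\delta X|\leq \xi\}$, and the dyadic shells $S_i=\{\xi\kappa^i<|\delta X|\leq \xi\kappa^{i+1}\}$ for $i=0,\ldots,p-1$ with $p\approx \log(\eta/\xi)/\log\kappa$. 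On these regions the pointwise integrand is at most $\log(1+(K-1)e^{-\eta})$, $\log(1+(K-1)e^{\xi})$, and $\log(1+(K-1)e^{-\xi\kappa^i})$ respectively. The emptiness of $B_{-\xi}(t_0)$ means that mass in $S_i$ only sits on the positive side, so the symmetric measure bounds I derive next are more than sufficient.

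To bound $\mu(C)$ and $\mu(S_i)$ I would chain the uniform doubling condition backward along a nested family of intervals. Let $I_j$ be the interval centered at $0$ with width $\xi\kappa^j=\ell\kappa^{i+j}$, which is admissible in Def. \ref{eq:uniform_doubling_condtion_deltaX} as long as $\xi\kappa^j\leq\beta$, i.e.\ for all $j\leq p$ thanks to $\eta\leq\beta/2$. The good-set hypothesis forces $\mu\{\delta X\in I_p\}\leq \delta_0<\delta$, and since $\delta_0<\delta$ at every subsequent step the $\min\{\delta,\cdot\}$ branch of \eqref{eq:uniform_doubling_condtion_deltaX} never activates; backward induction from $I_p$ down to $I_0$ then gives $\mu\{\delta X\in I_j\}\leq \delta_0/(1+\sigma)^{p-j}$ for $j=0,1,\ldots,p$, which bounds both $\mu(C)$ and $\mu(S_i)$. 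Inserting these measure bounds into the per-region pointwise estimates reproduces, term by term, the structure of \eqref{eq:constant_C_2} with the single value $\xi=\ell\kappa^i$ in place of the maximum over $[d_{\min}\ell,d_{\max}\ell]$; this defines $C_3$ and yields $\text{loss}(t_0)\leq \log(2)\,C_3$. Monotonicity of the loss under training and \eqref{acc_loss} then give $1-\text{acc}(t)\leq\text{loss}(t)/\log(2)\leq C_3$ for all $t\geq t_0$, finishing the proof.

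The main obstacle will be executing the backward induction cleanly. At every step one must verify that the $\min\{\delta,\cdot\}$ in \eqref{eq:uniform_doubling_condtion_deltaX} resolves to the geometric $(1+\sigma)$-factor growth rather than to $\delta$, and one must check that the admissibility constraint $|I_j|=\ell\kappa^{i+j}\leq\beta$ persists for every $j\leq p$. The first is precisely what the hypothesis $\delta_0<\delta$ buys, and the second is what $\eta\leq\beta/2$ buys. Everything else is bookkeeping with the softmax-based pointwise bounds.
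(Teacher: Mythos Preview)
Your proposal is correct and follows essentially the same route as the paper: the paper proves the loss bound as Lemma~\ref{lem:loss_bound_3} via the same softmax pointwise estimate, the same dyadic decomposition of $[-\xi,\eta)$, and the same iterated use of the doubling condition (phrased forward rather than backward, but equivalently), and then combines it with Lemma~\ref{Upper_Bound_on_Loss} exactly as you do. The only thing to tidy up is that your listed decomposition $G_\eta\cup C\cup\bigcup_{i=0}^{p-1}S_i$ omits the residual ``top shell'' $[\xi\kappa^{p},\eta)$, whose measure is bounded directly by $\delta_0$ and which produces the $\log(1+(K-1)e^{-\eta/\kappa})\,\delta_0$ term in $C_2$/$C_3$; once that is inserted, the term-by-term match you assert is complete.
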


\begin{ex}
\label{result_deltax}
  We trained a DNN for the problem given in Example \ref{example_1} and obtained a $98\%$ accuracy on the training set, which had $1000$ objects. $\delta X$ for the $600$th epoch of training is given in Fig. \ref{deltaX_final}.  

  \begin{figure}
			\includegraphics[scale=.7]{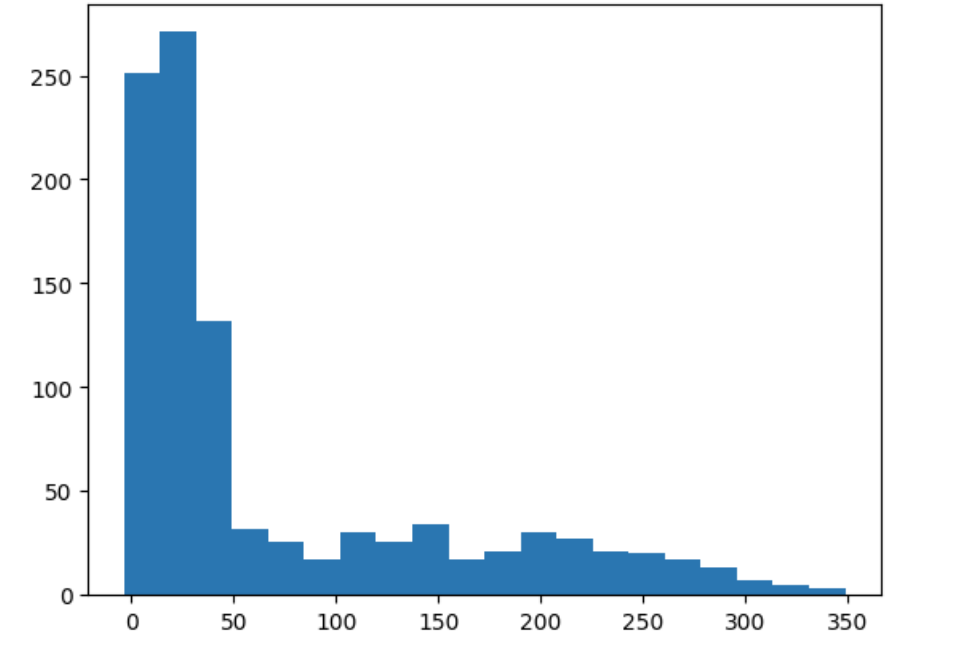}
			\caption{$\delta X$ for final epoch of DNN.}
			\label{deltaX_final}
		\end{figure}

  In this example, $\xi=.7$ and we checked the uniform doubling condition on $\delta X$ for the parameters: $\sigma=.9$, $\kappa=2$, $\delta=.2$, and $\ell=.7$. The uniform doubling condition was satisfied in this example, and taking $\eta=18$ we have that $\delta_0=.2$ and $C_3=.047$. This means that loss is bounded by $.015$, which indeed it was  (loss was $.015$) and accuracy was bounded by $.95$ for the rest of the training.      
  
\end{ex}

Of course, once training has started we need not check the uniform doubling condition on $\delta X$ as the loss will give us all the information we need regarding the stability of accuracy of the DNN. Still, to obtain stability of accuracy from the training data $T$ only, we first need to prove Proposition \ref{lemmadoublingconditionondeltaX}.  

\subsection{Doubling condition on the data set}

In \cite{LJS} a doubling condition was also introduced on the training set directly and it was shown that this doubling condition propagates through the DNN, such that if the training set satisfies the doubling condition then $\delta X$ also satisfies the doubling condition. One can then ensure that $\delta X$ satisfies the doubling condition by checking that the training set satisfies the doubling condition. Thus, verifying that the training set satisfies the doubling condition will, in turn, ensure that we have the stability of accuracy.


  We can now state the doubling condition for the training set $T$:
  
  	\begin{defn}\label{def:no_small_islated_data_cluster1}
		Let $\bar\mu$ be the extension of the measure $\mu$ from $T$ to $\mathbb R^n$ by $\bar\mu(A)=\mu(A\cap T)$ for all $A\subset\mathbb R^n$. The \emph{doubling} condition holds on $T$ if there exists  $\kappa>1$, $k \in \N$ and $m_0, \delta,\sigma>0$ so that $\forall$ truncated slabs $S$ $\exists \beta_S$ (depending on the slab) such that for any $\eps \leq \beta_S$ we have
		\begin{equation}\label{eq:no_small_isolated_data_cluster_1}
		\bar\mu(S_{\kappa\,\eps})\geq \min\Big\{\delta,\;\max\big\{m_0,\;(1+\sigma)\,\bar\mu(S_{\eps\,})\big\}-m_0\Big\}.
		\end{equation}
		\end{defn}

Here, $\beta$ depends on the truncated slab, and so we denote $\beta_S$ as the $\beta$ for which the truncated slab $S$ satisfies the doubling condition. See Fig. \ref{doubling_condition_with_data} for an example of the doubling condition. $m_0>0$ is needed in the doubling condition on a discrete data set $T$ because we can always find a slab with a small enough width such that it and its double contains only one data point of $T$. Thus, no discrete data set $T$ will satisfy the doubling condition with $m_0=0$. The residual mass $m_0$ is the smallest amount of data for which the doubling condition must be satisfied and we want it to at least be bigger than the measure of a single data point in the training set $T$.

Using this doubling condition, the following theorem was essentially proved in \cite{LJS}:

  \begin{thm}[Berlyand, Jabin, Safsten, 2021]\label{Pro_DC}
  Suppose the training set $T$ satisfies the doubling condition with constants $m_0$, $\kappa$, $\sigma$, and $\beta_S$ then for a DNN with the absolute value activation function, $\delta 
  X$ will also satisfy the doubling condition though with different constants $2^{\sum_{i=1}^l c(i)}K m_0$, $\frac{\sigma}{2^{\sum_{i=1}^l c(i)}K m_0}$ and $\kappa$. Here $c(i)$ is the number of nodes in the $i$th layer of the DNN and $K$ is the number of classes.   
  \end{thm}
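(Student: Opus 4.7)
The plan is to propagate the doubling condition on $T$ through the DNN by exploiting the piecewise-affine structure induced by the absolute value activation. The pre-images of the coordinate hyperplanes at each layer are exactly the hyperplanes of the DNN truncation set $B = \bigcup_{i=1}^l B_i$, and together they partition $\mathbb R^n$ into at most $2^{\sum_{i=1}^l c(i)}$ linear regions $\{R_\nu\}$. On each $R_\nu$ the map $X(\cdot,\alpha)$ is affine, so each difference $X_i(\cdot,\alpha) - X_j(\cdot,\alpha)$ of coordinate outputs is also affine on $R_\nu$. This is the structural fact that turns a scalar doubling statement about $\delta X$ into a geometric doubling statement about truncated slabs in $\mathbb R^n$.

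For an interval $I\subset\mathbb R$ centered at $0$, I would partition $\{s\in T : \delta X(s,\alpha)\in I\}$ by the true class $i(s)$, by the linear region $R_\nu$ containing $s$, and by the competitor $j\neq i(s)$ that achieves $\max_{k\neq i(s)} X_k(s,\alpha)$. For each triple $(i,\nu,j)$ the set $S^{i,j}_\nu(\epsilon) := \{x\in R_\nu : X_i(x)-X_j(x)\in I\}$ is the intersection of a slab in $\mathbb R^n$ (with direction and center determined by the affine coefficients of $X_i - X_j$ on $R_\nu$) with the polytope $R_\nu$, hence a truncated slab in the sense of Definition \ref{truncated_slab} whose truncating hyperplanes all belong to $B$. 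Because $X_i-X_j$ is affine on $R_\nu$, the preimage of $\kappa I$ is precisely the $\kappa$-dilated truncated slab $S^{i,j}_\nu(\kappa\epsilon)$, so the doubling condition on $T$ from Definition \ref{def:no_small_islated_data_cluster1} applies directly to each $S^{i,j}_\nu$. Using the inclusion $\{\delta X\in I\}\cap T \subseteq \bigcup_{i,\nu,j} S^{i,j}_\nu(\epsilon)\cap T$ together with $\bar\mu(\{\delta X\in \kappa I\}\cap T) \geq \bar\mu(S^{i,j}_\nu(\kappa\epsilon)\cap T)$ for each nonempty triple, and summing the per-slab doubling inequalities over the at most $K\cdot 2^{\sum_{i=1}^l c(i)}$ triples, I would arrive at an inequality of the form
\[
\bar\mu(\{\delta X\in\kappa I\}) \geq (1+\sigma)\,\bar\mu(\{\delta X\in I\}) \;-\; K\cdot 2^{\sum_{i=1}^l c(i)}\,m_0,
\]
which after algebraic rearrangement into the $\min/\max$ format of Definition \ref{doubling_condition_DX} yields the claimed propagated constants.

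The main obstacle will be the bookkeeping when combining the per-region, per-competitor inequalities: each truncated slab contributes its own residual mass $m_0$ and one must accumulate these without double-counting while simultaneously reconciling the $\max_j$ structure in the definition of $\delta X$ with the union bound over competitors $j$. A related subtlety is that the sets $S^{i,j}_\nu$ overlap on the boundaries between linear regions (hyperplanes in $B$) and on tie-sets between competitors, which are measure-zero for a.e.\ DNN but must be treated carefully to ensure the dilations in $\mathbb R^n$ really correspond to dilating $I$ in $\delta X$-space. Once these issues are resolved, the final form of the bound, together with the trivial monotonicity $\bar\mu(\{\delta X\in\kappa I\})\geq\bar\mu(\{\delta X\in I\})$ used to cap out the $\min$, produces the propagated doubling condition on $\delta X$ with the residual mass $2^{\sum_{i=1}^l c(i)}K\,m_0$ and the modified growth constant stated in the theorem.
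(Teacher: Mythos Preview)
First, note that the paper does not itself prove Theorem~\ref{Pro_DC}: it is quoted as a result from \cite{LJS}. The paper's own contribution is the analogous Proposition~\ref{Pro_DC2} for the \emph{uniform} doubling condition, and the argument there (via Lemmas~\ref{Lemmaactivationfunction2} and~\ref{linear1}, assembled in the proof of Theorem~\ref{weakenedDC1}) proceeds \emph{layer by layer}: an affine map preserves the doubling condition with $\beta,\ell$ rescaled by singular values, the absolute value activation preserves it at the cost of halving $\sigma$ per coordinate, and one composes inductively through the network to obtain the condition on $\tilde T=X(T)$. Only at the very end does one pass from $\tilde T$ to $\delta X$, observing that on each class and each argmax cell $\delta X$ is an affine projection with $d_{\min}=d_{\max}=1$. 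Your proposal takes a genuinely different, global route: flatten the composition into a single piecewise-affine map on $\mathbb R^n$ and work directly with its linear regions.

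There is, however, a real gap in your key step, and it is not merely bookkeeping. The inclusion $\{\delta X\in I\}\cap T \subseteq \bigcup_{i,\nu,j} S^{i,j}_\nu(\epsilon)$ is fine, but the companion claim $\bar\mu(\{\delta X\in\kappa I\}) \geq \bar\mu(S^{i,j}_\nu(\kappa\epsilon))$ is false: a point $s\in S^{i,j}_\nu(\kappa\epsilon)$ only satisfies $|X_{i}(s)-X_j(s)|\le \kappa\epsilon/2$, while $\delta X(s)=X_{i}(s)-\max_{k\neq i}X_k(s)$ can lie far below $-\kappa\epsilon/2$ whenever some other competitor $k$ dominates $j$. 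Thus $S^{i,j}_\nu(\kappa\epsilon)\not\subset\{\delta X\in\kappa I\}$, and the sum of the per-slab doubling inequalities cannot be bounded above by $\bar\mu(\{\delta X\in\kappa I\})$ as you need. A repair would further intersect each $S^{i,j}_\nu$ with the argmax cell $\{X_j\ge X_k\ \forall k\neq i\}$, adding $K-2$ extra linear truncations per piece; the resulting truncated slabs then do obey the required containment, but the truncation number grows and you must check that the doubling hypothesis on $T$ actually covers these additional constraints. This is exactly the difficulty the layer-by-layer route sidesteps by first establishing doubling on $X(T)\subset\mathbb R^K$ (where no argmax enters) and relegating the passage to $\delta X$ to a final, essentially one-dimensional step in the output space.
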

  
  There are a couple of difficulties with using Theorem \ref{Pro_DC} and Theorem \ref{thm:stabilityB}. First, the constant $2^{\sum_{i=1}^l c(i)}K m_0$ can be large if the number of nodes in the DNN is large, and so $2^{\sum_{i=1}^l c(i)}K m_0 \leq \frac{\eps}{2C_1}$ in condition \eqref{eq:explicit_constants_2} will be hard to satisfy. Further, Theorem \ref{Pro_DC} does not provide a method for obtaining $\beta$ on $\delta X$ from the $\beta_S$ on the training set $T$, and so one cannot verify condition \eqref{eq:explicit_constants_2} from the training set alone. This will be described in greater detail in the next section which motivates our new results. Using the uniform doubling condition, one can obtain the following similar result:  
  
   \begin{prop}\label{Pro_DC2}
  Suppose the training set $T$ satisfies the uniform doubling condition with constants $\ell$, $\kappa$, $\sigma$, and $\beta$ and with the family of truncation sets $L$, then for a.e. DNN with the absolute value activation function and truncation set $B \in L$, $\delta 
  X$ will also satisfy the doubling condition though with different constants $\ell d_{\min}\leq \ell' \leq \ell d_{\max}$, $\beta d_{\min}$, $\sigma$ and $\kappa$.   
  \end{prop}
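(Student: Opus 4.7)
The plan is to transfer the uniform doubling condition from $T$ to $\{\delta X(s,\alpha(t_0)):s\in T\}$ by exploiting the piecewise affine structure of $X$. First, I would observe that the hyperplanes appearing in the DNN truncation set $B$ (defined via preimages of the coordinate hyperplanes under the layer maps $F_i$) partition $\R^n$ into finitely many polyhedral cells $\{R_\alpha\}$ on each of which the absolute value activation acts as multiplication by a fixed diagonal $\pm 1$ matrix. Consequently, on each cell, $X(x)=A_R\,x+b_R$ with $A_R$ a product of the weight matrices $W_1,\ldots,W_L$ interleaved with these sign matrices, so the nonzero singular values of $A_R$ lie in the range controlled by the bounds $d_{\min},d_{\max}$ of $W_L\circ\cdots\circ W_1$ (the sign matrices being orthogonal). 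The ``a.e.\ DNN'' clause enters here: for generic parameters, no $s\in T$ lies on a hyperplane of $B$, so each $s\in T$ belongs to the interior of a unique cell $R(s)$.

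Second, I would turn interval preimages under $\delta X$ into truncated slabs. Within each cell $R$ the maximizer $j^\ast=\arg\max_{j\ne i(s)}X_j$ is piecewise constant in $s$ (the refinement again being cut out by hyperplanes in $B$), so on each such refined cell $\delta X(x)=\bigl[A_R^{\top}(e_{i(s)}-e_{j^\ast})\bigr]\cdot x+c_R$ is affine. Thus the preimage of an interval $I\subset\R$ centered at $0$ of width $w$, intersected with $R$, is exactly a truncated slab in the sense of Definition \ref{truncated_slab}: the slab direction is $u_R:=A_R^{\top}(e_{i(s)}-e_{j^\ast})/\|A_R^{\top}(e_{i(s)}-e_{j^\ast})\|$, the slab width in $\R^n$ is $w/\|A_R^{\top}(e_{i(s)}-e_{j^\ast})\|$, and the truncation hyperplanes bounding $R$ belong to $B\in L$. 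Because $\|A_R^\top v\|$ lies between $\sqrt{d_{\min}}\|v\|$ and $\sqrt{d_{\max}}\|v\|$ (and after absorbing the normalization of $e_{i(s)}-e_{j^\ast}$ into the paper's convention for $d_{\min},d_{\max}$), choosing $\ell'\in[\ell\,d_{\min},\ell\,d_{\max}]$ guarantees that the associated slab width in $\R^n$ equals the prescribed $\ell$; likewise the upper bound $\beta$ on slab width in $\R^n$ translates to the bound $\beta\,d_{\min}$ on interval width in the $\delta X$ coordinate.

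Third, I would apply the uniform doubling condition on $T$ cellwise and sum. For each refined cell $R$ and each $i\in\N$ with $\ell\kappa^i\leq\beta$, the truncated slab $S^R_{\ell\kappa^i}$ belongs to the family $L$, so by hypothesis $\bar\mu(S^R_{\ell\kappa^{i+1}})\geq\min\{\delta,(1+\sigma)\bar\mu(S^R_{\ell\kappa^i})\}$. Since the cells partition $T$ up to a null set (by the generic position argument), the preimage $\{s\in T:\delta X(s,\alpha)\in\kappa^i I\}$ is the disjoint union $\bigsqcup_R (T\cap S^R_{\ell\kappa^i})$, and summing the cellwise doubling inequalities yields the uniform doubling condition on $\delta X(T)$ with constants $\ell'\in[\ell\,d_{\min},\ell\,d_{\max}]$, scale $\beta\,d_{\min}$, and the same $\sigma,\kappa$. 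The main obstacle I anticipate is handling the $\min\{\delta,\cdot\}$ under summation, since the min does not distribute over sums. I would resolve this by a dichotomy: either some cell $R^\ast$ already satisfies $\bar\mu(S^{R^\ast}_{\ell\kappa^{i+1}})\geq\delta$, in which case the union inherits this bound trivially; or every cell is below the $\delta$ threshold, in which case each cellwise inequality reduces to the multiplicative version $(1+\sigma)\bar\mu(S^R_{\ell\kappa^i})$ and summing preserves it. A secondary subtlety is that $A_R$ varies with $R$, which is precisely why the conclusion asserts a range $[\ell\,d_{\min},\ell\,d_{\max}]$ for $\ell'$ rather than a single value; this flexibility is exactly what is consumed in \eqref{eq:explicit_constants_2_main_theorem} of Theorem \ref{weakenedDC1}.
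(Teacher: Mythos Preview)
Your plan differs from the paper's in organization: you partition $\R^n$ globally by the hyperplanes of $B$ and treat $X$ as a single affine map $A_R x+b_R$ on each cell, whereas the paper argues layer by layer, alternately applying Lemma~\ref{linear1} (a linear map rescales $\ell,\beta$ by its singular values, passing to the variation condition of Definition~\ref{def:no_small_islated_data_cluster_5}) and Lemma~\ref{Lemmaactivationfunction2} (the absolute value preserves $\ell,\beta$ and, for a.e.\ DNN, also $\sigma$), and finally treats the passage $\tilde T=X(T)\to\delta X$ as one more piecewise-linear step carried out in the output space $\R^K$.

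Your argument has a genuine gap at the singular-value step. The claim that the nonzero singular values of $A_R=W_L D_{L-1}\cdots D_1 W_1$ are controlled by $d_{\min},d_{\max}$ of $W_L\cdots W_1$ because ``the sign matrices are orthogonal'' is false: interleaving orthogonal matrices between factors does alter the singular values of a product (e.g.\ $W_1=W_2=\bigl(\begin{smallmatrix}2&0\\1&1\end{smallmatrix}\bigr)$ and $D=\mathrm{diag}(1,-1)$ give $W_2W_1$ and $W_2DW_1$ with different spectra). Hence your route to $\ell'\in[\ell d_{\min},\ell d_{\max}]$ collapses. A related obstruction is that the scale factor $\|A_R^T(e_{i(s)}-e_{j^\ast})\|$ genuinely varies across cells, so a single $\ell'$ cannot align the preimage width in every cell with the discrete grid $\{\ell\kappa^i\}$ on which the hypothesis on $T$ applies; your remark about ``a range rather than a single value'' does not repair the cellwise summation. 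The paper's inductive scheme never forms $A_R$ and never makes such a claim: each invocation of Lemma~\ref{linear1} involves only a single $W_i$, each activation step leaves slab widths unchanged, and the residual slab-dependence of $\ell_S$ is precisely what forces the introduction of Definition~\ref{def:no_small_islated_data_cluster_5} and the $\max_{\xi}$ in the definition of $C_2$. A smaller issue: the argmax-refinement hyperplanes you introduce are preimages of $\{X_j=X_{j'}\}$, not of coordinate hyperplanes, so they need not lie in $B$; the paper avoids pulling these back to $\R^n$ by performing that final step in the output space, where the restriction to a fixed class and argmax index is a linear map with $d_{\min}=d_{\max}=1$.
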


  \subsection{Why truncations in the uniform doubling condition might not be important}
\label{are_truncations_important}

The essence of the proof for Theorem \ref{weakenedDC1} is as follows. First, we prove Prop. \ref{lemmadoublingconditionondeltaX} for stability of accuracy based on the uniform doubling condition on $\delta X$. Then we use Prop. \ref{Pro_DC2} to say that the uniform doubling condition propagates from the training set $T$ to $\delta X$. What we want is that the interval $I_{\ell}$ of size $\ell$ around $0$ in $\delta X$ satisfies the uniform doubling condition given in Def. \ref{eq:uniform_doubling_condtion_deltaX}. The preimage of this interval, under the DNN, is a union of truncated slabs in $\R^n$ which we denote $U_{\ell}$. For large DNNs, we might have millions if not more truncated slabs in this union. If by doubling the widths of all truncated slabs in this union by $\kappa$ we obtain a set, which we call $U_{\kappa\ell}$,  that contains a total of $(1+\sigma)$ more elements (counting all elements in two or more truncated slabs only once) than were in the original union, we would know that 
\begin{equation}    \mu(I_{\kappa\ell})>(1+\sigma)\mu(I_{\ell}).
\end{equation}

In general, if 

\begin{equation}
\label{U_ell}
    \mu(U_{\kappa^{i+1}\ell})>(1+\sigma)\mu(U_{\kappa^i\ell})
\end{equation}

then

\begin{equation}
    \mu(I_{\kappa^{i+1}\ell})>(1+\sigma)\mu(I_{\kappa^i\ell}).
\end{equation}

So if \eqref{U_ell} holds for all $\ell\kappa^i<\beta$, we would have the stability of accuracy result given in Theorem \ref{lemmadoublingconditionondeltaX} and Theorem \ref{weakenedDC1}. Equation \eqref{U_ell} can be checked directly for different unions of truncated slabs $U'$, though the computation might not practically be feasible. However, the essence of \eqref{U_ell} is that for some large set $U$ of possibly millions of truncated slabs we have some "doubling" property. Meaning that if we "double" $U$ we obtain $(1+\sigma)$ more elements. Because $U$ is large, it might be practically useful to consider other methods for finding the global "doubling" properties of the data set. This can be done by either ignoring the truncations in the uniform doubling condition (i.e. SUDC) or possibly considering checking the doubling condition for arbitrary sets (possibly even very complicated ones such as a subset of what $U$ might look like). The more different subsets of $\R^n$ satisfy an equation of the form given in \eqref{U_ell}, the more likely it is that \eqref{U_ell} itself is satisfied and that we have stability of accuracy.

\section{More motivation for introducing the uniform doubling condition}\label{Motivation}

As mentioned, we want to have a doubling condition on the training set that can be propagated through the DNN to $\delta X$ so that verifying the doubling condition on the training set ensures the stability of accuracy during training. This is why Definition \ref{def:no_small_islated_data_cluster1} and Theorem \ref{Pro_DC} are introduced.  
Note, however, that in Definition \ref{doubling_condition_DX} we have a single constant  $\beta$ in the doubling condition on $\delta X$, while in Definition \ref{def:no_small_islated_data_cluster1} the constant $\beta_S$ depends on the slab $S$. In \cite{LJS} no method is given for obtaining the $\beta$ for the doubling condition on $\delta X$ from the $\beta_S$ on $T$. Thus, we say that there is no way of quantifying the relationship between $\beta_S$ on $T$ and $\beta$ on $\delta 
  X$.  Even if we show that the doubling condition holds on $T$, we cannot use Theorem \ref{Pro_DC} and Theorem \ref{thm:stabilityB} to obtain a complete stability of accuracy result. This is because, we must know $\beta$ on $\delta X$ to know whether the condition \eqref{eq:explicit_constants_2} in Theorem \ref{thm:stabilityB} is satisfied, and there is no way of knowing $\beta$ from the $\beta_S$ given in the doubling condition on $T$. Specifically, in condition \eqref{eq:explicit_constants_2} $\beta$ is used to obtain an upper bound on $\eta$ (which is important for quantifying the good and bad sets), yet there is no way of knowing what $\beta$ on $\delta X$ is from the $\beta_S$ in the doubling condition on $T$.

  Thus the  doubling condition on $T$ does not give a fully quantifiable stability result. That is, the doubling condition on the training set provides some constants which are then used in the stability of accuracy theorem, however in \cite{LJS} the full relationship between the constants of the doubling condition and the stability theorem had not been established. 
The first goal of the new results is to introduce a new uniform doubling condition (given in Def. \ref{def:no_small_islated_data_cluster_2}) on the training set which ensures the stability of accuracy during training in a quantifiable manner. Thus, by verifying the doubling condition on the data set, one can ensure the stability of accuracy without having to know $\delta X$ directly.  

        This uniform doubling condition has some benefits over the doubling condition given in Definition \ref{def:no_small_islated_data_cluster1}. First, $\beta$ is fixed and does not depend on the slab $S$. This allows us to propagate this $\beta$ through the DNN to a doubling condition on $\delta X$ and thus obtain a quantifiable stability of accuracy result. Thus, all constants in the stability theorem come directly from the given data $T$, though only DNNs with specific architecture and parameters and reasonable $d_{\min}$ and $d_{\max}$ will have that $\delta X$ satisfies the doubling condition. Knowing what $d_{\min}$ and $d_{\max}$ are and if the DNN truncation set $B \in L$ requires knowledge of the DNN, but no knowledge of $X(T)$ is needed.     Second, we replace the constant $m_0$ with the constant $\ell$, and while the constant $m_0$ blows up in Theorem \ref{Pro_DC} the constant $\ell$ does not blow up during the propagation through the DNN. Recall that the constant $m_0$ ensured that a data set with discrete data points might still satisfy the doubling condition, however, because in the uniform doubling condition we don't need to check the doubling condition for all $\epsilon<\beta$ but only for $\ell\kappa^i<\beta$, $m_0$ is not needed. Thus, in the doubling condition, if we did not have $m_0$ then no data set would satisfy the doubling condition because we can take slabs small enough that only one data point is contained in $S_\epsilon$ and $S_{\kappa \epsilon}$.  However, in the uniform doubling condition, we only consider slabs bigger than $\ell$ and so we can remove the constant $m_0$ from the condition.    Finally, for any given truncated slab $S$, we need only check the condition for a finite number of slabs $S_{\ell \kappa^i}$ with $\ell \kappa^i<\beta$  rather than an infinite number of slabs $S_{\epsilon}$ with $\epsilon<\beta$.

\begin{remark}
Take a manifold $H$ to satisfy the uniform doubling condition with $k>0$. For many such $H$, with $H$ a smooth manifold with measure $\mu$ distributed uniformly, we must have a restriction on the truncation sets in the family of truncation sets $L$. For simplicity, it suffices to explain why this is true for the case when $H$ is a bounded curve in $\R^2$ with a boundary. Suppose $L$ includes all possible truncation sets of size $k$, then we can truncate a slab with center $x$ which is $\beta$ distance from the edges of the curve in such a manner that the uniform doubling condition will be violated for every $H$. This is because we are $\beta$ close to the boundary of the curve $H$ and so $S_{\kappa \epsilon}$ will not contain more of the curve if the truncation is normal to the curve and passes through $x$. See Fig. \ref{QE1}. Thus, $H$ would only satisfy the uniform doubling condition if no truncation sets in $L$ has a truncation $\beta$ close to the boundary of $H$. This is why the family of truncation sets $L$ must be introduced in the case of the uniform doubling condition (in which $\beta$ does not depend on the slabs $S$).

\label{rem3}
\end{remark}

\begin{figure}[h!]
\includegraphics[scale=.8]{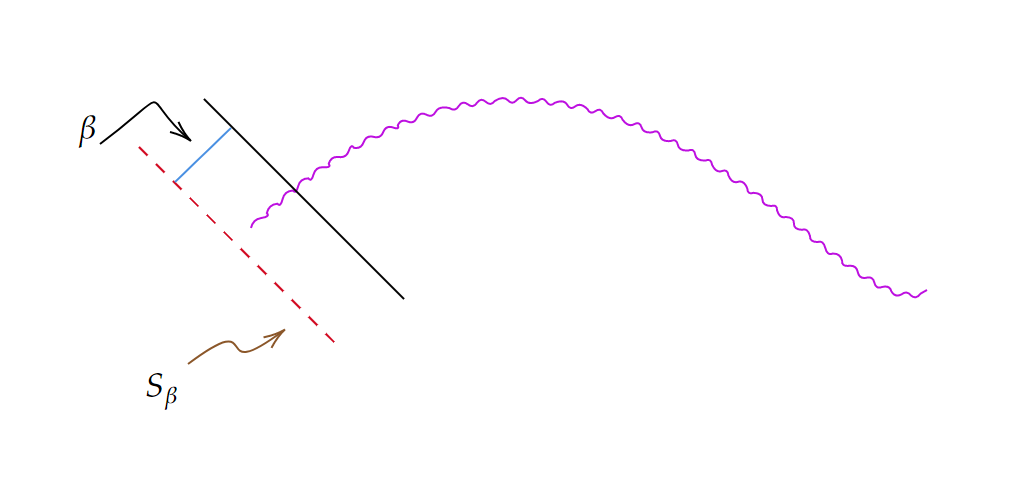}
\caption{The curve $H$ is shown as a squiggly curve  (in purple). Here $\mu(S_{\beta})=\mu(S_{\kappa \beta})$, and so would not satisfy the doubling condition.}
\label{QE1}
\end{figure}


\section{Proof of main result: stability theorem via uniform doubling condition on $T$}\label{main_result_1}

The main result in this section is given in Theorem \ref{weakenedDC1}. But first, we provide a couple of lemmas that show how one can quantitatively propagate the uniform doubling condition from the training set $T$ to $\delta X$.  

 First we introduce a slight variation of the uniform doubling condition which is only needed for a technical reason:  

	\begin{defn}\label{def:no_small_islated_data_cluster_5}
		Let $\bar\mu$ be the extension of the measure $\mu$ from $T$ to $\mathbb R^n$ by $\bar\mu(A)=\mu(A\cap T)$ for all $A\subset\mathbb R^n$. Let $L$ be a family of size $k$ truncation sets. The \emph{uniform doubling} condition holds on $T$ if there exists  $\kappa>1$ and $\delta,\sigma, \beta$ so that $\forall$ truncated slabs $S$ s.t. $S$ is generated by a truncated set $h \in L$ there exists  $\ell_S>0$, such that for all $\ell_S\kappa^i<\beta$ with $i \in \mathbb N$,  we have

		\begin{equation}\label{eq:no_small_isolated_data_cluster_0}
		\bar\mu(S_{\ell_S\kappa^{i+1} })\geq \min\{\delta,\;(1+\sigma)\,\bar\mu(S_{\ell_S\kappa^i})\}.
		\end{equation}
		
	\end{defn}

 \begin{remark}
     The only difference between the uniform doubling condition and this definition is that now $\ell$ depends on the slab $S$.  
 \end{remark}

We can now state the following lemma which shows how the doubling condition on $T$ can propagate through the absolute value activation function:

\begin{lem}\label{Lemmaactivationfunction2}
			Suppose the data set $T$ satisfies the uniform doubling condition with constants $\kappa>1$, $\delta,\beta,\ell,\sigma>0$, family  $L$ of size $k$ truncation sets consisting of a single set of coordinate planes $\{P_1, \dots, P_n \}\ $ and truncation number $k=n$. Define $\lambda:\R^n \to \R^n$: $$\lambda(x_1,x_2,\dots, x_n)= (|x_1|,|x_2|,\dots, |x_n|).$$ 
			
			Then $\lambda(T)$ also satisfies the doubling condition for scale parameter $\beta$, scale factor $\ell$, truncation number $0$, $\delta$ and $\frac{\sigma}{2^n}$. Furthermore, if no elements of $T$ are contained in the coordinate planes $P_1, \dots, P_n$ then $\sigma$ stays constant. $\lambda$ also propagates the variation uniform doubling condition given in Def. \ref{def:no_small_islated_data_cluster_5} with $\ell_S$ and all other constants staying the same.

		\end{lem}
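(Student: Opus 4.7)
The plan is to reduce the doubling property for $\lambda(T)$ to the assumed doubling property for $T$ by describing the preimage $\lambda^{-1}(S')$ of a slab $S'\subset\mathbb{R}^n$ explicitly. For each sign pattern $\epsilon\in\{-1,1\}^n$, the intersection of $\lambda^{-1}(S')$ with the orthant $O_\epsilon=\{x\in\mathbb{R}^n:\epsilon_i x_i\geq 0\}$ is exactly the truncated slab $S'_\epsilon$ of the same width as $S'$, with unit vector $\epsilon\odot u$ and center $\epsilon\odot s$ (where $u,s$ are those of $S'$), truncated by the $n$ coordinate hyperplanes oriented so as to carve out $O_\epsilon$. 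Thus each $S'_\epsilon$ is generated by a truncation set in $L$, and scaling $S'$ by $\kappa$ corresponds to scaling every $S'_\epsilon$ by $\kappa$ with its truncations fixed. Applying the uniform doubling condition on $T$ to each $S'_\epsilon$ yields
\[
\bar\mu(S'_{\epsilon,\ell\kappa^{i+1}})\geq\min\bigl\{\delta,(1+\sigma)\bar\mu(S'_{\epsilon,\ell\kappa^i})\bigr\}
\]
for every $\epsilon$ and every admissible $i$.

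The next step is to assemble these $2^n$ orthant-wise inequalities into a single inequality for the push-forward measure $\bar\mu_{\lambda(T)}(S')=\bar\mu(\lambda^{-1}(S')\cap T)$. The key bookkeeping observation is that a point $y\in T$ with $z(y)$ vanishing coordinates lies in exactly $2^{z(y)}$ of the orthants $O_\epsilon$, so
\[
\bar\mu_{\lambda(T)}(S')\leq\sum_{\epsilon}\bar\mu(S'_\epsilon\cap T)\leq 2^n\,\bar\mu_{\lambda(T)}(S'),
\]
with equality on the left precisely when no point of $T$ lies on any $P_j$. I would then split into cases. If some orthant $\epsilon_0$ satisfies $(1+\sigma)\bar\mu(S'_{\epsilon_0,\ell\kappa^i})>\delta$, then $\bar\mu_{\lambda(T)}(S'_{\ell\kappa^{i+1}})\geq\bar\mu(S'_{\epsilon_0,\ell\kappa^{i+1}})\geq\delta$ and the $\delta$-branch of the conclusion is immediate. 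Otherwise, summing the doubling inequalities gives
\[
\sum_\epsilon\bigl(\bar\mu(S'_{\epsilon,\ell\kappa^{i+1}})-\bar\mu(S'_{\epsilon,\ell\kappa^i})\bigr)\geq\sigma\sum_\epsilon\bar\mu(S'_{\epsilon,\ell\kappa^i})\geq\sigma\,\bar\mu_{\lambda(T)}(S'_{\ell\kappa^i}),
\]
while the same difference is at most $2^n\bigl(\bar\mu_{\lambda(T)}(S'_{\ell\kappa^{i+1}})-\bar\mu_{\lambda(T)}(S'_{\ell\kappa^i})\bigr)$, since the increment is supported on $\lambda^{-1}(S'_{\ell\kappa^{i+1}}\setminus S'_{\ell\kappa^i})$ and each new point contributes multiplicity at most $2^n$. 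Rearranging yields $\bar\mu_{\lambda(T)}(S'_{\ell\kappa^{i+1}})\geq(1+\sigma/2^n)\bar\mu_{\lambda(T)}(S'_{\ell\kappa^i})$; when no $T$-element lies on any coordinate plane the multiplicity is identically $1$ and the factor $2^n$ disappears, preserving $\sigma$.

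The main obstacle is precisely this overcounting on the coordinate hyperplanes: the push-forward measure can be as small as a $2^{-n}$ fraction of the summed orthant-wise measure, and this worst case, in which new mass concentrates near points lying on many coordinate planes, is what forces $\sigma\to\sigma/2^n$ in the general case. The variant condition of Definition~\ref{def:no_small_islated_data_cluster_5} propagates by the same orthant-wise argument, with the scale parameter $\ell_{S'}$ for the image slab inherited from the corresponding $\ell_{S'_\epsilon}$ of the preimage slabs, so that the admissible width sequence and all other constants carry over unchanged.
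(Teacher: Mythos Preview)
Your argument is correct and reaches the same conclusion as the paper, but the route is organized differently. The paper factors $\lambda=\lambda_1\circ\cdots\circ\lambda_n$ and proves the one--coordinate case: for $\lambda_1$ the preimage of a slab splits into two truncated slabs (the half--spaces $\{x_1\ge 0\}$ and $\{x_1\le 0\}$), and an ``averaging trick'' on the shared boundary $\{x_1=0\}$ yields the doubling inequality with $\sigma$ replaced by $\sigma/2$; iterating $n$ times gives $\sigma/2^n$. You instead decompose $\lambda^{-1}(S')$ into all $2^n$ orthant pieces $S'_\epsilon$ at once and control the overcount on the union of coordinate hyperplanes by the multiplicity bound $2^{z(y)}\le 2^n$, which is the global analogue of the paper's single--coordinate averaging.

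Both arguments hinge on the same idea: the only obstruction to additivity of the push--forward is mass sitting on coordinate hyperplanes, and this overcount is at worst a factor of two per coordinate. Your presentation has the advantage of being a single step and of making explicit why the truncation number drops from $n$ to $0$ (each orthant uses up all $n$ truncating hyperplanes simultaneously), whereas the paper's inductive version must, strictly speaking, verify that each intermediate $\lambda_1\circ\cdots\circ\lambda_j(T)$ still satisfies the doubling condition with the remaining coordinate truncations, a point it handles only in a remark. The paper's step--by--step version, on the other hand, is what generalizes cleanly to the piecewise--linear activations treated later, where one coordinate may have several critical points and the one--coordinate reduction is the natural unit. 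Your final sentence on the variant condition is as sketchy as the paper's own treatment; note that for the main statement all $S'_\epsilon$ share the same $\ell$, so no alignment issue arises.
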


	\begin{proof}

Note that $\lambda=\lambda_1\circ\lambda_2 \circ \dots \circ \lambda_n$, with $\lambda_k(x_1,x_2,\dots, x_n)= (x_1,x_2,\dots, |
x_k|, \dots, x_n)$. Thus we prove this lemma for the simpler case, that $\lambda_1(x_1,\cdots,x_n) \to (|x_1|,\cdots, x_n)$ preserves the doubling condition on $T$ with the same scale parameter and $\frac{\sigma}{2}$ instead of $\sigma$. Then one can use the same proof to show that $\lambda_k(x_1,x_2,\dots, x_n)$ also preserves the doubling condition with $\frac{\sigma}{2}$ instead of $\sigma$ and the proof for $\lambda$ follows from $\lambda=\lambda_1\circ\lambda_2 \circ \dots \circ \lambda_n$.       
For all truncated slabs $S$, we want to show that $S_{\ell_S \kappa^i}$ satisfies the doubling condition for $\ell_S\kappa^i<\beta$. Take $\bar S_{\eps}$ to be the preimage of $S_\epsilon$ with $x_1 \geq 0$ and $\bar S'_{\eps}$ to be the preimage with $x_1 \leq 0$. It then makes sense to separate
\[\begin{split}
&\bar S_{\eps}=S^2_{\eps}\cup S^0_{\eps},\quad \bar S'_{\eps}=S^1_{\eps}\cup S^0_{\eps},\\
& S^2_\eps=\bar S_{\eps}\cap \{0>x_1\},\quad S^0_{\eps}=\bar S_{\eps}\cap \{x_1=0\}=\bar S'_{\eps}\cap \{x_1=0\},\quad S^{{1}}_\eps=\bar S'_{\eps}\cap \{x_1<0\}.
  \end{split}\]

By the doubling condition on $ \bar S_{\eps}$, we have for $\epsilon=\ell_S\kappa^i<\beta$
\begin{equation}
\mu(\bar S_{\kappa\,\eps\,})=\mu(S^2_{\kappa\,\eps})+\mu(S^0_{\kappa\,\eps})\geq \min\{\delta,\;(1+\sigma)\,\mu(\eps\,\bar S)\}.
\label{measE+}
\end{equation}
Similarly with regard to $ \bar S'_{\eps}$, for $\epsilon=\ell_S \kappa^i<\beta$
\begin{equation}
\mu(\bar S'_{\kappa\,\eps\,})=\mu(S^1_{\kappa\,\eps})+\mu(S^0_{\kappa\,\eps})\geq \min\{\delta,\;(1+\sigma)\,\mu(\eps\,\bar S')\}.
\label{measE-}
\end{equation}

 Notice, $\sigma$ remains fixed when $\mu(P_1)= \emptyset$, due to the fact that the truncated slab $\bar{S}_\epsilon$ and $\bar{S}'_\epsilon$ both satisfy the doubling condition  independently. Thus, if $\mu(P_1)$ is empty then simply applying \eqref{measE+} and \eqref{measE-} we get \begin{equation}
   \mu(S_{\kappa\epsilon})> \min\{\ \delta, (1+\sigma)\mu (S_{\epsilon}) \}\
\end{equation}

For the more complicated case, when $\mu(P_1) \neq \emptyset$, see Section \ref{Appendix_A}.  

\begin{remark}

  $\lambda_1(T)$ satisfies doubling condition with truncation number $k-1$ because one truncation is required at $(0,x_2,\dots,x
			_n)$. Thus, $\lambda$ will satisfy the doubling condition with truncation number $0$.

\end{remark}

	\end{proof}

	We now show how the doubling condition propagates through a linear map. First, we collect some useful facts from linear algebra in the following lemma. This lemma will help determine how $\beta$ and $\ell$ change under a linear map. 
	
	\begin{lem}\label{dilation}
 Let $M$ be an $m \times n$ matrix. Then the minimum value of
$||Mx||$, where $x$ ranges over unit vectors in $\R^n$ orthogonal to $Ker(M)$, is the smallest nonzero singular value $\sigma_{\min}$ of $M$. Similarly, the maximum values of $||Mx||$ is the largest singular value $\sigma_{\max}$ of $M$.    
\end{lem}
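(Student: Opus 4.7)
The plan is to derive the lemma directly from the Singular Value Decomposition. I would begin by writing $M = U\Sigma V^{T}$, where $U \in \R^{m \times m}$ and $V \in \R^{n \times n}$ are orthogonal and $\Sigma \in \R^{m \times n}$ is diagonal with nonnegative entries $\sigma_1 \geq \sigma_2 \geq \dots \geq \sigma_r > 0$ and $\sigma_{r+1} = \dots = 0$, where $r = \operatorname{rank}(M)$. Orthogonality of $U$ preserves norms, so $\|Mx\| = \|\Sigma V^{T} x\|$, reducing the problem to the diagonal case.

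Next I would identify $\operatorname{Ker}(M)$ with the span of the columns $v_{r+1},\dots,v_n$ of $V$ corresponding to zero singular values (since $Mv_i = \sigma_i u_i$), so that $\operatorname{Ker}(M)^{\perp}$ is spanned by $v_1,\dots,v_r$. Any unit vector $x$ orthogonal to $\operatorname{Ker}(M)$ can therefore be expanded as $x = \sum_{i=1}^{r} c_i v_i$ with $\sum_{i=1}^{r} c_i^{2} = 1$, and a direct computation using $V^{T} v_i = e_i$ yields
\[
\|Mx\|^{2} \;=\; \bigl\|\Sigma V^{T} x\bigr\|^{2} \;=\; \sum_{i=1}^{r} c_i^{2}\, \sigma_i^{2}.
\]

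This last expression is a convex combination of $\sigma_1^{2},\dots,\sigma_r^{2}$, hence lies in the interval $[\sigma_r^{2}, \sigma_1^{2}] = [\sigma_{\min}^{2}, \sigma_{\max}^{2}]$. The lower bound is attained by taking $x = v_r$ (giving $\|Mx\| = \sigma_{\min}$) and the upper bound by $x = v_1$ (giving $\|Mx\| = \sigma_{\max}$). Taking square roots establishes both extremal claims. I do not anticipate any real obstacle here; the statement is essentially a reformulation of the SVD. Its role in the paper will be to quantify how widths of (truncated) slabs dilate or contract under the linear part $W_k$ of each affine map $M_k$, thereby yielding the factors $\sqrt{d_{\min}}$ and $\sqrt{d_{\max}}$ of Definition \ref{whatisd} that govern how $\ell$ and $\beta$ propagate from $T$ to $\delta X$ in Proposition \ref{Pro_DC2}.
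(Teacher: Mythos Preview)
Your argument is correct and is exactly the standard SVD derivation one would expect. The paper itself does not give a proof of this lemma at all: it is introduced with the sentence ``we collect some useful facts from linear algebra in the following lemma'' and then used immediately in the proof of Lemma~\ref{linear1}, so there is nothing to compare against. Your proof fills that gap cleanly.

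One minor remark on your closing comment about the lemma's role: in the paper's notation (Definition~\ref{whatisd}) $d_{\min}$ and $d_{\max}$ are defined as the \emph{squares} of the extreme nonzero singular values, yet Lemma~\ref{linear1} and Theorem~\ref{weakenedDC1} use $d_{\min}\beta$ and $d_{\max}\ell$ directly (not their square roots) as the dilation factors for slab widths. Since slab widths scale linearly by $\|Mu\|$ for the relevant unit direction $u$, the factor should indeed be $\sigma_{\min}$ rather than $\sigma_{\min}^{2}$, so your instinct to write $\sqrt{d_{\min}}$ is the geometrically natural one. This is a notational inconsistency in the paper rather than an issue with your proof.
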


For a linear map $M$ and a family of truncation sets $L$, take $M(L)$ to be the mapping of the truncation sets in $L$ by $M$. Thus, $M(L)$ is also a family of truncation sets. 

	\begin{lem}\label{linear1}
If $T$ satisfies the uniform doubling condition (Definition \ref{def:no_small_islated_data_cluster_5}) for some constants and some family of truncation sets $L$, then $M(T)$ also satisfies the variation uniform doubling condition (Definition \ref{def:no_small_islated_data_cluster_5}) for any linear map $M$ with the family of truncation sets $M(L)$. Take $d_{\min}=min_i{(\sigma_i)^2}$ and $d_{\max}=\max_i{(\sigma_i)^2}$ with $\sigma_i$ the nonzero singular values of $M$. Then all slabs $S \subset M(T)$ satisfy the doubling condition with $d_{\min}\beta$  instead of $\beta$, some $\ell'_S$ with $d_{\min}\ell \leq \ell'_S\leq d_{\max}\ell$ instead of $\ell$. 
 All other constants stay the same. Similarly with regard to Def. \ref{def:no_small_islated_data_cluster_5}. If $T$ satisfies Def. \ref{def:no_small_islated_data_cluster_5} then all slabs $S \subset M(T)$ satisfy the variation of the uniform doubling condition with $d_{\min}\beta$  instead of $\beta$, some $\ell'_S$ with $d_{\min}\ell_S \leq \ell'_S\leq d_{\max}\ell_S$ instead of $\ell_S$.
\end{lem}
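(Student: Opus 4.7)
The plan is to transfer the uniform doubling condition from $T$ to $M(T)$ via the pullback identity $\bar\mu_{M(T)}(A)=\bar\mu_T(M^{-1}(A))$, which holds because $\bar\mu_{M(T)}$ is the pushforward of $\bar\mu_T$ by $M$. The core calculation is that the preimage of a truncated slab in the codomain is again a truncated slab in the domain, with width rescaled by a factor controlled by the singular values of $M$.

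First, I would pick a truncated slab $S'\subset\R^m$ with unit normal $u$, center $Mc$, width $\epsilon$, and truncating half-spaces in some $h'\in M(L)$ coming from $h\in L$. Using the identity $(My-Mc)\cdot u=(y-c)\cdot(M^T u)$, the preimage $M^{-1}(S')$ is exactly the truncated slab in $\R^n$ with unit normal $M^T u/\|M^T u\|$, width $\epsilon/\|M^T u\|$, and truncation set $h$. Scaling the codomain width by $\kappa$ scales the domain width by the same $\kappa$, so if $\epsilon/\|M^T u\|=\ell\kappa^i<\beta$, then applying the uniform doubling condition on $T$ to $M^{-1}(S')$ and its $\kappa$-dilate and then using the pullback identity yields the doubling inequality on $M(T)$ at width $\epsilon$.

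To read off the new constants, restrict attention to unit vectors $u$ in the range of $M$ (equivalently $u\perp\ker M^T$), since normals in $\ker M^T$ give preimage slabs extending along $\ker M$ with no new $T$-mass, so the condition is trivial. By Lemma \ref{dilation} applied to $M^T$, whose nonzero singular values coincide with those of $M$, $\|M^T u\|$ takes values between the smallest and largest nonzero singular values of $M$. Hence the scale factor $\ell'_{S'}=\ell\|M^T u\|$ lies in $[d_{\min}\ell,d_{\max}\ell]$, the scale parameter becomes $d_{\min}\beta$, and $\delta,\sigma,\kappa$ are inherited unchanged. The statement for Definition \ref{def:no_small_islated_data_cluster_5} is identical, except that $\ell$ is replaced throughout by the slab-dependent $\ell_{M^{-1}(S')}$, so the same bounds give $\ell'_S\in[d_{\min}\ell_S,d_{\max}\ell_S]$.

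The main obstacle is the truncation bookkeeping when $M$ is not injective or not surjective: an arbitrary hyperplane in the domain need not push forward to a hyperplane in the codomain (it may drop dimension if its normal lies along a kernel direction), and conversely $M^{-1}$ of a codomain hyperplane may be of dimension larger than $n-1$. These are handled by defining $M(L)$ so that its truncation sets correspond precisely to elements of $L$ under the $v\mapsto M^T v$ correspondence on normals, and then carrying out the argument on the orthogonal complement of $\ker M$, where $M$ restricts to an isomorphism onto its range and both the slab-width ratio and the dilation bounds from Lemma \ref{dilation} apply cleanly.
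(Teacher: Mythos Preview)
Your proposal is correct and follows essentially the same route as the paper's proof: both pull back a truncated slab in the codomain to a truncated slab in the domain via the identity $(My-Mc)\cdot u=(y-c)\cdot M^Tu$ (the paper writes the truncation version $(M^Tv_i)\cdot y\le t_i$), then invoke Lemma~\ref{dilation} to bound the width-rescaling factor and read off the new $\beta$ and $\ell'_S$. Your write-up is in fact more explicit than the paper's---you spell out the pushforward-measure identity, the exact rescaling factor $\lVert M^Tu\rVert$, and the kernel/range bookkeeping for non-injective or non-surjective $M$, all of which the paper leaves implicit.
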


\begin{proof}
Suppose $v_i\cdot x\leq t_i$ then any $y$ s.t. $M\,y=x$ also satisfies that
  \[
(M^T\,v_i)\cdot y\leq t_i. 
\]

Take $S$ to be a truncated slab in $M(T)$ and $S'$ to be its preimage in $T$. By Lemma \ref{dilation} the width of every slab $M(S')$ is bigger or equal to $d_{\min}$, and thus the width of $M(S'_{\beta})$ is bigger or equal to $d_{\min}\beta$. Thus, we can take $d_{\min}\beta $ to be the scale parameter in $M(T)$. We now need to find the scale factor for the doubling condition in $M(T)$ 

For the slab $S$, there exists $d_{\min}\ell\leq \ell'_S\leq d_{\max} \ell$ such that the preimage of $S_{\ell'_S}$ is the slab $S'_{\ell_{S'}}$. We also have that the preimage of $S_{\kappa^i\ell'_S}$ is $S'_{\kappa^i\ell_{S'}}$. Thus because $S_{\kappa^i\ell_{S}}$ satisfies the doubling condition for ${\kappa^i\ell_S}<\beta$ we must have that $S_{\kappa^i\ell_S'}$ also satisfies the doubling condition for $\kappa^i\ell_S'<d_{\min}\beta$ with $d_{\min}\ell \leq \ell_S'<d_{\max}\ell$ for all $S$.        
We arrive at the conclusion that $d_{\min}\beta$ is the scale parameter in $M(T)$ and each slab $S$ in $M(T)$ now has a different scale factor $\ell'_S$.  
\end{proof}

In both of these lemmas, we fully quantify the manner in which the constants of the doubling condition change, and so given that a layer of a DNN is a linear map composed with the activation function we have a quantifiable manner in which the doubling condition propagates through a DNN. If we find an  $\beta$ for which the doubling condition holds on $T$, and we know $d_{\min}$ and $d_{\max}$ of every linear map of the DNN then we know exactly how $\delta X$ satisfies the doubling condition.

\begin{remark}
\label{parallel}
There is difficulty with allowing the family of truncation sets $L$ to consist of all possible truncation sets when $k>1$ (the size of the truncation sets is bigger than $1$). This is because two truncations $\beta$ close to each other will ensure that the uniform  doubling condition is violated, see Fig \ref{paralleltruncations}. Thus no data set will satisfy the current uniform doubling condition if $L$ contains all possible truncation sets with $k>1$. The truncation set does not move under the doubling of a slab $S$, and so we can bound any slab (regardless of the dimension we are in), by two truncations which are $\beta$ close to each other. 

However, we need the truncation number $k \geq$ the number of nodes in DNN. By Lemma \ref{Lemmaactivationfunction2} each layer of the DNN reduces the truncation number $k$ by the nodes in that layer. This is because each activation function $\lambda(x_1,x_2, \cdots, x_n) \to (|x_1|,|x_2|, \cdots, |x_n|)$ requires a reduction of the truncation number $k$ by at least $n$. So the doubling condition on $T$ implies stability of DNN only if $k \geq$ is the number of nodes in the DNN. But as mentioned, for $k \geq 2$, slabs will never satisfy the doubling condition for any data $T$. Example: if truncation number $k=2$ the slab $S^0_{\beta}$ does not satisfy the uniform doubling condition (truncations don't move under doubling), see Fig. \ref{paralleltruncations}.

		\begin{figure}[h!]
			\includegraphics[scale=1]{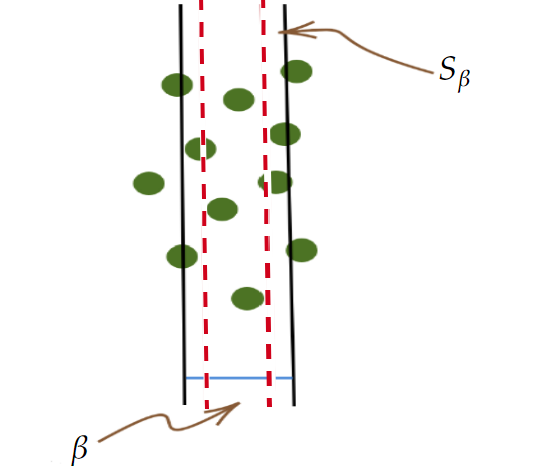}
			\caption{Slabs in $\R^2$. Red dashed lines are the original slab, and black solid lines are two truncations. $\mu(S_{\kappa \beta})=\mu(S_{ \beta})$ (slab bounded by truncations), thus the doubling condition is not satisfied.}
			\label{paralleltruncations}
		\end{figure}

'Almost' parallel truncations also pose a problem. This is because not only parallel but also 'almost' parallel truncations ensure that the doubling condition is never satisfied. The data set $T$ is bounded, so 'almost' parallel truncations can capture lots of data and ensure the doubling condition is not satisfied, see Fig. \ref{nearlypa}.

		\begin{figure}[h!]
			\includegraphics[scale=.8]{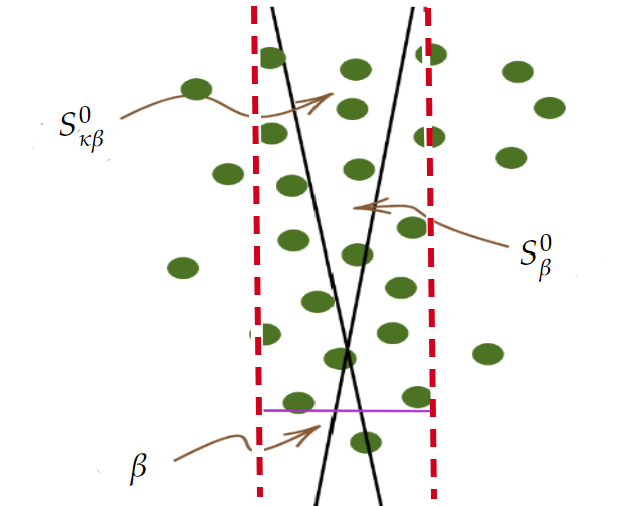}
			\caption{Slab in $\R^2$. Red dashed lines are the original slab (without the truncation), and black solid lines are two truncations. $S^0$ is the center slab after truncation. Even though truncations are not parallel, still  $\mu(S^0_{\kappa \beta})=\mu(S^0_{ \beta})$, thus the doubling condition is not satisfied.}
			\label{nearlypa}
		\end{figure}

		\begin{itemize}
			\item  So data $T$ never satisfies the uniform doubling condition with single $\beta$ because of 'almost'  parallel truncations
			\item Thus, we introduce the set $L$ of all admissible truncations and we use $L$ to impose restrictions on the allowed truncation sets. \end{itemize}
\end{remark}

	We now give a lemma providing an upper bound on the loss function. 	\begin{lem}\label{Upper_Bound_on_Loss}
For a DNN with the cross entropy loss function we have
    \begin{equation}
\sum_{s\in G_{\eta^*}^c(t)} \mu(s)\,\log (1+e^{-\eta^*})\leq \bar L(t)\leq \bar L(t_0).
\label{chebyshev}
\end{equation}

\end{lem}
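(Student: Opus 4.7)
My plan is to prove the two inequalities separately. For the left-hand inequality, I will unpack the cross-entropy loss using the definition of the softmax and then bound each summand in terms of $\delta X$. For the right-hand inequality, I will simply invoke the fact that training is performed by minimizing the loss (gradient descent or its stochastic variant on $\bar L$), so $\bar L(t)$ is non-increasing in $t$.

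For the lower bound, I would start from the cross-entropy loss \eqref{loss_function} and rewrite each term using the softmax \eqref{soft_max}:
\begin{equation}
-\log p_{i(s)}(s,\alpha) = \log\!\left(\frac{\sum_{j=1}^{K} e^{X_j(s,\alpha)}}{e^{X_{i(s)}(s,\alpha)}}\right) = \log\!\left(1 + \sum_{j\neq i(s)} e^{X_j(s,\alpha)-X_{i(s)}(s,\alpha)}\right).
\end{equation}
The next step is to introduce $\delta X$ by noting that
\begin{equation}
\sum_{j\neq i(s)} e^{X_j(s,\alpha)-X_{i(s)}(s,\alpha)} \;\geq\; \max_{j\neq i(s)} e^{X_j(s,\alpha)-X_{i(s)}(s,\alpha)} \;=\; e^{-\delta X(s,\alpha)},
\end{equation}
so that $-\log p_{i(s)}(s,\alpha) \geq \log(1+e^{-\delta X(s,\alpha)})$ pointwise on $T$.

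Now I specialize to $s\in G_{\eta^*}^c(t)$, where by definition \eqref{eq:good_set} we have $\delta X(s,\alpha(t)) \leq \eta^*$. Since $x\mapsto \log(1+e^{-x})$ is monotonically decreasing, this gives $-\log p_{i(s)}(s,\alpha(t))\geq \log(1+e^{-\eta^*})$ for every $s\in G_{\eta^*}^c(t)$. Dropping the non-negative contributions from $s\in G_{\eta^*}(t)$ (the terms $-\log p_{i(s)}$ are always $\geq 0$ because $p_{i(s)}\in(0,1]$), I obtain
\begin{equation}
\bar L(t) \;=\; \sum_{s\in T}\mu(s)\bigl(-\log p_{i(s)}(s,\alpha(t))\bigr) \;\geq\; \sum_{s\in G_{\eta^*}^c(t)} \mu(s)\,\log(1+e^{-\eta^*}),
\end{equation}
which is the desired lower bound.

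For the upper bound $\bar L(t)\leq \bar L(t_0)$, I would appeal to the training procedure itself: because $\alpha(t)$ is obtained by (stochastic) gradient descent applied to $\bar L$, the loss is non-increasing in $t$ (for continuous-time gradient flow this is immediate from $\frac{d}{dt}\bar L(\alpha(t)) = -\|\nabla \bar L\|^2\leq 0$; for discrete gradient descent with an appropriate step size, the standard descent lemma yields the same monotonicity). The only subtle point is the implicit assumption that the training dynamics actually decrease the loss; apart from this, the argument is a direct combination of the softmax formula with the elementary inequality $\sum_j e^{a_j}\geq \max_j e^{a_j}$, and I do not anticipate any real obstacle.
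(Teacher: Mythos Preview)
Your proof is correct and follows essentially the same route as the paper: both derive the pointwise lower bound $-\log p_{i(s)}\geq \log(1+e^{-\delta X})$ from the softmax formula via $\sum_{j\neq i(s)} e^{X_j-X_{i(s)}}\geq e^{-\delta X}$, then restrict to $G_{\eta^*}^c(t)$ using monotonicity, while the upper bound $\bar L(t)\leq \bar L(t_0)$ is in both cases just the assumption that training is loss-decreasing.
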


See Section \ref{Appendix_A} for proof of the lemma. Now we derive an upper bound on the loss function based on the uniform doubling condition on $\delta X$.
\begin{lem}\label{lem:loss_bound_3}
	Suppose that $T\subset\mathbb \R^n$ with weights $\mu(s)$ is a training set for a softmax DNN such that $\delta X(T)$ satisfies the uniform doubling condition, at time $t_0$ at $x=0$, for some constants $\sigma, \delta, \beta$, $1>\ell>0$ and $\kappa>1$.  If for some $1<\eta<\frac{\beta}{2}$, $\delta_0>0$, and $\delta_0<\delta$,
	\begin{equation}\label{eq:good_set_big_2a2}
	\mu(G_\eta(t_0))>1-\delta_0
	\end{equation}
	and
	\begin{equation}\label{eq:bad_set_empty_2a}
	B_{-\xi}(t_0)=\emptyset,
	\end{equation}
	then the cross-entropy loss is bounded by:
	\begin{equation}
	\label{eq:loss_bounded_2}
	\begin{split}
	  \bar L(t_0)
          \leq  \log(1+(K-1)e^{-\eta})+\log(1+(K-1)\,e^{-(\frac{\eta}{\kappa} )})\,\delta_0+\\ \frac{\delta_0}{(\sigma+1)^{\frac{\log\left(\frac{\eta}{\xi}\right)}{\log \kappa}-1}}\log(1+e^{\xi}\,(K-1))
\\       +\sum_{i=0}^{p-1} \frac{\delta_0}{(\sigma+1)^{\frac{\log\left(\frac{\eta}{\ell \kappa^{i+1}}\right)}{\log \kappa}-1}}\log(1+(K-1) e^{-\ell \kappa^{i+1}}).
 	\end{split}
\end{equation}

 with  $\xi=\ell \kappa^i$ for some $i \geq 0$ and with $p=\lfloor\log(\frac{\eta}{\xi})/\log\kappa\rfloor$.
\end{lem}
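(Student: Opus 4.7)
The plan is to bound the cross-entropy loss term by term using the softmax representation and then partition the training set into annular regions of $\delta X$ dictated by the uniform doubling condition.

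First, I would exploit the softmax identity
\[
-\log p_{i(s)}(s,\alpha) = \log\left(1 + \sum_{j \neq i(s)} e^{X_j(s,\alpha) - X_{i(s)}(s,\alpha)}\right) \leq \log(1 + (K-1)e^{-\delta X(s,\alpha)}),
\]
since each exponent $X_j - X_{i(s)}$ is at most $-\delta X$ by definition of $\delta X$. This reduces bounding $\bar L(t_0)$ to bounding $\sum_s \mu(s)\log(1+(K-1)e^{-\delta X(s,\alpha(t_0))})$, a decreasing function of $\delta X$.

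Second, I would partition $T$ according to the value of $\delta X(s,\alpha(t_0))$: the good set $G_\eta$ (where $\delta X > \eta$); the outer shell $\eta/\kappa < \delta X \leq \eta$; intermediate shells $\ell\kappa^{i+1} < \delta X \leq \ell\kappa^{i+2}$ for $i = 0,\ldots,p-1$; and the innermost region $-\xi \leq \delta X \leq \xi$ (permitted because $B_{-\xi}(t_0) = \emptyset$). On each shell, since $\log(1+(K-1)e^{-\delta X})$ is decreasing in $\delta X$, the contribution is at most the measure of the shell times the log evaluated at the lower endpoint.

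Third, to estimate the measure of each shell I would use $\mu(G_\eta(t_0)) > 1 - \delta_0$ together with $B_{-\xi} = \emptyset$ to conclude that the interval $I_\eta$ around $0$ with half-width $\eta$ has mass $\mu(I_\eta) \leq \delta_0 < \delta$. Then I apply the uniform doubling condition on $\delta X$ in reverse: each reduction of the half-width by a factor $\kappa$ divides the measure by at least $1+\sigma$, as long as the measure stays below $\delta$. Cascading this through the admissible half-widths $\eta, \eta/\kappa, \ell\kappa^p, \ldots, \ell\kappa, \xi$ produces the $(\sigma+1)^{-(\log(\eta/\xi)/\log\kappa - 1)}$ factors appearing in the claimed bound.

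Fourth, I would collect all contributions. The good set contributes $\log(1+(K-1)e^{-\eta})$ with measure at most $1$; the outer shell $(\eta/\kappa,\eta]$ of measure at most $\delta_0$ contributes $\log(1+(K-1)e^{-\eta/\kappa})\delta_0$; the intermediate shells contribute the summation $\sum_{i=0}^{p-1}\!\ldots$; and the innermost region, with measure bounded by $\delta_0/(\sigma+1)^{\log(\eta/\xi)/\log\kappa - 1}$ and $-\log p_{i(s)} \leq \log(1+(K-1)e^{\xi})$ since $\delta X \geq -\xi$ there, contributes the remaining term. The main obstacle is the bookkeeping: ensuring every interval used is of the admissible form $\ell\kappa^i$ so that Definition \ref{eq:uniform_doubling_condtion_deltaX} applies, verifying that the intermediate mass bounds remain strictly below $\delta$ at each step so the inequality $\mu(\kappa I) \geq (1+\sigma)\mu(I)$ is available, and correctly counting doubling steps so the exponents of $(\sigma+1)$ match the stated formula.
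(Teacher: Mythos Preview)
Your proposal is correct and follows essentially the same route as the paper: the softmax upper bound $\bar L(t_0)\le\sum_s\mu(s)\log(1+(K-1)e^{-\delta X(s)})$, a partition of $\{\delta X\le\eta\}$ into dyadic shells at the admissible scales $\ell\kappa^i$, and the reverse use of the uniform doubling condition (starting from $\mu(\{\delta X<\eta\})\le\delta_0<\delta$ and dividing by $1+\sigma$ at each scale reduction) to bound the mass of each shell. The paper's only cosmetic difference is in the labeling of the shells (it takes $I_i=[\ell\kappa^i,\ell\kappa^{i+1})$, $I_{-1}=[-\xi,\ell)$, $I_p=[\ell\kappa^p,\eta)$), which is exactly the bookkeeping issue you already flagged.
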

\begin{proof}
 We have that $\eta\geq 1$.  Consider $\kappa>1$ and $I:=[-\ell,\ell)\subset [-\infty,\eta)$. Since $\eta<\frac{\beta}{2}$, we may apply the doubling condition at point $0$ and we have that
      
	\begin{equation*}
	\mu\left(\left\{s\in  T:\delta X(s,\alpha(t_0))\in \kappa\,I\right\}\right)\geq\min\{\delta,\;(1+\sigma)\,\mu\left(\left\{s\in T:\delta X(s,\alpha(t_0))\in I\right\}\right)\}.
	\end{equation*}
	Applying the doubling condition repeatedly $j$ times in this last case, we conclude that
	\begin{equation*}
	\mu\left(\left\{s\in  T:\delta X(s,\alpha(t_0))\in \kappa^j\,I\right\}\right)\geq\min\{\delta,\;(1+\sigma)^j\mu\left(\left\{s\in  T:\delta X(s,\alpha(t_0))\in I\right\}\right)\}.
	\end{equation*}
	By \eqref{eq:good_set_big_2a2}, $\mu\left(\left\{s\in T:\delta X(s,\alpha(t_0))<\eta\right\}\right)\leq\delta_0$. Therefore, provided $\kappa^j I\subset (-\infty,\;\eta)$,
	\begin{equation*}
	\delta_0\geq\mu\left(\left\{s\in T:\delta X(s,\alpha(t_0))\in \kappa^j\, I\right\}\right)\geq\min\{\delta,\;(1+\sigma)^j\mu\left(\left\{s\in  T:\delta X(s,\alpha(t_0))\in I\right\}\right)\}.
	\end{equation*}
We conclude that
	\begin{equation}\label{eq:interval_size_estimate_2WDC}
	\mu\left(\left\{s\in  T:\delta X(s,\alpha(t_0))\in I\right\}\right)\leq \frac{\delta_0}{(\sigma+1)^j}
	\end{equation}
	for all $j$ so that $\kappa^j I\subset(-\infty,\eta)$, or equivalently, for all $j$ so that $\ell \kappa^{j}\,\leq\eta$. Obviously, \eqref{eq:interval_size_estimate_2WDC} is best for $j$ as large as possible with the largest value given by
	\begin{equation*}
j_{\text{max}}=\left\lfloor\frac{\log\left(\frac{\eta}{\ell}\right)}{\log \kappa}\right\rfloor>\frac{\log\left(\frac{\eta}{\ell}\right)}{\log\kappa}-1.
	\end{equation*}
	Therefore         we have thus proved the bound,
	\begin{equation*}
 \label{interval_bound}
	\begin{split}
	\mu\left(\left\{s\in  T:\delta X(s,\alpha(t_0))\in I\right\}\right)&\leq \frac{\delta_0}{(\sigma+1)^{j_{\text{max}}}}\\
	&<\frac{\delta_0}{(\sigma+1)^{\frac{\log\left(\frac{\eta}{\ell}\right)}{\log \kappa}-1}}\\
        \end{split}
	\end{equation*}

	Now we can apply the calculation which obtained \eqref{interval_bound} to the intervals $[0,\ell \kappa^{i})$ with $i\in \N$. Let $p=\lfloor\log(\eta /\ell)/\log\kappa\rfloor$ and let $I_i=[\ell \kappa^i,\ell \kappa^{i+1})$ for all $i\in \N$ with $i<p$. We also define $I_{-1}=[-\xi,\ell)$ and $I_p=[\ell \kappa^p,\ \eta)$. For $i\geq 0$, $I_i\subset [-\ell \kappa^{i+1},\ \ell \kappa^{i+1}]$. Therefore by the above calculation which gave  \eqref{interval_bound} we obtain,
	\begin{equation}\label{eq:interval_size_estimate_3WDC}
	\mu\left(\left\{s\in  T:\delta X(s,\alpha(t_0))\in I_i\right\}\right)<\frac{\delta_0}{(\sigma+1)^{\frac{\log\left(\frac{\eta}{\ell \kappa^{i+1}}\right)}{\log \kappa}-1}}\\ 
	\end{equation}
	The interval $I_{-1} \subset [-\xi,\xi]$ which is centered at $0$ and has width smaller than $\xi$, so
	\begin{equation}\label{eq:interval_size_estimate_4WDC}
	\mu\left(\left\{s\in T:\delta X(s,\alpha(t_0))\in I_{-1}\right\}\right)\leq \frac{\delta_0}{(\sigma+1)^{\frac{\log\left(\frac{\eta}{\xi}\right)}{\log \kappa}-1}}.
	\end{equation}
        Finally since $\kappa^p>\kappa^{\log\eta/\log\kappa-1}$, we simply bound for $I_p$ 
        \begin{equation}\label{eq:interval_size_estimate_5WDC}
	\mu\left(\left\{s\in T:\delta X(s,\alpha(t_0))\in I_{p}\right\}\right)\leq \delta_0.
	\end{equation}
	For each $s\in T$, either $\delta X(s,\alpha(t_0))\geq \eta$, or $\delta X(s,\alpha(t_0))\in I_i$ for integer $i\geq-1$. Therefore, 
	\begin{equation*}
	\begin{split}
	\bar L(t_0)&\leq\sum_{s\in T}\mu(s)\log\left(1+(K-1)e^{-\delta X(s,\alpha(t_0))}\right)\\
	&=\sum_{\substack{s\in T\\ \delta X(s,\alpha(t_0))\geq\eta}}\mu(s)\log\left(1+(K-1)e^{-\delta X(s,\alpha(t_0))}\right)+\sum_{i=-1}^{p}\sum_{\substack{s\in T\\ \delta X(s,\alpha(t_0))\in I_i}}\mu(s)\log\left(1+(K-1)e^{-\delta X(s,\alpha(t_0))}\right)\\
	&\leq\log(1+(K-1)e^{-\eta})\sum_{\substack{s\in T\\ \delta X(s,\alpha(t_0))\geq\eta}}\mu(s)+\sum_{i=-1}^{p}\log\left(1+(K-1)e^{-\inf I_i}\right)\sum_{\substack{s\in T\\ \delta X(s,\alpha(t_0))\in I_i}}\mu(s).\\
        \end{split}
        \end{equation*}
By decomposing       
	\begin{equation*}
	\begin{split}
	\bar L(t_0)&	\leq \log(1+(K-1)e^{-\eta})\mu(G_\eta(t_0))+\sum_{i=-1}^{p}\log\left(1+(K-1)e^{-\inf I_i}\right)\mu\left(\left\{s\in T:\delta X(s,\alpha(t_0))\in I_i\right\}\right)\\
	&\leq \log(1+(K-1)\,e^{-\eta})+\log\left(1+(K-1)e^\xi\right)\mu\left(\left\{s\in T:\delta X(s,\alpha(t_0))\in I_{-1}\right\}\right)\\
        &+\log(1+(K-1)\,e^{-(\frac{\eta}{\kappa})})\,\mu\left(\left\{s\in T:\delta X(s,\alpha(t_0))\in I_{p}\right\}\right)\\
        &+\sum_{i=0}^{p-1}\log(1+(K-1)e^{-\kappa^{i+1}})\mu\left(\left\{s\in T:\delta X(s,\alpha(t_0))\in I_{i+1}\right\}\right).\\
\end{split}
	\end{equation*}
We now use \eqref{eq:interval_size_estimate_3WDC}, \eqref{eq:interval_size_estimate_4WDC} and \eqref{eq:interval_size_estimate_5WDC} to derive  
        	\begin{equation}\label{eq:cond_B_loss_estimateWDC}
	\begin{split}
	  \bar L(t_0)
          \leq  \log(1+(K-1)e^{-\eta})+\log(1+(K-1)\,e^{-(\frac{\eta}{\kappa })})\,\delta_0+\\ \frac{\delta_0}{(\sigma+1)^{\frac{\log\left(\frac{\eta}{\xi}\right)}{\log \kappa}-1}}\log(1+e^{\xi}\,(K-1))
\\       +\sum_{i=0}^{p-1} \frac{\delta_0}{(\sigma+1)^{\frac{\log\left(\frac{\eta}{\ell \kappa^{i+1}}\right)}{\log \kappa}-1}}\log(1+(K-1) e^{-\ell \kappa^{i+1}}).
 	\end{split}
	\end{equation}
         %

\subsection{Proof of Proposition \ref{lemmadoublingconditionondeltaX}}
We start with the following bound given in Lemma \ref{Upper_Bound_on_Loss} 
\begin{equation}
\sum_{s\in G_{\eta^*}^c(t)} \mu(s)\,\log (1+e^{-\eta^*})\leq \bar L(t)\leq \bar L(t_0).
\label{chebyshev}
\end{equation}
As a consequence, we obtain from Lemma \ref{lem:loss_bound_3} that provided $G_{-\xi}^c(t_0)=\emptyset$ and $\mu(G_\eta(t_0))>1-\delta_0$,
\begin{equation}
\begin{split}
\label{eq:constant_C_3}
\mu(G_0^c(t))
          \leq C_3(\eta,K,\kappa,\sigma,\ell,\xi):= \frac{1}{\log(2)}  (\log(1+(K-1)e^{-\eta})+\log(1+(K-1)\,e^{-(\frac{\eta}{\kappa})})\,\delta_0+\\ \frac{\delta_0}{(\sigma+1)^{\frac{\log\left(\frac{\eta}{\xi}\right)}{\log \kappa}-1}}\log(1+e^{\xi}\,(K-1))
      +\sum_{i=0}^{p-1} \frac{\delta_0}{(\sigma+1)^{\frac{\log\left(\frac{\eta}{\ell \kappa^{i+1}}\right)}{\log \kappa}-1}}\log(1+(K-1) e^{-\ell \kappa^{i+1}})).
 	\end{split}
  \end{equation}
Here $p=\lfloor\log(\eta/\ell)/\log\kappa\rfloor$. This completes the proof. 

\end{proof}

   We can now present the proof for the main result in this paper, Theorem \ref{weakenedDC1}

\subsection{Proof of Theorem \ref{weakenedDC1}}

\begin{proof}
		Take $I$ to be an interval in $\R^1$ centered at $x=0$ with width $ \ell' \kappa^i<d_{\min} \beta$ with $\ell_S<d_{\max}\ell$. We wish to prove that $\exists \ell'$ for which $\mu(I_{ \kappa})>\min \{\ \delta, (1+\sigma)\mu (I) \}\ $ at $t_0$. Take $\tilde T$ to be the image of the data $T$ by the DNN. We note that if $f^{-1}_i(\tilde T)$ (the preimage of $\tilde T$ with respect to the last layer in the DNN) satisfies the uniform doubling condition with truncation set $\{f^{-1}_i(P_1),f^{-1}_i(P_2), \cdots f^{-1}_i(P_{c(i)})\}\ $ $\forall f^{-1}_i(P_k)$ with $1 \leq k \leq c(i)$ such that $f^{-1}_i(P_k)$ are strict subsets of $R^{c(i-1)}$, then $\tilde T$ satisfies the variation uniform doubling condition (Def. \ref{def:no_small_islated_data_cluster_5}) though with different constants given in lemma \ref{Lemmaactivationfunction2}. In fact, by lemmas \ref{Lemmaactivationfunction2} and \ref{dilation}, we get that if $f^{-1}_i(\tilde T)$ satisfies the uniform doubling condition with $\beta$, $\sigma$, $k$, $\ell$ and the mentioned truncation set, then $\tilde T$ satisfies the variation uniform doubling condition with $\frac{\sigma}{2^{c(i)}}$, $d'_{\min}\beta$ and some $\ell d'_{\min}\leq \ell'\leq d'_{\max}\ell$. Here $d'_{\min}$ is the smallest singular value of the matrix part of the affine function $M_i$ and $d'_{\max}$ is the largest singular value.   However, for a.e. DNN the truncation set $\mu(\{f^{-1}_i(P_1),f^{-1}_i(P_2), \cdots f^{-1}_i(P_{c(i)})\})$ has measure zero (the measure with respect to the data), and so by lemma \ref{Lemmaactivationfunction2} we have that $\sigma$ remains the same.
		
		Similarly, assume  $f^{-1}_{i-1}(f^{-1}_i(\tilde T))$ satisfies the uniform doubling condition with truncation set 
		$$    
		\{f^{-1}_{i-1}(P_1),f^{-1}_{i-1}(P_2), \cdots f^{-1}_{i-1}(P_{c(i-1)})\}\ \cup \{f^{-1}_{i-1}(f^{-1}_i(P_1)), \dots, f^{-1}_{i-1}(f^{-1}_i(P_{c(i})))\}\  
		$$
		
		(again only considering the preimages which are strict subsets of $R^{c(i-2)}$). Then we have that $f^{-1}_i(\tilde T)$ satisfies the variation uniform doubling condition with truncation set $$\{f^{-1}_i(P_1),f^{-1}_i(P_2), \cdots f^{-1}_i(P_{c(i)})\}\ $$ and so $\tilde T$ satisfies the doubling condition for $d_{\min} \ell \leq \ell_S \leq \ell d_{\max}$ with $d_{\min}$ and $d_{\max}$ coming from $W_2 \circ W_1$.  
		
		Thus, by induction if $T$ satisfies the uniform doubling condition with truncation set $B \in L$ then, for a.e. DNN, $\tilde T$ satisfies the variation uniform doubling condition with $d_{min}\beta$, and some $d_{\max}\ell\leq \ell_S \leq d_{\max}\ell$. Finally, we must show that if $\tilde{T}$ satisfies the uniform  doubling condition then so does $\delta X(\tilde{T})$, though possibly with different constants. Note that $\delta X(s)$ for $s$ in the class $i(s)$ is a shift of $X_{i(s)}(s,\alpha)$ by $\max_{j\neq i(s)} X_j(s,\alpha)$ and then a projection of the other components of $X(s,\alpha)$ to zero. Thus, $\delta X(s,\alpha)$ restricted to every class $i(s)$ and further restricted to elements $s \in T$ such that $\max_{j\neq i(s)} X_j(s,\alpha)$ occurs for class $i'$ is a linear map with $d_{\max}=d_{\min}=1$.  Thus, by using Proposition \ref{lemmadoublingconditionondeltaX} and the fact that $\delta X$ satisfies the uniform doubling condition at $x=0$ at time $t_0$, with the constants $d_{\min}\beta$, some $ \ell d_{\min} \leq \ell_I \leq \ell d_{\max}$ and $\sigma$, we obtain the result. Specifically, the constant $C_2$ is obtained from taking the max over all $ \ell d_{\min} \leq \ell_I \leq \ell d_{\max}$ of the constant $C_3$. All other constants stay the same.

		\end{proof}

\section{Main Result 2: Other Activation Functions }\label{otheractivationfunctions}

The goal of this section is to prove that piecewise linear activation functions with only finitely many critical points also preserve the uniform doubling condition. As we will see, there is only one problem that arises when trying to show that a piecewise linear function $g$ (with only finitely many critical points) preserves the doubling condition. This is dealing with the critical points of $g$. 
\begin{defn}
Take $g$ to be a $1D$ function with only finitely many critical points in every bounded interval. Take $v_g$ to be the infimum distance between the critical values of neighboring critical points of $g$. Each critical point $u$ has at most two neighbors, which are the closest two critical points to $u$ with one on its left and the other on its right.
\end{defn}

\begin{lem}\label{PAF}
Let $g_1: \R \to \R$ be any piece-wise continuous linear functions with only a finite number of critical points in every bounded interval. Suppose for $g_1$ that $v_{g_1}$ is bigger than $2\kappa\beta$. Take $\{a^1_1, \dots, a^i_1\}$ to be the set of critical points of $g_1$, and suppose the data set $T$ satisfies the uniform doubling condition with the set $L$ consisting of the hyperplanes $\{P_{a^1_1}, \dots, P_{a^n_1}\}$ with $P_{a^k_1} \subset \R^n$ and $P_{a^k_1}= \{(x_1,x_2, \dots, x_n): x_1=a^k_1\}$. Then $$g:(x_1,x_2,\dots, x_n)
\to (g_1(x_1),x_2,\dots, x_n)$$ preserves the doubling condition for $T$ with new constants, $\alpha_1\ell$ instead of $\ell$, $\sigma/2$ instead of $\sigma$, $\alpha_2 \beta$ instead of $\beta$ and $k-\hat{l}$ instead of $k$.

Here, we take $l=max_{x\in g(T)}(|x_1|)$ and $\hat{l}$ is the number of critical points of $g_1$ in the interval $[-l,l]$. We take $\alpha_1$ to be the largest slope of the linear piecewise function $g_1$ and $\alpha_1$ to be the smallest slope.  
\end{lem}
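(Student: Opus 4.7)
The plan is to extend the argument of Lemma \ref{Lemmaactivationfunction2} from the absolute value to a general piecewise linear $g_1$ by decomposing the domain of $g_1$ at its critical points and invoking Lemma \ref{linear1} on each resulting linear piece. Since $g$ is the identity in coordinates $x_2, \dots, x_n$, it suffices to understand how slabs in $g(T)$ pull back across the 1D map $g_1$.

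I would split the argument by the orientation of a slab $S$ in $g(T)$. When its unit normal $u$ satisfies $u_1 = 0$, the slab's defining constraint does not involve $g_1(x_1)$, so $g^{-1}(S) = S$ viewed inside $T$ with no critical-point truncations consumed, and the doubling condition propagates with all constants unchanged. In the main case $u_1 \neq 0$, partition $\R^n$ into the strips $R_k = \{x : a^k_1 \leq x_1 \leq a^{k+1}_1\}$ together with the two unbounded boundary pieces. On each strip $R_k$, the map $g$ is affine with linear part $\mathrm{diag}(m_k, 1, \dots, 1)$ whose singular values lie between $\min(1,\alpha_2)$ and $\max(1,\alpha_1)$, and $g^{-1}(S) \cap R_k$ is a slab in $\R^n$ truncated by the critical-point hyperplanes $P_{a^k_1}$ and $P_{a^{k+1}_1}$. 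Applying Lemma \ref{linear1} strip by strip, these truncated preimage slabs inherit a doubling condition from $T$: choosing image scale factor $\alpha_1 \ell$ yields preimage widths of at least $\ell$, and image scale parameter $\alpha_2 \beta$ yields preimage widths of at most $\beta$, so the hypotheses of the doubling condition on $T$ are met inside each strip.

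Summing the strip-by-strip doubling inequality
\begin{equation*}
\mu\bigl(g^{-1}(S_{\kappa\epsilon}) \cap R_k\bigr) \;\geq\; \min\bigl\{\delta,\;(1+\sigma)\,\mu\bigl(g^{-1}(S_\epsilon) \cap R_k\bigr)\bigr\}
\end{equation*}
over all strips contributing nonzero mass then delivers the doubling condition on $g(T)$, since $g$ restricted to each open strip is a bijection onto its image. The hypothesis $v_{g_1} > 2\kappa\beta$ enters at this step: it guarantees that any image slab of width at most $\kappa \alpha_2 \beta$ cannot reach across two neighboring critical values of $g_1$ simultaneously, so that preimage pieces on different strips remain disentangled and the inequalities aggregate cleanly into a single doubling inequality on the union. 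The truncation count drops from $k$ to $k-\hat{l}$ because the $\hat{l}$ critical-point hyperplanes $P_{a^j_1}$ with $a^j_1 \in [-l,l]$ are absorbed as strip boundaries in the decomposition, leaving $k-\hat{l}$ truncations free for use in the truncation set $L$.

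The main obstacle, exactly as in Lemma \ref{Lemmaactivationfunction2}, is bookkeeping for data lying on a critical-point hyperplane $P_{a^k_1}$: such points belong simultaneously to two adjacent strips, and naive summation double-counts their mass. The same appendix-style argument as in the absolute-value case then applies to each pair of adjacent strips, at the cost of degrading the doubling factor from $1+\sigma$ to $1+\sigma/2$. When no element of $T$ lies on any $P_{a^j_1}$, which is the generic situation for a.e.\ DNN, the original factor $1+\sigma$ is preserved. Combining this with the rescaled parameters $\alpha_1 \ell$, $\alpha_2 \beta$, and $k-\hat{l}$ established above completes the proof.
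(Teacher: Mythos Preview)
Your proposal follows essentially the same approach as the paper: decompose the domain by the critical-point hyperplanes, apply the per-piece doubling inequality on each strip (the paper does this directly rather than citing Lemma \ref{linear1}), and then combine the pieces using the averaging trick from Lemma \ref{Lemmaactivationfunction2}, invoking $v_{g_1}>2\kappa\beta$ so that the pieces attached to different critical points do not interfere when summed. The only organizational difference is that the paper pairs the two strips adjacent to each critical point and sums over critical points (mirroring its Step~1 for a single critical point), whereas you sum over strips and handle the shared boundaries at the end; these are equivalent rearrangements of the same argument.
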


\begin{proof}
\par
\noindent

Step 1: $g_1$ has one critical point.

Now let us assume that $g$ has critical points. Note that such critical points are obtained from critical points of $g_1$. In general, when dealing with such critical points we use the same strategy that was used with regard to absolute value activation functions (when $g_1$ is the absolute value). First, let us assume that there is only one critical point for $g$ and that this critical point is at $x_1=a^1_1$.

For all truncated slabs $E$, we want to show that $E$ satisfies the doubling condition for $\beta'=\alpha_2 \beta$ and $\ell_E<\alpha_2 \ell$. Take $\bar {E}_\eps$ to be the preimage of $E_\epsilon$ with $x_1 \geq a^1_1$ and $ \bar E'_{\eps}$ to be the preimage with $x_1 \leq a^1_1$. We can separate
\[\begin{split}
&\bar E_\eps\,=E^2_{\eps}\cup E^0_{\eps},\quad \bar E'_\eps\,=E^1_{\eps}\cup E^0_{\eps},\\
& E^2_\eps=\bar E_\eps\,\cap \{x_1>a^1_1\},\quad E^0_{\eps}=\bar E_\eps\,\cap \{x_1=a^1_1\}=\bar E'_\eps\,\cap \{x_1=a^1_1\},\quad E^{{1}}_\eps=\bar E'_\eps\,\cap \{x_1<a^1_1\}.
  \end{split}\]

Because $\ell'=\alpha_1\ell$ and $\beta'=\alpha_2 \beta$ we have that $\bar {E}_\epsilon$ and $\bar{E}'_\epsilon$ satisfy the doubling condition for $\epsilon=\ell'_E\kappa^i<\alpha_2\beta$ for some $\ell'_E<\alpha_1\ell$ depending on $\bar {E}$ and $\bar{E}'$. Thus, for $ \bar E_\eps$, we have $\forall\eps=\ell'_{\bar{E}}\kappa^i<\alpha_2\beta$
\begin{equation}
\mu(\bar E_{\kappa\,\eps\,})=\mu(E^2_{\kappa\,\eps})+\mu(E^0_{\kappa\,\eps})\geq \min\left\{\delta,\;\max\{m_0,(1+\sigma)\,\mu(\bar E_\eps\,)\}-m_0\right\}.
\label{measE+_new_activation_fuinction}
\end{equation}
Similarly with regard to $ \bar E'_\eps$, $\forall\eps=\ell'_{\bar{E'}}\kappa^i<\alpha_2\beta$
\begin{equation}
\mu(\bar E'_{\kappa\,\eps\,})=\mu(E^1_{\kappa\,\eps})+\mu(E^0_{\kappa\,\eps})\geq \min\left\{\delta,\;\max\{m_0,(1+\sigma)\,\mu(\bar E'_\eps\,)\}-m_0\right\}.
\label{measE-_new_activation_fuinction}
\end{equation}

Now, if either $(1+\sigma)\,\mu(\bar E_\eps\,)\geq \delta$ or $(1+\sigma)\,\mu(\bar E'_\eps\,)\geq\delta$, we are done.

 If both $ (1+\sigma)\,\mu(\bar E_\eps\,)<\delta$ and $ (1+\sigma)\,\mu(\bar E'_\eps\,)<\delta$. Then
\begin{equation}
    \begin{split}
  \mu(g^{-1}(E_{\kappa\,\epsilon\,}))&=\mu(E^1_{\kappa\,\epsilon})+\mu(E^0_{\kappa\,\epsilon}) +\mu(E^2_{\kappa\,\epsilon})\\&\geq \frac{\mu(E^1_{\kappa\,\epsilon})+\mu(E^1_{\epsilon})}{2}+\mu(E^0_{\kappa\,\epsilon}) +\frac{\mu(E^2_{\kappa\,\epsilon}) +\mu(E^2_{\epsilon}) }{2}\\
  &\geq \mu(E^1_{\epsilon})+\mu(E^0_{\epsilon}) +\mu(E^2_{\epsilon})+\frac{\mu(E^1_{\kappa\,\epsilon})+2\mu(E^0_{\kappa\,\epsilon}) +\mu(E^2_{\kappa\,\epsilon})- \mu(E^1_{\epsilon})-2\,\mu(E^0_{\epsilon}) -\mu(E^2_{\epsilon})}{2}.
  \end{split}
  \label{new_activation_function_3}
\end{equation}

Again, we are using the averaging trick to deal with the complication that we only have one $E_\epsilon^0$ and $E_{\kappa \epsilon}^0$.  
Applying now \eqref{measE+_new_activation_fuinction} and \eqref{measE-_new_activation_fuinction}, we find that
\begin{equation}
    \mu(g^{-1}(E_{\kappa\,\eps\,})\geq \mu(g^{-1}(E_\eps\,)) +\frac{\sigma}{2}\,(\mu(\bar E_\eps\,) +\mu(\bar E'_\eps\,)) \geq (1+\sigma/2)\, \mu(g^{-1}(E_\eps\,)) 
\end{equation}
proving the doubling condition with $\sigma'=\sigma/2$.

 Note that this part of the proof shows that absolute value activation functions and Leaky ReLU activation functions also preserve the doubling condition. In all these cases we only obtain a single
$E_\epsilon^0$ while we need two $E_\epsilon^0$ to use the doubling condition. Once again, we deal with this complication by introducing the averaging trick and exchanging $\sigma$ with $\frac{\sigma}{2}$.

\par
\noindent

Step 2: $g_1$ has general critical points.

 At this point, we wish to show that if $g_1$ is any piecewise linear function with only a finite number of critical points in every bounded interval then $$g:(x_1,x_2,\dots, x_i)
\to (g_1(x_1),x_2,\dots, x_i)$$ preserves the doubling condition for any bounded data $T$, though with different constants.

Take $\tilde{E_\epsilon}$ to be the preimage of the slab $E_\epsilon$ and $a^1_1<...<a^i_1$ to be the critical points of $g_1$. Recall, the images of neighboring $a^i_1$ (a critical point has at most two neighbors which are the two closest critical points to it, from the left and right) are more than $2\kappa\beta$ distance away from one another. For $\epsilon<\beta'$ we separate $\bar{E}_{a_i \epsilon} =\tilde{E}_{ \epsilon}\cap \{a^{i+1}_1 \geq x_1 \geq a^i_1\}$ and  $\bar{E}'_{a'_i \epsilon} =\tilde{E}_{ \epsilon}\cap \{ a^{i-1}_1 \leq x_1  \leq  a^i_1\}$. Note each disjoint connected component of $\tilde{E}_\epsilon$ intersects at most one critical point of $g_1$. Furthermore, for each $i$, we have that $\bar{E}_{a_i \kappa\epsilon}\cup \bar{E'}_{a'_i \kappa \epsilon}$  are disjoint.

 Now take, $E^2_{a_i \epsilon}=\bar{E}_{a_i \epsilon}\cap \{x_1 > a^i_1\}$, $E^1_{a'_i \epsilon}=\bar{E'}_{a'_i \epsilon}\cap \{x_1 < a^i_1\}$ and $E^0_{a_i \epsilon}$ is the complement of the two.  

 We must show that $E_\epsilon$ satisfies the doubling condition for $\epsilon=\ell'_E<\beta'$ with $\ell'_E\kappa^i<\ell\alpha_1$ and depending on the slab $E^1_{a_i\epsilon}$ and $E^2_{a'_i\epsilon}$. That is, take $E_\epsilon$ with $\epsilon<\beta'$.  

If $(1+\sigma)\mu(\bar{E}_{a_i \epsilon})>\delta$ or $(1+\sigma)\mu(\bar{E}'_{a'_i \epsilon})>\delta$ for any $i$ we are done. Dealing with the other cases, we look at each pair of slabs $\bar{E}_{a_i \epsilon}$ and $\bar{E}'_{a'_i \epsilon}$ together. If both $(1+\sigma)\bar{E}_{a_i \epsilon}<\delta$ and $(1+\sigma)\bar{E}'_{a'_i \epsilon}<\delta$ (we do this for every $i$ individually) then the same averaging strategy from above works for these two. Specifically,

\begin{equation}
\begin{split}
  \mu(g^{-1}(E_{\kappa\,\epsilon\,}))|_{\bar{E}_{a_i \kappa \epsilon} \cup \bar{E}'_{a'_i \kappa \epsilon}}  >&\mu(E^1_{a'_i\kappa\,\epsilon})+\mu(E^0_{
  a_i\kappa\,\epsilon}) +\mu(E^2_{a_i\kappa\,\epsilon})\\&\geq \frac{\mu(E^1_{a'_i\kappa\,\epsilon})+\mu(E^1_{a'_i\epsilon})}{2}+\mu(E^0_{a_i\kappa\,\epsilon}) +\frac{\mu(E^2_{a_i\kappa\,\epsilon}) +\mu(E^2_{a_i\epsilon}) }{2}\\
  &\geq  \mu(E^1_{a'_i\epsilon})+\mu(E^0_{a_i\epsilon}) +\mu(E^2_{a_i\epsilon})+\\&\frac{\mu(E^1_{a'_i\kappa\,\epsilon})+2\mu(E^0_{a_i\kappa\,\epsilon}) +\mu(E^2_{a_i\kappa\,\epsilon})- \mu(E^1_{a'_i\epsilon})-2\,\mu(E^0_{a_i\epsilon}) -\mu(E^2_{a_i\epsilon})}{2}.  
\end{split}
\end{equation}

Then because $\bar{E}'_{a'_i \epsilon}$ and $\bar{E}_{a_i \epsilon}$ satisfy the doubling condition we get 

$$
\mu(g^{-1}(E_{\kappa\,\eps\,}))|_{\bar{E}_{a_i \kappa \epsilon} \cup \bar{E}'_{a'_i \kappa \epsilon}}     \geq ( \mu(g^{-1}(E_\eps\,))|_{\bar{E}_{a_i \epsilon} \cup \bar{E}'_{a'_i \epsilon}}  $$
 
$$+\frac{\sigma}{2}\,(\mu(\bar E_{\eps a_i\,}) +\mu(\bar E'_{\eps a'_i\,}))) \geq (1+\sigma/2)\, \mu(g^{-1}(E_\eps\,))|_{\bar{E}_{a_i \epsilon} \cup \bar{E}'_{a'_i \epsilon}}   ,
$$

Putting both of these scenarios together and summing over $i$, we get

$$\mu(g^{-1}(E_{\kappa\,\eps\,}))>(1+\sigma/2)\, \mu(g^{-1}(E_\eps\,))$$ given that the ${\bar{E}_{a_i \kappa \epsilon} \cup \bar{E}'_{a'_i \kappa \epsilon}}$ are disjoint for every $i$. Thus, for each pair of slabs, the doubling condition holds with $\frac{\sigma}{2}$ instead of $\sigma$ so it holds for $E_\epsilon$ completing the proof.
\end{proof}

\section{Statements and Declarations}

I hereby declare that I have no competing interests or financial interests.

	\clearpage
\section{Appendix A}
\label{Appendix_A}

	\subsection{Proof of Lemma \ref{Lemmaactivationfunction2}. }
	
	Suppose $\mu(P_1) \neq \emptyset$. If either $(1+\sigma)\,\mu(\eps\,\bar S)\geq \delta$ or $(1+\sigma)\,\mu(\eps\,\bar S')\geq\delta$, we are done.

If both $ (1+\sigma)\,\mu(\eps\,\bar S)<\delta$ and $ (1+\sigma)\,\mu(\eps\,\bar S')<\delta$. Then
\[\label{eq2}
\begin{split}
  \mu(g^{-1}(\kappa\,\epsilon\,S))&=\mu(S^1_{\kappa\,\epsilon})+\mu(S^0_{\kappa\,\epsilon}) +\mu(S^2_{\kappa\,\epsilon})\\&\geq \frac{\mu(S^1_{\kappa\,\epsilon})+\mu(S^1_{\epsilon})}{2}+\mu(S^0_{\kappa\,\epsilon}) +\frac{\mu(S^2_{\kappa\,\epsilon}) +\mu(S^2_{\epsilon}) }{2}\\
  &\geq \mu(S^1_{\epsilon})+\mu(S^0_{\epsilon}) +\mu(S^2_{\epsilon})+\frac{\mu(S^1_{\kappa\,\epsilon})+2\mu(S^0_{\kappa\,\epsilon}) +\mu(S^2_{\kappa\,\epsilon})- \mu(S^1_{\epsilon})-2\,\mu(S^0_{\epsilon}) -\mu(S^2_{\epsilon})}{2}.  
\end{split}\]

Here we are using an averaging trick to deal with the complication that we only have one $S_\epsilon^0$ and $S_{\kappa \epsilon}^0$.  
Applying now \eqref{measE+} and \eqref{measE-}, we find that
\begin{equation}
    \mu(g^{-1}(\kappa\,\eps\,S))\geq \mu(g^{-1}(\eps\,S)) +\frac{\sigma}{2}\,(\mu(\eps\,\bar S) +\mu(\eps\,\bar S'))  \geq (1+\sigma/2)\, \mu(g^{-1}(\eps\,S)) 
\end{equation}

proving the doubling condition with $\sigma'=\sigma/2$. Meaning, for  $\epsilon=\ell_S \kappa^i< \beta$ we have

\begin{equation}
   \mu(S_{\kappa\epsilon})> \min\{\ \delta, (1+\frac{\sigma}{2})\mu (S_{\epsilon}) \}\
\end{equation}

The critical point of $\lambda_1$ at $x_1=0$ would then be the only complicated part of $\lambda_1$.

%



\subsection{Proof of Lemma \ref{Upper_Bound_on_Loss}: Upper bound on the loss}

\begin{proof}

\end{proof}

We know the DNN $\phi$ is of the form

\begin{equation}\label{eq:classifier_with_softmax}
\phi(s,\alpha)=\rho\circ X(s,\alpha)=\left(\frac{e^{X_1(s,\alpha)}}{\sum_{k=1}^{K}e^{X_k(s,\alpha)}},\cdots,\frac{e^{X_K}}{\sum_{k=1}^{K}e^{X_k(s,\alpha)}}\right).
\end{equation} Thus, \eqref{eq:classifier_with_softmax} gives
\begin{equation}\label{eq:transforming_softmax}
p_{i(s)}(s,\alpha)=\frac{e^{X_{i(s)}(s,\alpha)}}{\sum_{k=1}^K e^{X_k(s,\alpha)}}=\frac{1}{\sum_{k=1}^K e^{X_k(s,\alpha)-X_{i(s)}(s,\alpha)}}=\frac{1}{1+\sum_{k\neq i(s)} e^{X_k(s,\alpha)-X_{i(s)}(s,\alpha)}}.
\end{equation}
For each $k\neq i(s)$, $X_k(s,\alpha)-X_{i(s)}(s,\alpha)\leq\max_{k\neq i(s)} X_k(s,\alpha)- X_{i(s)}(s,\alpha)=-\delta X(s,\alpha)$. Using \eqref{eq:defdeltaX} and \eqref{eq:transforming_softmax}, we obtain the following estimates on $p_{i(s)}(s,\alpha)$:
\begin{equation}\label{eq:probability_estimates}
\frac{1}{1+(K-1)e^{-\delta X(s,\alpha)}}\leq p_{i(s)}(s,\alpha)\leq\frac{1}{1+e^{-\delta X(s,\alpha)}}.
\end{equation} 
Finally, \eqref{eq:probability_estimates} gives estimates on loss:    
\begin{equation}\label{eq:loss_estimates}
\sum_{s\in T}\mu(s)\log\left(1+e^{-\delta X(s,\alpha)}\right)\leq\bar L(\alpha)\leq\sum_{s\in T}\mu(s)\log\left(1+(K-1)e^{-\delta X(s,\alpha)}\right).
\end{equation}

From
(\ref{eq:loss_estimates})  we get
\begin{equation}
\sum_{s\in G_{\eta^*}^c(t)} \mu(s)\,\log (1+e^{-\eta^*})\leq \bar L(t)\leq \bar L(t_0).
\label{chebyshev}
\end{equation}

    	     \section*{Acknowledgements}
       I am very grateful to my advisor Dr. Berlyand for the formulation of the problem and many useful discussions and suggestions. I am also grateful to Dr. Jabin, Dr. Sodin, Dr. Golovaty, Alex Safsten, Oleksii Krupchytskyi, and Jon Jenkins  
 for their useful discussions and suggestions that led to the improvement of the manuscript. Your help and time are truly appreciated. Finally, I would like to express my sincere gratitude to the
referees who provided detailed feedback and valuable insights, which not only
helped to improve the quality of this work but also deepened my understanding
of the topic.

	\bibliographystyle{unsrt}

 \begin{figure}[h!]
 \caption{Classification confidence for the toy DNN for the Example \ref{example_1} at different stages of training.}
\label{CC_toy_example1}
 
 \begin{subfigure}{0.35\textwidth}
\includegraphics[height=5.5cm,width=7.5cm]
{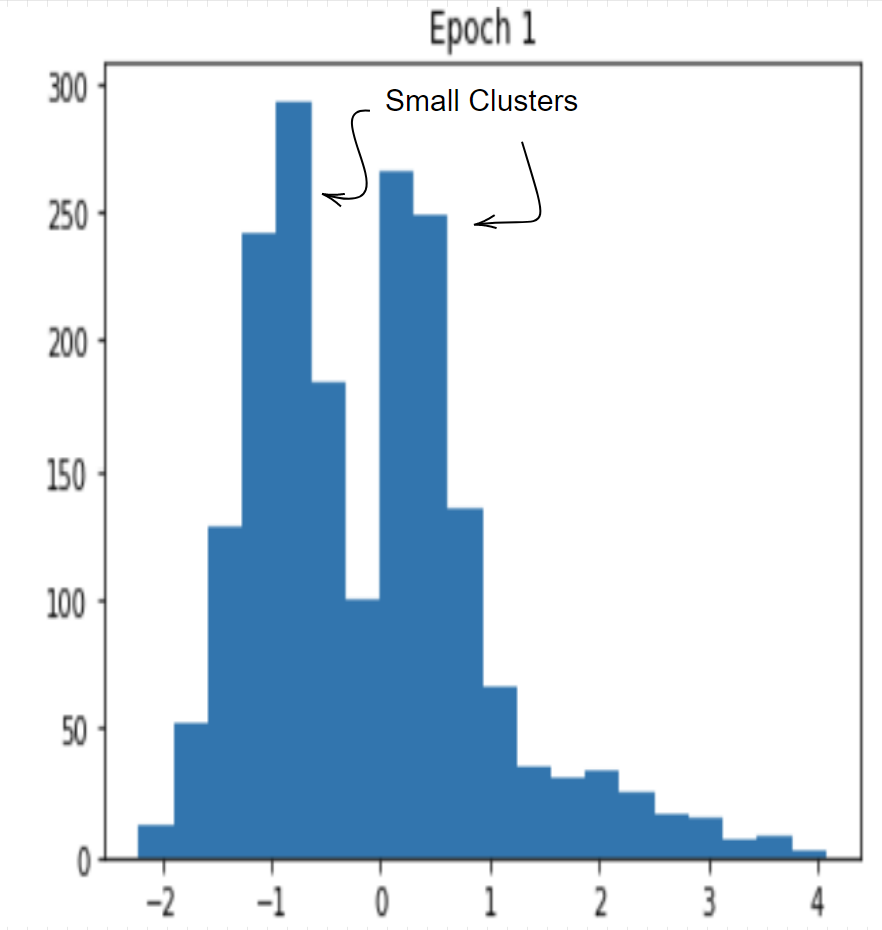} 
\end{subfigure}
\hfill
\begin{subfigure}{0.35\textwidth}
\includegraphics[width=9cm]{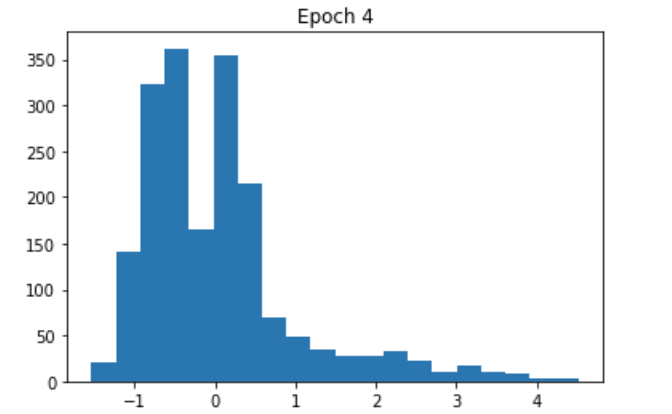} 
\end{subfigure}
\hfill
\begin{subfigure}{0.35\textwidth}
\includegraphics[width=9cm]{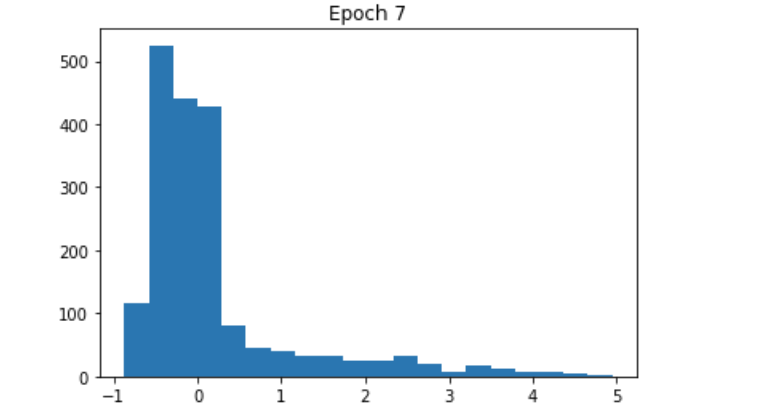} 
\end{subfigure}
\hfill
\begin{subfigure}{0.35\textwidth}
\includegraphics[width=9cm]{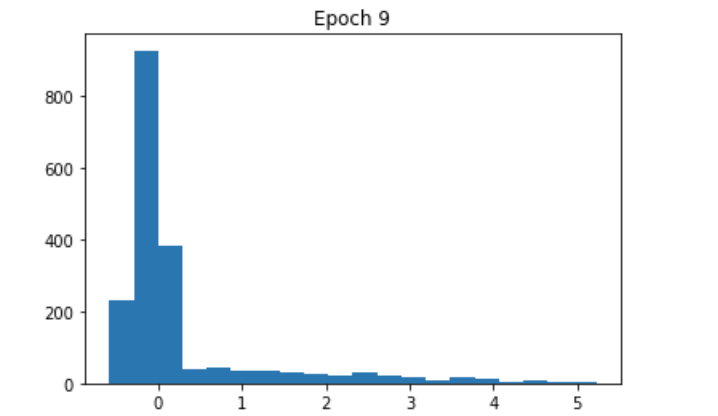}
\end{subfigure}
\hfill
\begin{subfigure}{0.35\textwidth}
\includegraphics[width=9cm]{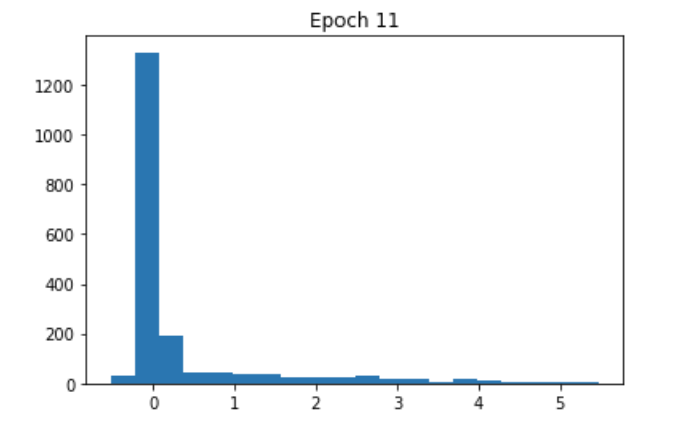} 
\end{subfigure}
\hfill
\begin{subfigure}{0.35\textwidth}
\includegraphics[width=9cm]{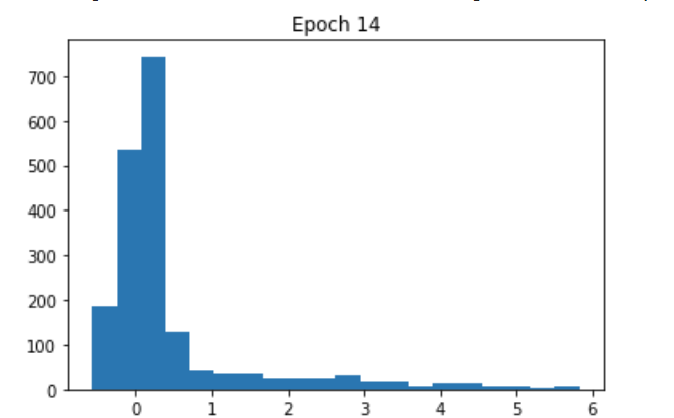} 
\end{subfigure}
\hfill
\begin{subfigure}{0.35\textwidth}
\includegraphics[width=9cm]{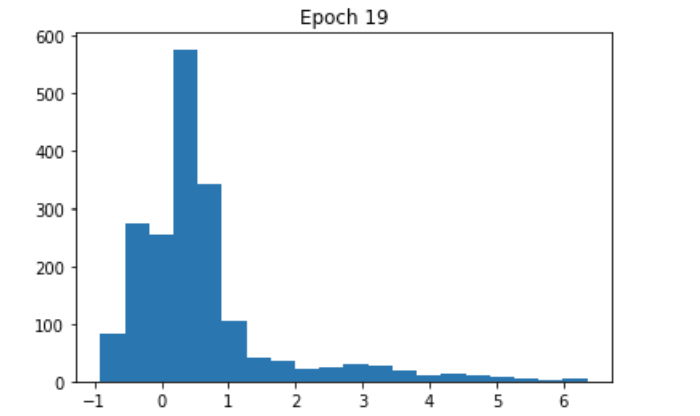} 
\end{subfigure}
\hfill
\begin{subfigure}{0.35\textwidth}
\includegraphics[width=9cm]{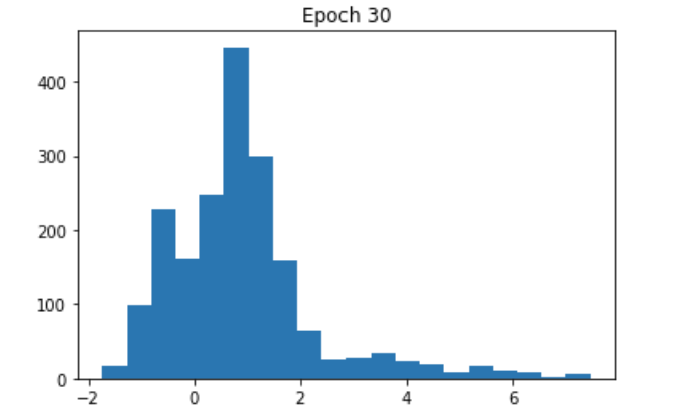} 
\end{subfigure}
\end{figure}

\bibliography{DNNstability_biblio}

\end{document}